\documentclass{article}

\usepackage{graphicx}
\usepackage{subcaption}
\usepackage{booktabs} 

\usepackage{hyperref}

\usepackage[accepted]{icml2026}
\usepackage[final,expansion=false]{microtype}
\usepackage{amsmath}
\usepackage{amssymb}
\usepackage{mathtools}
\usepackage{amsthm}
\usepackage{multirow}
\usepackage{pgfplots}
\pgfplotsset{compat=1.17}
\usepgfplotslibrary{fillbetween}
\usepgfplotslibrary{groupplots}
\usepackage[capitalize,noabbrev]{cleveref}

\theoremstyle{plain}
\newtheorem{theorem}{Theorem}[section]
\newtheorem{proposition}[theorem]{Proposition}
\newtheorem{lemma}[theorem]{Lemma}

\theoremstyle{definition}

\newtheorem{assumption}[theorem]{Assumption}
\theoremstyle{remark}
\newtheorem{remark}[theorem]{Remark}

\usepackage[textsize=tiny]{todonotes}

\newcommand{\Var}{\operatorname{var}}

\newcommand{\RR}{\mathbb{R}}

\newcommand{\St}{\mathrm{St}}

\DeclareMathOperator{\spn}{span}
\newcommand{\R}{\mathbb{R}}
\newcommand{\E}{\mathbb{E}}
\newcommand{\Ip}{I_p}
\newcommand{\Id}{I_d}
\newcommand{\Gr}{\mathrm{Gr}}
\newcommand{\Exu}{\E_{(x,u)}}
\newcommand{\Gxu}{G_{(x,u)}}
\newcommand{\gxu}{\tilde{g}_{(x,u)}}
\newcommand{\Xxu}{\tilde{X}_{(x,u)}}
\newcommand{\bxu}{b_{(x,u)}}
\newcommand{\muxu}{\mu_{(x,u)}}
\newcommand{\Lstar}{L^*_{(x,u)}}

\icmltitlerunning{Riemannian Stochastic Optimization for Sufficient Dimension Reduction}

\begin{document}

\twocolumn[
  \icmltitle{Riemannian Stochastic Optimization for Sufficient Dimension Reduction}



 \icmlsetsymbol{equal}{*}

\begin{icmlauthorlist}
  \icmlauthor{Thibault Pautrel}{equal,l2s}
  \icmlauthor{François Portier}{equal,ensai}
\end{icmlauthorlist}

\icmlaffiliation{l2s}{Laboratoire des Signaux et Systèmes (L2S), CentraleSupélec, Université Paris-Saclay, Gif-sur-Yvette, France}
\icmlaffiliation{ensai}{ENSAI, CREST, Bruz, France}

\icmlcorrespondingauthor{Thibault Pautrel}{thibault.pautrel@centralesupelec.fr}

  \icmlkeywords{Sufficient dimension reduction, Riemannian optimization, Stochastic optimization, Stiefel manifold, Nonparametric regression}

  \vskip 0.3in
]
\printAffiliationsAndNotice{\icmlEqualContribution}




\begin{abstract}
Sufficient dimension reduction (SDR) makes high-dimensional regression
tractable by projecting the covariates onto a low-dimensional subspace
that preserves the conditional mean of the response. Existing
gradient-based estimators either operate in the ambient space and
suffer from the curse of dimensionality, or localize in the reduced
space at a per-outer-iteration cost at least quadratic in the sample
size. We show that minimizers of the population Minimum Average
Variance Estimation (MAVE) risk approximate the same Grassmannian
target as the Outer Product of Gradients (OPG), and recast the
empirical criterion as a smooth maximization on the Stiefel manifold
with closed-form Riemannian gradient. The resulting algorithm, SMAVE, combines sparse projected-space nearest-neighbor localization with Riemannian stochastic gradient ascent. A simplified version comes with almost-sure convergence and a non-asymptotic rate matching the standard non-convex stochastic first-order scaling. Empirically,
SMAVE matches or improves on RMAVE's synthetic subspace recovery at
moderate-to-high ambient dimension, and on four real datasets it
uniformly improves over OPG and is competitive with or outperforms
RMAVE at orders of magnitude lower runtime.
\end{abstract}
\section{Introduction}
\label{sec:intro}

Sufficient dimension reduction (SDR) seeks a low-dimensional linear projection $\mathbf{B}_{\ast}^\top X$ of high-dimensional covariates $X \in \mathbb{R}^p$ that preserves all information relevant for predicting a response $Y$. Unlike unsupervised techniques such as principal component analysis, SDR exploits the supervised structure: the projection captures label-relevant variation rather than overall data variance. When such a projection exists with $d \ll p$, the intractable problem of estimating the regression function $\mathbb{E}[Y \mid X]$ in $\mathbb{R}^p$ reduces to the tractable problem of estimating $\mathbb{E}[Y \mid \mathbf{B}_{\ast}^\top X]$ in $\mathbb{R}^d$. Beyond computational savings, this projection provides an interpretable, low-dimensional representation of the covariates for the supervised learning problem.

This goal places SDR within a broader family of supervised representation learning methods. Metric learning provides a concrete example: learning a Mahalanobis distance $d_M(x,x') = \|x - x'\|_M$ adapted to prediction tasks \citep{ghojogh2022spectral,kulis2013metric} reduces to learning a linear projection when the metric matrix $M$ has low rank. If $\mathrm{rank}(M) = d \ll p$, then $M = \mathbf{B}\mathbf{B}^\top$ for some $\mathbf{B} \in \mathbb{R}^{p \times d}$, and $d_M(x, x') = \|\mathbf{B}^\top(x - x')\|_2$, which is precisely the Euclidean distance in the projected space that SDR methods seek to identify.

Two broad families of methods have emerged for estimating the subspace spanned by $\mathbf{B}_{\ast}$ from $n$ independent samples, each navigating a fundamental tension between statistical flexibility and computational cost. Inverse regression approaches (including Sliced Inverse Regression \citep[SIR;][]{li1991sliced}, Sliced Average Variance Estimation \citep[SAVE;][]{cook1991sliced}, and Directional Regression \citep[DR;][]{LiWang2007DR}) exploit moment conditions on $\mathbb{E}[X \mid Y]$ to achieve $\sqrt{n}$-consistency with low computational cost, but require linearity or constant variance conditions that often fail in practice. Gradient-based methods offer greater flexibility by estimating the subspace directly from local derivative information, avoiding these distributional assumptions. The Average Derivative Estimator \citep{HardleStoker1989ADE} and Outer Product of Gradients \citep[OPG;][]{xia2002adaptive} construct subspace estimates from pointwise gradient estimates, while Minimum Average Variance Estimation \citep[MAVE;][]{xia2002adaptive} formulates the problem as local polynomial regression. However, this flexibility comes at a computational price: the kernel bandwidth scale causes neighborhoods to become exponentially sparse when $p$ is large. Refined variants like RMAVE \citep{xia2002adaptive} and structure-adaptive OPG \citep{HristacheEtAl2001MIM,JMLR:v9:dalalyan08a} address this by localizing in the projected space $\mathbb{R}^d$, but still incur a cost at least quadratic in $n$ per refinement step from dense pairwise kernel weights. A separate line of work, initiated by \citet{FukumizuBachJordan2009KDR}, characterizes dimension reduction subspaces through conditional independence in reproducing kernel Hilbert spaces \citep{wu2019solving,pmlr-v202-park23a}, offering a distinct perspective that we do not pursue here.

 Since the target is a subspace rather than a specific matrix, the natural parameter space is the Grassmann manifold $\mathrm{Gr}(p,d)$ of $d$-dimensional linear subspaces of $\mathbb{R}^p$. For computation, we represent each subspace by a matrix with orthonormal columns, optimizing over the Stiefel manifold $\mathrm{St}(p,d) = \{\mathbf{B} \in \mathbb{R}^{p \times d} : \mathbf{B}^\top\mathbf{B} = \mathbf{I}_d\}$. Riemannian optimization on these matrix manifolds is well-established in numerical linear algebra \citep{edelman1998geometry,absil2008optimization} and increasingly adopted in machine learning, from decentralized training with orthogonality constraints \citep{chen2021decentralized} to parameter-efficient fine-tuning of large language models \citep{park2025riemannian}. We bring these tools to sufficient dimension reduction, demonstrating that scalable Riemannian stochastic gradient methods can unlock both statistical and computational gains for foundational problems in nonparametric statistics. 

We propose \textbf{SMAVE} (Stochastic MAVE), a scalable algorithm that combines Riemannian optimization with adaptive localization in the projected space to achieve competitive or sharper subspace recovery than RMAVE at a fraction of the computational cost.

\paragraph{Contributions.}
\begin{enumerate}
    \item \textbf{Theoretical characterization.} We establish that 
    MAVE minimizers approximate the span of the regression function's 
    gradients (the same Grassmannian target as OPG) but recovered 
    through projected-space local regression rather than ambient-space 
    gradient aggregation (Propositions~\ref{lem:optim_local}--\ref{prop:global_bound}). 
    This unifies two major gradient-based SDR families under a common 
    target, and reveals that the MAVE objective is intrinsically a 
    function on the Grassmannian, motivating a Riemannian treatment.

    \item \textbf{Riemannian reformulation.} Building on the above, 
    we reformulate MAVE as maximizing a smooth, $\mathcal{O}(d)$-invariant 
    objective on $\mathrm{St}(p,d)$ (Proposition~\ref{prop:stiefel_equivalence}) 
    and derive its Riemannian gradient in closed form 
    (Proposition~\ref{prop:full_gradient}). 

    \item \textbf{Scalable algorithm with convergence guarantees.} 
    SMAVE combines mini-batch Riemannian stochastic gradient steps with sparse $k$-NN weights 
    in the projected space, periodically refreshed as the subspace 
    estimate evolves. This reduces per-iteration cost from 
    the quadratic-in-$n$ scaling of RMAVE's outer iterations to a linear-in-$n$ scaling of SMAVE. 
    For a simplified version, 
    we prove almost-sure convergence to a critical point 
    (Proposition~\ref{thm:asym}) and an $O(1/\sqrt{mT})$ 
    non-asymptotic rate (Theorem~\ref{thm:non-asym}), matching the 
    optimal rate for non-convex stochastic first-order methods.
\item \textbf{Empirical validation.} On synthetic benchmarks, SMAVE
    reduces subspace estimation error
    over RMAVE while running one to two orders of magnitude faster, with
    the gap widening as the ambient dimension grows. On four real
    regression datasets spanning $p \in [11, 128]$, SMAVE uniformly
    outperforms OPG, is broadly competitive with RMAVE at low and moderate
    $p$ at a fraction of its runtime, and significantly outperforms RMAVE
    at $p = 128$.
     
\end{enumerate}
\paragraph{Organization.}
Section~\ref{sec:problem} formalizes the SDR problem and introduces our
local approximation framework.
Section~\ref{sec:related} reviews related work, including gradient-based
SDR methods and Riemannian optimization.
Section~\ref{sec:algorithm} presents the SMAVE algorithm and establishes
its convergence guarantees.
Section~\ref{sec:experiments} reports experiments on synthetic and real
data. Proofs and additional results appear in the appendix.
\section{From Sufficient Dimension Reduction to Local Regression}
\label{sec:problem}

\subsection{Sufficient Dimension Reduction}
Let $(Y, X)$ be a random vector with $Y \in \mathbb{R}$ and $X \in \mathbb{R}^p$, and let $g(x) := \mathbb{E}[Y \mid X = x]$ denote the regression function. When $p$ is large, estimating $g$ directly suffers from the curse of dimensionality. The goal of sufficient dimension reduction is to find a matrix $\mathbf{B}_{\ast} \in \mathbb{R}^{p \times d}$ with $d \ll p$ such that
\begin{align}\label{eq_def}
g(X) = \mathbb{E}[Y \mid \mathbf{B}_{\ast}^\top X].
\end{align}
When such a projection exists, the intractable problem of estimating $g$ in $\mathbb{R}^p$ reduces to estimation in the low-dimensional space $\mathbb{R}^d$.

\begin{remark}[Central mean subspace]\label{rem:cms}
Condition \eqref{eq_def} defines the \emph{central mean subspace}, weaker than the classical \emph{central subspace} requiring $Y \perp X \mid \mathbf{B}_{\ast}^\top X$. The central mean subspace suffices for regression and is the natural target of gradient-based methods \citep{xia2002adaptive}.
\end{remark}

Condition \eqref{eq_def} exhibits a key invariance: it holds for $\mathbf{B}_{\ast}$ if and only if it holds for $\mathbf{B}_{\ast}\mathbf{A}$, for any invertible $\mathbf{A} \in \mathrm{GL}_d(\mathbb{R})$, since $\mathbf{B}_{\ast}^\top X$ and $\mathbf{A}^\top \mathbf{B}_{\ast}^\top X$ generate the same $\sigma$-algebra. The natural parameter space is therefore the Grassmann manifold
\[
\mathrm{Gr}(p,d) = \bigl\{ \mathrm{span}(\mathbf{B}) : \mathbf{B} \in \mathbb{R}^{p \times d},\; \mathrm{rank}(\mathbf{B}) = d \bigr\},
\]
consisting of all $d$-dimensional linear subspaces of $\mathbb{R}^p$.

For computation, we represent each subspace by a matrix with orthonormal columns. The Stiefel manifold $\mathrm{St}(p,d)$ comprises all $p \times d$ matrices satisfying $\mathbf{B}^\top \mathbf{B} = \mathbf{I}_d$. This representation is not unique: $\mathbf{B}, \mathbf{B}' \in \mathrm{St}(p,d)$ span the same subspace if and only if $\mathbf{B}' = \mathbf{B}\mathbf{Q}$ for some $\mathbf{Q} \in \mathcal{O}(d)$, yielding the quotient identification $\mathrm{Gr}(p,d) \cong \mathrm{St}(p,d) / \mathcal{O}(d)$. Optimizing over $\mathrm{St}(p,d)$ simplifies implementation, provided the objective is $\mathcal{O}(d)$-invariant, a property we verify below. The Riemannian optimization machinery (tangent spaces, gradients, and retractions on the 
Stiefel manifold) is detailed in Appendix~\ref{app:riemannian}.

The natural approach to finding $\mathbf{B}_{\ast}$ is to minimize the prediction error on the Stiefel manifold:
\[
\min_{\mathbf{B} \in \St(p,d)} \bigl\{ D(\mathbf{B}) := \mathbb{E}[(Y - \mathbb{E}[Y \mid \mathbf{B}^\top X])^2] \bigr\}.
\]
Since $D(\mathbf{B}) = \sigma^2 + \mathbb{E}[(g(X) - \mathbb{E}[Y \mid \mathbf{B}^\top X])^2]$, where $\sigma^2 = \mathbb{E}[\mathrm{Var}(Y \mid X)]$, minimizing $D(\mathbf{B})$ amounts to approximating $g(X)$ by the projected regression function $\mathbb{E}[Y \mid \mathbf{B}^\top X]$ in $L_2(\mathbb{P})$.
\subsection{Local Approximation Analysis}
Direct optimization is intractable since $\mathbf{B}$ appears inside the conditioning event. The MAVE approach \citep{xia2002adaptive} introduces a local surrogate: for $x \in \mathbb{R}^p$ and $u > 0$, define
\begin{equation}\label{eq:local_objective_main}
D_{(x,u)}(\mathbf{B}) := \min_{a \in \mathbb{R},\, \mathbf{b} \in \mathbb{R}^d} \mathbb{E}_{(x,u)}\bigl[(Y - a - \mathbf{b}^\top \mathbf{B}^\top (X - x))^2\bigr],
\end{equation}
where $\mathbb{E}_{(x,u)}[\cdot]$ denotes expectation conditional on $X \in B(x,u) = \{y \in \mathbb{R}^p : \|y - x\| \leq u\}$. When $u$ is small, the optimal $a$ approximates $g(x)$ and $\mathbf{B}\mathbf{b}$ approximates the projection of $\nabla g(x)$ onto $\mathrm{span}(\mathbf{B})$.

Let $P_{\mathbf{B}} = \mathbf{B}\mathbf{B}^\top$ and $G_{(x,u)} = \mathrm{Cov}_{(x,u)}(X)$ denote the orthogonal projection and local covariance, respectively.

\begin{assumption}[Smoothness]\label{ass:smooth_main}
There exists $L > 0, u_0>0$ such that, for $\mathbb{P}$-almost every $x$ and all $u \in (0, u_0]$,
\begin{equation}\label{eq:L_smooth}
\mathbb{E}_{(x,u)}\bigl[|g(X) - g(x) - \nabla g(x)^\top (X - x)|^2\bigr] \leq \frac{L^2}{4} u^4.
\end{equation}
\end{assumption}

\begin{assumption}[Local covariance lower bound]\label{ass:cov_main}
There exists $\lambda \in (0,1], u_0>0$ such that, for $\mathbb{P}$-almost every $x$ and all $u \in (0, u_0]$,
\begin{equation}\label{eq:inverse}
G_{(x,u)} \succeq \lambda u^2 I_p.
\end{equation}
\end{assumption}

Assumption~\ref{ass:smooth_main} is satisfied whenever $g$ has bounded
second derivatives, while Assumption~\ref{ass:cov_main} holds whenever $X$
admits a density bounded away from zero on its support; the constraint
$\lambda \leq 1$ is necessary and discussed in
Appendix~\ref{app:assumptions}.

\begin{proposition}[Local scale separation]\label{lem:optim_local}
Under Assumptions~\ref{ass:smooth_main}--\ref{ass:cov_main}, for all $u \in (0, u_0]$:
\begin{enumerate}
    \item[(i)] If $u \leq \frac{\lambda}{2L} \|(I_p - P_{\mathbf{B}})\nabla g(x)\|_2$, then
    \begin{equation*}
    D_{(x,u)}(\mathbf{B}) \geq \mathbb{E}_{(x,u)}[\sigma^2(X)] + \frac{\lambda u^2}{2} \|(I_p - P_{\mathbf{B}})\nabla g(x)\|_2^2.
    \end{equation*}
    \item[(ii)] If $\nabla g(x) \in \mathrm{span}(\mathbf{B})$, then
    \begin{equation*}
    D_{(x,u)}(\mathbf{B}) \leq \mathbb{E}_{(x,u)}[\sigma^2(X)] + \frac{L^2 u^4}{4}.
    \end{equation*}
\end{enumerate}
\end{proposition}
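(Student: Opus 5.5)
Both parts rest on the decomposition $Y=g(X)+\varepsilon$ with $\mathbb{E}[\varepsilon\mid X]=0$ and $\sigma^2(X)=\mathrm{Var}(Y\mid X)$. Since $a+\mathbf{b}^\top\mathbf{B}^\top(X-x)$ is a function of $X$, the cross term vanishes under $\mathbb{E}_{(x,u)}$ (conditioning on $X\in B(x,u)$ is conditioning on an event measurable in $X$). Hence, for every $(a,\mathbf{b})$,
\[
\mathbb{E}_{(x,u)}\bigl[(Y-a-\mathbf{b}^\top\mathbf{B}^\top(X-x))^2\bigr]=\mathbb{E}_{(x,u)}[\sigma^2(X)]+\mathbb{E}_{(x,u)}\bigl[(g(X)-a-\mathbf{b}^\top\mathbf{B}^\top(X-x))^2\bigr].
\]
It therefore suffices to bound the second term. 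Write $r(X)=g(X)-g(x)-\nabla g(x)^\top(X-x)$, so that Assumption~\ref{ass:smooth_main} reads $\mathbb{E}_{(x,u)}[r^2]\le L^2u^4/4$.

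\textbf{Part (ii).} Since $\nabla g(x)\in\mathrm{span}(\mathbf{B})$, we have $\nabla g(x)=\mathbf{B}\mathbf{B}^\top\nabla g(x)$. Choose $a=g(x)$ and $\mathbf{b}=\mathbf{B}^\top\nabla g(x)$. The residual inside the second term is then exactly $r(X)$, and Assumption~\ref{ass:smooth_main} gives the bound. Taking the minimum over $(a,\mathbf{b})$ only lowers $D_{(x,u)}(\mathbf{B})$, which proves (ii).

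\textbf{Part (i)} is the main step. Set $v=(I_p-P_{\mathbf{B}})\nabla g(x)$ and $Z=X-x$. For arbitrary $(a,\mathbf{b})$, write
\[
g(X)-a-\mathbf{b}^\top\mathbf{B}^\top Z = h(X)+r(X),\qquad h(X)=c+w^\top Z+v^\top Z,
\]
with $c=g(x)-a$ and $w=P_{\mathbf{B}}\nabla g(x)-\mathbf{B}\mathbf{b}\in\mathrm{span}(\mathbf{B})$.

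First, lower bound $\mathbb{E}_{(x,u)}[h^2]$ uniformly in $c$ and $w$. By the variance inequality, $\mathbb{E}_{(x,u)}[h^2]\ge\mathrm{Var}_{(x,u)}\bigl((w+v)^\top Z\bigr)=(w+v)^\top G_{(x,u)}(w+v)$. Assumption~\ref{ass:cov_main} then gives at least $\lambda u^2\|w+v\|^2$. Since $w\perp v$, this is at least $\lambda u^2\|v\|^2$.

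Next, use the inequality $\mathbb{E}[(h+r)^2]\ge\bigl(\sqrt{\mathbb{E}h^2}-\sqrt{\mathbb{E}r^2}\bigr)^2$, valid whenever $\sqrt{\mathbb{E}h^2}\ge\sqrt{\mathbb{E}r^2}$ (Minkowski). This gives
\[
\mathbb{E}_{(x,u)}[(h+r)^2]\ge\bigl(\sqrt{\lambda}\,u\|v\|-Lu^2/2\bigr)^2 .
\]
The hypothesis $u\le\frac{\lambda}{2L}\|v\|$ yields $Lu^2/2\le \lambda u\|v\|/4$. Since $\lambda\le1$, we have $\lambda\le\sqrt\lambda$, so $Lu^2/2\le\sqrt\lambda\,u\|v\|/4$ and the Minkowski condition holds. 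The bound becomes $(3/4)^2\lambda u^2\|v\|^2=\tfrac{9}{16}\lambda u^2\|v\|^2\ge\tfrac12\lambda u^2\|v\|^2$.

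The delicate point is constant bookkeeping. The threshold $\lambda/(2L)$ together with $\lambda\le1$ must leave enough slack for the factor $1/2$, and the computation above shows it does. As a fallback, one can use $(h+r)^2\ge(1-\epsilon)h^2-(1/\epsilon-1)r^2$ with a tuned $\epsilon$. Minimizing over $(a,\mathbf{b})$ preserves the bound, since it holds uniformly.
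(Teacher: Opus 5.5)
Your proof is correct. Part (ii) is the paper's argument verbatim; for part (i) you take a leaner route.

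\textbf{The paper's route.} It profiles out the intercept by centering. It then solves the normal equations for the optimal slope $b_{(x,u)}$, checking invertibility of $\mathbf{B}^\top G_{(x,u)}\mathbf{B}$ via Assumption~\ref{ass:cov_main}. Next it expands the square of $L^*_{(x,u)}+\eta^\top\tilde{X}_{(x,u)}$ with $\eta=\nabla g(x)-\mathbf{B}b_{(x,u)}$, and drops $\mathbb{E}_{(x,u)}[(L^*_{(x,u)})^2]$. Finally it controls the cross term by Cauchy--Schwarz together with the crude ball estimate $\mathbb{E}_{(x,u)}[(\eta^\top\tilde{X}_{(x,u)})^2]\le u^2\|\eta\|_2^2$. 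This yields $\lambda u^2\|\eta\|_2^2-Lu^3\|\eta\|_2$.

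\textbf{Your route.} You bound the risk uniformly over all $(a,\mathbf{b})$. You absorb the intercept through $\mathbb{E}[h^2]\ge\mathrm{Var}(h)$, and you replace expand-and-Cauchy--Schwarz by the reverse triangle inequality in $L^2$. This keeps $\sqrt{\mathbb{E}_{(x,u)}[h^2]}\ge\sqrt{\lambda}\,u\|v\|_2$ rather than bounding the linear term through the ball radius.

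\textbf{What each buys.} Your version makes the explicit minimizer unnecessary. In fact the paper's Steps 3--7 hold for any slope, so its closed form is never really used. Your version is also slightly sharper: under the stated hypothesis you get the constant $\tfrac{9}{16}\lambda$ instead of $\tfrac12\lambda$. Moreover, if you do not spend $\lambda\le 1$ to meet the stated hypothesis, the same computation gives the $\tfrac{\lambda u^2}{2}$ bound as soon as $\|(I_p-P_{\mathbf{B}})\nabla g(x)\|_2\ge\frac{(2+\sqrt{2})Lu}{2\sqrt{\lambda}}$. Fed into the proof of Proposition~\ref{prop:global_bound}, this would improve its $\lambda^{-2}$ dependence to $\lambda^{-1}$. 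In exchange, the paper's version exhibits the population local slope explicitly, the counterpart of the profiled coefficients in Lemma~\ref{lem:local_regression}.
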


The proposition establishes a \emph{scale separation}: subspaces containing
$\nabla g(x)$ incur excess risk of order $u^4$, whereas subspaces with
$\|(I_p - P_{\mathbf{B}})\nabla g(x)\|_2 > 0$ incur excess risk of order
$u^2$ as $u \to 0$. Integrating over $x$ yields a global characterization.
Define $D_u(\mathbf{B}) := \int D_{(x,u)}(\mathbf{B}) \, \mathbb{P}(\mathrm{d}x)$
and the gradient outer product matrix
$\mathcal{M} := \mathbb{E}[\nabla g(X) \nabla g(X)^\top]$.

\begin{proposition}[Global gradient alignment]\label{prop:global_bound}
Under Assumptions~\ref{ass:smooth_main}--\ref{ass:cov_main}, any minimizer $\tilde{\mathbf{B}}$ of $D_u$
satisfies
\[
E(\tilde{\mathbf{B}}, \mathcal{M}) := \mathrm{tr}\bigl((I_p - P_{\tilde{\mathbf{B}}}) \mathcal{M}\bigr) \leq \frac{9L^2}{2\lambda^2} u^2.
\]
\end{proposition}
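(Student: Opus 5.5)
The plan is to compare the minimizer $\tilde{\mathbf{B}}$ with the true central mean subspace basis $\mathbf{B}_\ast$ (orthonormalized), and then to integrate the two sides of the local scale separation of Proposition~\ref{lem:optim_local} over $x$. The key structural fact I will use is that \eqref{eq_def} gives $g(x)=h(\mathbf{B}_\ast^\top x)$, so $\nabla g(x)=\mathbf{B}_\ast\nabla h(\mathbf{B}_\ast^\top x)\in\mathrm{span}(\mathbf{B}_\ast)$ for ($\mathbb{P}$-a.e.) $x$. This assumes enough regularity for $\nabla g$ to exist, which is implicit in Assumption~\ref{ass:smooth_main}. Write $v(x):=\|(I_p-P_{\tilde{\mathbf{B}}})\nabla g(x)\|_2$, so that $E(\tilde{\mathbf{B}},\mathcal{M})=\mathbb{E}[v(X)^2]$, and let $S_u:=\int \mathbb{E}_{(x,u)}[\sigma^2(X)]\,\mathbb{P}(\mathrm{d}x)$ denote the integrated noise level.

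\emph{Step 1 (upper bound).} By part (ii) of Proposition~\ref{lem:optim_local} applied pointwise with $\mathbf{B}=\mathbf{B}_\ast$ and integrated, $D_u(\mathbf{B}_\ast)\le S_u+L^2u^4/4$. Minimality of $\tilde{\mathbf{B}}$ then gives $D_u(\tilde{\mathbf{B}})\le S_u+L^2u^4/4$. Because $S_u$ does not depend on $\mathbf{B}$, this says the integrated excess $\Delta(x):=D_{(x,u)}(\tilde{\mathbf{B}})-\mathbb{E}_{(x,u)}[\sigma^2(X)]$ satisfies $\int\Delta\,\mathrm{d}\mathbb{P}\le L^2u^4/4$.

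\emph{Step 2 (pointwise nonnegativity).} I also need $\Delta(x)\ge 0$ for every $x$. Write $Y-a-\mathbf{b}^\top\mathbf{B}^\top(X-x)=(Y-g(X))+(g(X)-a-\mathbf{b}^\top\mathbf{B}^\top(X-x))$. The cross term has zero conditional mean given $X$, and the event $X\in B(x,u)$ is $X$-measurable, so the cross term vanishes under $\mathbb{E}_{(x,u)}$. Hence $D_{(x,u)}(\mathbf{B})\ge\mathbb{E}_{(x,u)}[\sigma^2(X)]$.

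\emph{Step 3 (splitting).} Split $\mathbb{R}^p$ into $A=\{x: u\le \tfrac{\lambda}{2L}v(x)\}$ and its complement.
\begin{itemize}
\item On $A$, part (i) of Proposition~\ref{lem:optim_local} gives $v(x)^2\le \frac{2}{\lambda u^2}\Delta(x)$.
\item On $A^c$, directly $v(x)^2<\frac{4L^2u^2}{\lambda^2}$.
\end{itemize}
Using $\Delta\ge0$ to extend the integral over $A$ to all of $\mathbb{R}^p$, we get
\[
\mathbb{E}[v(X)^2]\le \frac{2}{\lambda u^2}\cdot\frac{L^2u^4}{4}+\frac{4L^2u^2}{\lambda^2}=\frac{L^2u^2}{2\lambda}+\frac{4L^2u^2}{\lambda^2}\le\frac{9L^2u^2}{2\lambda^2},
\]
where the last step uses $\lambda\le1$. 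This is exactly the claimed constant.

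The main obstacle is not the arithmetic but the justification in Steps 1 and 2. First, the noise term $S_u$ must be common to both $\tilde{\mathbf{B}}$ and $\mathbf{B}_\ast$ and must cancel; this requires the pointwise orthogonal decomposition of Step 2, which is also what gives $\Delta\ge0$ on $A^c$. Second, one must ensure $\nabla g(x)\in\mathrm{span}(\mathbf{B}_\ast)$ almost everywhere, so that part (ii) of Proposition~\ref{lem:optim_local} applies for a.e. $x$. Measurability of $A$ and of the map $x\mapsto D_{(x,u)}$ I would take as routine.
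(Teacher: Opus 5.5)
Your proposal is correct and matches the paper's proof step for step. It compares with $\mathbf{B}_\ast$ via Proposition~\ref{lem:optim_local}(ii), splits on the same set $A=\{x:\|(I_p-P_{\tilde{\mathbf{B}}})\nabla g(x)\|_2\ge 2Lu/\lambda\}$, applies part (i) on $A$ and the noise lower bound $D_{(x,u)}\ge\mathbb{E}_{(x,u)}[\sigma^2(X)]$ on $A^c$, and closes with $\lambda\le 1$ to get the constant $9/2$. Your justifications of $\nabla g(x)\in\mathrm{span}(\mathbf{B}_\ast)$ and of the nonnegativity of the excess only spell out what the paper asserts in passing.
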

\paragraph{Geometric interpretation.}
The discrepancy $E(\mathbf{B}, \mathcal{M})$ measures the $\mathcal{M}$-weighted projection error on the Grassmannian $\mathrm{Gr}(p,d)$ of $d$-dimensional subspaces of $\mathbb{R}^p$. Writing $\mathcal{M} = \sum_{i} \mu_i v_i v_i^\top$ in its eigendecomposition, we have $E(\mathbf{B}, \mathcal{M}) = \sum_{i} \mu_i \sin^2\theta_i$, where $\theta_i$ is the angle between $v_i$ and $\mathrm{span}(\mathbf{B})$. Minimizing $E(\mathbf{B}, \mathcal{M})$ thus seeks subspaces aligned with the dominant eigenvectors of $\mathcal{M}$. The matrix $\mathcal{M}$ is precisely the population objective of OPG; Proposition~\ref{prop:global_bound} shows that MAVE targets the same subspace (i.e. the top eigenspace of $\mathcal{M}$) through local regression rather than gradient aggregation.
\begin{remark}\label{rem:geometric}
The localization can be relaxed from balls to cylinders $B_{\mathbf{A}}(x,u) = \{y : \|\mathbf{A}^\top(y - x)\| \leq u\}$ for any $\mathbf{A}$ with $\mathrm{span}(\mathbf{B}) \subseteq \mathrm{span}(\mathbf{A})$; localizing in the reduced space ($\mathbf{A} = \mathbf{B}$) avoids the curse of dimensionality.
\end{remark}

\subsection{Empirical Optimization Problem}

Let $(X_i, Y_i)_{i=1,\ldots,n} \subset \mathbb{R}^p \times \mathbb{R}$ be i.i.d.\ random variables with common distribution $\mathbb{P}$. We estimate the local expectation $\mathbb{E}_{(x,u)}[\cdot]$ using the $k$-NN estimator: for a point $x$, we average over the set $\mathcal{N}_k(x)$ of the $k$ nearest neighbors to $x$ among $(X_1, \ldots, X_n)$ with respect to the Euclidean norm.

Substituting into the integrated objective yields the empirical MAVE criterion: denote the local linear prediction residual by
\[
r_{ij}(a, \mathbf{b}, \mathbf{B}) := Y_i - a - \mathbf{b}^\top \mathbf{B}^\top (X_i - X_j).
\]
The empirical MAVE criterion is then
\begin{equation}
\label{eq:mave_criterion}
\min_{\mathbf{B} \in \mathrm{St}(p,d)} \Big\{ D_n(\mathbf{B}) = \sum_{j=1}^{n} \sum_{i=1}^{n} r_{ij}(a_j,\mathbf{b}_j,\mathbf{B})^2 \, w_{ij} \Big\}.
\end{equation}
where $w_{ij} = k^{-1} \mathbf{1}_{i \in \mathcal{N}_k(X_j)}$ are weights selecting neighbors of $X_j$, and for each $j$, $(a_j, \mathbf{b}_j)$ are local linear regression coefficients:
\[
(a_j, \mathbf{b}_j) = \arg\min_{a,\, \mathbf{b}} \sum_{i=1}^{n} r_{ij}(a, \mathbf{b}, \mathbf{B})^2 \, w_{ij},
\]
Different weight schemes may be used (e.g., Gaussian kernel weights). Moreover, following the cylinder relaxation of Remark~\ref{rem:geometric}, we may use $\mathcal{N}_{k,\mathbf{A}}(X_j)$ (the $k$-NN under the semi-norm $\|x\|_{\mathbf{A}} = \|\mathbf{A}^\top x\|_2$) for any $\mathbf{A}$ with $\mathrm{span}(\mathbf{B}) \subseteq \mathrm{span}(\mathbf{A})$. This leads to the adaptive procedure in Section~\ref{sec:algorithm}.

We now reformulate \eqref{eq:mave_criterion} as a maximization problem amenable to Riemannian optimization. Define the local moment vector and Gram matrix
\begin{align*}
\mu^{(j)} &= \sum_{i=1}^{n} w_{ij} \tilde{X}_i^{(j)} \tilde{Y}_i^{(j)} \in \mathbb{R}^p, \\
G^{(j)} &= \sum_{i=1}^{n} w_{ij} \tilde{X}_i^{(j)} (\tilde{X}_i^{(j)})^\top \in \mathbb{R}^{p \times p},
\end{align*}
where $\tilde{Y}_i^{(j)} = Y_i - \bar{Y}^{(j)}$, $\tilde{X}_i^{(j)} = X_i - \bar{X}^{(j)}$, and $\bar{Y}^{(j)}, \bar{X}^{(j)}$ are the weighted local means.

\begin{proposition}[Stiefel reformulation]
\label{prop:stiefel_equivalence}
Assume $\mathbf{B}^\top G^{(j)}\mathbf{B}$ is invertible for every
$j\in\{1,\dots,n\}$ and every $\mathbf{B}\in\mathrm{St}(p,d)$.
Then minimizing \eqref{eq:mave_criterion} over $\mathbf{B} \in \mathrm{St}(p,d)$
is equivalent to maximizing
\begin{equation}
\label{eq:objective_F}
F(\mathbf{B}) := \sum_{j=1}^{n} (\mu^{(j)})^\top \mathbf{B} \bigl( \mathbf{B}^\top G^{(j)} \mathbf{B} \bigr)^{-1} \mathbf{B}^\top \mu^{(j)}.
\end{equation}
Moreover, $F(\mathbf{B}\mathbf{Q}) = F(\mathbf{B})$ for any orthogonal $\mathbf{Q} \in \mathcal{O}(d)$, so $F$ descends to a well-defined function on the Grassmannian $\mathrm{Gr}(p,d)$.
\end{proposition}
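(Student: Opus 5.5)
The plan is to solve the inner weighted least-squares problems in closed form for fixed $\mathbf{B}$, substitute back, and read off $D_n(\mathbf{B}) = C - F(\mathbf{B})$ with $C$ independent of $\mathbf{B}$.

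\textbf{Step 1: eliminate the intercepts.} Fix $\mathbf{B}$ and $j$, and let $s_j = \sum_i w_{ij}$ (equal to $1$ for $k$-NN weights). Setting the derivative in $a$ to zero gives
\[
a_j = \bar{Y}^{(j)} - \mathbf{b}^\top \mathbf{B}^\top(\bar{X}^{(j)} - X_j),
\]
where the bars denote weighted local means. Substituting this back turns the residual into
\[
r_{ij} = \tilde{Y}_i^{(j)} - \mathbf{b}^\top \mathbf{B}^\top \tilde{X}_i^{(j)},
\]
so the centering at $X_j$ disappears.

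\textbf{Step 2: solve for the slopes.} The inner criterion is now the quadratic
\[
\sum_i w_{ij}(\tilde{Y}_i^{(j)})^2 - 2\,\mathbf{b}^\top \mathbf{B}^\top \mu^{(j)} + \mathbf{b}^\top \mathbf{B}^\top G^{(j)} \mathbf{B}\,\mathbf{b}.
\]
Because $\mathbf{B}^\top G^{(j)}\mathbf{B}$ is positive semidefinite and invertible by hypothesis, it is positive definite. Hence the unique minimizer is
\[
\mathbf{b}_j = (\mathbf{B}^\top G^{(j)}\mathbf{B})^{-1}\mathbf{B}^\top\mu^{(j)},
\]
and the minimum value is
\[
V_j - (\mu^{(j)})^\top \mathbf{B}(\mathbf{B}^\top G^{(j)}\mathbf{B})^{-1}\mathbf{B}^\top\mu^{(j)}, \qquad V_j = \sum_i w_{ij}(\tilde{Y}_i^{(j)})^2.
\]
The weights $w_{ij}$ do not depend on $\mathbf{B}$ in \eqref{eq:mave_criterion}, since they are Euclidean $k$-NN weights. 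Therefore $\mu^{(j)}$, $G^{(j)}$ and $V_j$ do not depend on $\mathbf{B}$ either.

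\textbf{Step 3: sum and conclude.} Summing over $j$ gives
\[
D_n(\mathbf{B}) = \sum_j V_j - F(\mathbf{B}).
\]
So minimizing $D_n$ over $\mathrm{St}(p,d)$ has the same minimizers as maximizing $F$, and the optimal values correspond through this constant shift.

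\textbf{Step 4: invariance.} Take $\mathbf{Q}\in\mathcal{O}(d)$. Then
\[
(\mathbf{Q}^\top \mathbf{B}^\top G^{(j)}\mathbf{B}\mathbf{Q})^{-1} = \mathbf{Q}^\top(\mathbf{B}^\top G^{(j)}\mathbf{B})^{-1}\mathbf{Q},
\]
so the $\mathbf{Q}$ factors cancel: $\mathbf{B}\mathbf{Q}\,\mathbf{Q}^\top(\cdots)^{-1}\mathbf{Q}\,\mathbf{Q}^\top\mathbf{B}^\top = \mathbf{B}(\cdots)^{-1}\mathbf{B}^\top$. This holds summand by summand, so $F(\mathbf{B}\mathbf{Q}) = F(\mathbf{B})$. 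In fact it holds for any $\mathbf{A}\in\mathrm{GL}_d(\mathbb{R})$, because the summand equals $\mu^\top \Pi\,\mu$, where $\Pi$ is the $G^{(j)}$-oblique projection onto $\mathrm{span}(\mathbf{B})$. Via $\mathrm{Gr}(p,d)\cong\mathrm{St}(p,d)/\mathcal{O}(d)$, $F$ therefore descends to the Grassmannian.

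\textbf{Main obstacle.} The only delicate point is the bookkeeping in Step 1. One must check that the intercept elimination works with general weights $w_{ij}$, normalizing by $s_j$ where needed. One must also state clearly that the weights are held fixed, independent of $\mathbf{B}$. If they were the adaptive $\mathcal{N}_{k,\mathbf{B}}$ weights, the argument would still hold for each fixed weight configuration, but the claimed equivalence would be with the criterion at those frozen weights.
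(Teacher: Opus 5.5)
Your proof is correct and follows the paper's argument: profile out $a$, solve the weighted normal equations, read off $D_n(\mathbf{B})=\sum_j V_j - F(\mathbf{B})$, then cancel $\mathbf{Q}$ for the $\mathcal{O}(d)$-invariance (the paper writes the inner minimum as the residual of the orthogonal projection $H^{(j)}$ and applies Pythagoras, which is the same computation as your completing the square). One slip in your optional aside: the summand is not $\mu^\top\Pi\mu$ for the $G^{(j)}$-orthogonal projection $\Pi=\mathbf{B}(\mathbf{B}^\top G^{(j)}\mathbf{B})^{-1}\mathbf{B}^\top G^{(j)}$ (the two already differ by a factor $c$ when $G^{(j)}=cI_p$); the $\mathrm{GL}_d(\mathbb{R})$-invariance instead follows directly from $\mathbf{B}\mathbf{A}(\mathbf{A}^\top\mathbf{B}^\top G^{(j)}\mathbf{B}\mathbf{A})^{-1}\mathbf{A}^\top\mathbf{B}^\top=\mathbf{B}(\mathbf{B}^\top G^{(j)}\mathbf{B})^{-1}\mathbf{B}^\top$.
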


\section{Related Work}
\label{sec:related}

We review gradient-based SDR methods and Riemannian optimization, situating our contributions within this literature.

\paragraph{MAVE.}
The original MAVE algorithm \citep{xia2002adaptive} minimizes the same objective $D_n$ but with kernel weights $w_{ij} \propto K_h(\|X_i - X_j\|)$ rather than $k$-NN, and employs a different optimization strategy. Rather than updating $\mathbf{B}$ globally, MAVE uses coordinate descent, cycling through columns $\boldsymbol{\beta}_1, \ldots, \boldsymbol{\beta}_d$. Each column update requires solving a constrained quadratic subproblem subject to orthogonality with the remaining columns, itself handled by alternating between local coefficient updates and a KKT system. In contrast, our Riemannian approach updates all columns simultaneously, maintaining orthogonality intrinsically via retraction rather than explicit constraints.
Despite well-understood statistical properties, the optimization of MAVE remains 
theoretically underdeveloped: coordinate descent lacks convergence 
guarantees except when a $\sqrt{n}$-consistent initialization is 
available \citep{xia2007constructive}, and no existing procedure 
provides convergence results for the underlying non-convex 
optimization problem.

Extensions include dMAVE \citep{xia2007constructive}, which transforms $Y$ through a density kernel to estimate the full central subspace rather than just the central mean subspace. Our framework could accommodate this extension, but we leave it for future work.

\paragraph{OPG.}
The Outer Product of Gradients method \citep{xia2002adaptive} estimates gradients via local linear regression in the ambient space. For each anchor $X_j$, OPG solves
\[
\min_{a_j,\, b_j \in \mathbb{R}^p} \sum_{i=1}^{n} \bigl( Y_i - a_j - b_j^\top (X_i - X_j) \bigr)^2 w_{ij},
\]
with kernel weights $w_{ij} \propto K_h(\|X_i - X_j\|)$ computed in $\mathbb{R}^p$. The gradient estimate is $\hat{b}_j = (G^{(j)})^{-1} \mu^{(j)}$, and the EDR subspace is recovered as the top $d$ eigenvectors of
\[
\hat{\Sigma}_{\mathrm{OPG}} = \frac{1}{n} \sum_{j=1}^{n} \hat{b}_j \hat{b}_j^\top.
\]
OPG admits a closed-form solution but computes neighborhoods in $\mathbb{R}^p$, suffering from the curse of dimensionality when $p$ is large. 
When local covariances are isotropic ($G^{(j)} \propto I_p$), OPG and MAVE coincide (Proposition~\ref{prop:mave_opg_equivalence_appendix}).
A $k$-NN variant of OPG is proposed in \cite{ausset2021nearest}, which reduces computational cost by restricting computations to selected neighbors.

\paragraph{RMAVE.}
Refined MAVE \citep{xia2002adaptive}, as well as the structure-adaptive approach of \citet{HristacheEtAl2001MIM,DalalyanSpokoiny2008SIM}, addresses the dimensionality issue through iterative refinement. Given an estimate $\widehat{\mathbf{B}}$, RMAVE recomputes kernel weights using proximity in the reduced space:
\[
w_{ij} \propto K_h\bigl(\|\widehat{\mathbf{B}}^\top(X_i - X_j)\|\bigr),
\]
with bandwidth chosen according to the reduced dimension $d$ rather than
the ambient dimension $p$, so that smaller bandwidths become feasible.

However, RMAVE inherits MAVE's coordinate-descent structure and adds
significant overhead: each outer iteration recomputes the full
$n\times n$ matrix of pairwise kernel weights and runs a column-wise
coordinate-descent solver over the $d$ columns of $\mathbf{B}$. Both
steps are at least quadratic in $n$. Crucially, RMAVE relies on OPG
for initialization; when $p$ is large, this initialization is poor,
limiting RMAVE's effectiveness even with reduced-space refinement.

This quadratic-in-$n$ bottleneck is shared by most existing gradient-based SDR methods, including the structure-adaptive refinements above. Our approach combines RMAVE's reduced-space localization with sparse $k$-NN weights and Riemannian optimization on $\mathrm{St}(p,d)$.
\paragraph{MADE.}
Minimum Average Deviance Estimation \citep{adragni2018minimum} extends MAVE to exponential family distributions by replacing the squared loss with local deviance, enabling application to binary and count responses. For continuous responses with Gaussian errors,  MADE with identity link reduces to the same optimization problem as the one of MAVE and SMAVE. Beyond this statistical equivalence, MADE and SMAVE address orthogonal challenges: MADE generalizes the loss function while SMAVE addresses computational scalability. The MADE algorithm relies on Riemannian conjugate gradient with matrix exponentials for geodesic retraction, parallel transport of tangent vectors, and Newton--Raphson iterations for local parameters, primitives that are orders of magnitude more expensive than our first-order stochastic updates with QR retraction. 

\paragraph{Kernel-based methods.}
Kernel Dimension Reduction \citep{FukumizuBachJordan2009KDR} characterizes the central subspace via conditional covariance operators in reproducing kernel Hilbert spaces, avoiding explicit gradient estimation. While theoretically elegant, these methods involve different computational trade-offs (kernel matrix operations vs.\ local regression) and are not directly comparable to the MAVE family we focus on here.

\paragraph{Riemannian optimization.}
Riemannian optimization \citep{absil2008optimization,boumal2023introduction} extends gradient-based methods to smooth manifolds, requiring two ingredients: the Riemannian gradient and a retraction mapping tangent vectors back to the manifold. Convergence guarantees for Riemannian SGD are well established \citep{bonnabel2013stochastic,zhang2016riemannian}. Appendix~\ref{app:riemannian} provides background on Riemannian geometry.

\section{SMAVE Algorithm}
\label{sec:algorithm}

We now present SMAVE, which resolves the tension between statistical efficiency and computational scalability by: (i) replacing dense kernel weights with sparse $k$-NN weights in the projected space, and (ii) optimizing directly on $\mathrm{St}(p,d)$ via Riemannian stochastic gradient ascent.

\subsection{Adaptive Sparse Localization via $k$-NN}
\label{subsec:knn_weights}

RMAVE computes weights $w_{ij} \propto K_h(\|\widehat{\mathbf{B}}^\top(X_i - X_j)\|)$ for all $n^2$ pairs, though local regression requires only local information. We exploit this sparsity by defining
\begin{equation}
\label{eq:knn_weights}
w_{ij} = 
\begin{cases}
1/k & \text{if } X_i \in \mathcal{N}_k(\widehat{\mathbf{B}}, X_j), \\
0 & \text{otherwise},
\end{cases}
\end{equation}
where $\mathcal{N}_k(\mathbf{B}, X_j)$ denotes the $k$ nearest neighbors of $X_j$ under the projected distance $\|\widehat{\mathbf{B}}^\top(\cdot - X_j)\|$. 
Since neighborhoods depend on $\widehat{\mathbf{B}}$, they become stale as the subspace estimate evolves. We rebuild the $k$-NN index every $\tau$ iterations via KD-tree on $\{\widehat{\mathbf{B}}_t^\top X_i\}_{i=1}^n$, tree construction costs $O(dn \log n)$. Between rebuilds, the fixed index introduces slight mismatch but avoids per-iteration overhead. Moderate values of $\tau$ balance gradient fidelity against computational cost (see Section~\ref{sec:experiments}).

\subsection{Riemannian Stochastic Gradient Ascent}
\label{subsec:riemannian_sgd}

The following result, proved in Appendix~\ref{app:smave}, provides the gradient required for optimization on $\mathrm{St}(p,d)$.

\begin{proposition}[Riemannian gradient]
\label{prop:full_gradient}
Assume $\mathbf{B}^\top G^{(j)} \mathbf{B}$ is invertible for all $j$. The Riemannian gradient of $F$ in \eqref{eq:objective_F} is
\[
\mathrm{grad}\, F(\mathbf{B}) = 2 \sum_{j=1}^{n} \left( \mu^{(j)} - G^{(j)} \mathbf{B} u^{(j)} \right) (u^{(j)})^\top,
\]
where $u^{(j)} = (\mathbf{B}^\top G^{(j)} \mathbf{B})^{-1} \mathbf{B}^\top \mu^{(j)}$.
\end{proposition}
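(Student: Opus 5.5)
The plan is to compute the Euclidean gradient of each summand $f_j(\mathbf{B}) := (\mu^{(j)})^\top \mathbf{B} (\mathbf{B}^\top G^{(j)} \mathbf{B})^{-1} \mathbf{B}^\top \mu^{(j)}$ on the open set of $p\times d$ matrices where $\mathbf{B}^\top G^{(j)}\mathbf{B}$ is invertible. We then project onto the tangent space of $\mathrm{St}(p,d)$ with the embedded metric, $\mathrm{grad}\,F(\mathbf{B}) = \nabla F - \mathbf{B}\,\mathrm{sym}(\mathbf{B}^\top \nabla F)$, where $\mathrm{sym}(A) = (A+A^\top)/2$ (Appendix~\ref{app:riemannian}). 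The claim is that the Euclidean gradient already lies in the tangent space, so the projection changes nothing.

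\textbf{Step 1: Euclidean differential.} Fix $j$ and drop the superscript. Write $S = \mathbf{B}^\top G \mathbf{B}$ (symmetric, since $G$ is symmetric) and $u = S^{-1}\mathbf{B}^\top\mu$. For a direction $\Delta$, use $d(S^{-1}) = -S^{-1}(dS)S^{-1}$ with $dS = \Delta^\top G\mathbf{B} + \mathbf{B}^\top G\Delta$. This gives
\[
df = 2\,\mu^\top \Delta\, u - u^\top(\Delta^\top G \mathbf{B} + \mathbf{B}^\top G \Delta)u = 2\,\mu^\top\Delta u - 2\,u^\top \mathbf{B}^\top G\Delta u .
\]
Writing this as a trace inner product $\mathrm{tr}(\Delta^\top \cdot)$ yields $\nabla f = 2(\mu - G\mathbf{B}u)u^\top$. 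Summing over $j$ gives exactly the displayed expression as the Euclidean gradient $\nabla F$.

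\textbf{Step 2: tangency.} Compute $\mathbf{B}^\top \nabla f = 2(\mathbf{B}^\top\mu - \mathbf{B}^\top G\mathbf{B}u)u^\top = 2(\mathbf{B}^\top\mu - S S^{-1}\mathbf{B}^\top\mu)u^\top = 0$. Hence $\mathbf{B}^\top\nabla F = 0$, and in particular $\mathrm{sym}(\mathbf{B}^\top\nabla F)=0$. The projection onto $T_{\mathbf{B}}\mathrm{St}(p,d)$ is therefore the identity on $\nabla F$, so $\mathrm{grad}\,F(\mathbf{B}) = \nabla F(\mathbf{B})$. This also matches the $\mathcal{O}(d)$-invariance of Proposition~\ref{prop:stiefel_equivalence}: the gradient is horizontal, i.e.\ orthogonal to $\mathrm{span}(\mathbf{B})$.

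The main obstacle is bookkeeping rather than conceptual. First, one must handle the symmetry of $S$ correctly so that the two terms of $dS$ combine into a single factor of $2$. Second, one must justify that the Euclidean gradient is taken with respect to a smooth extension of $F$ to a neighborhood of $\mathrm{St}(p,d)$ in $\mathbb{R}^{p\times d}$. The formula itself provides this extension, and it is smooth wherever each $S$ is invertible, which is an open condition containing $\mathrm{St}(p,d)$ by assumption. If the paper's convention used the canonical metric instead of the embedded one, the conclusion would be the same: the gradient is a tangent vector satisfying $\mathbf{B}^\top\xi = 0$, and on such vectors the two metrics differ only by the factor acting on the $\mathbf{B}\mathbf{B}^\top$ component, which here is zero.
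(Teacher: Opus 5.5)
Your proof is correct and matches the paper's argument essentially verbatim. The paper likewise differentiates each summand using $d(M^{-1}) = -M^{-1}(dM)M^{-1}$ to obtain $\nabla f_j = 2(\mu^{(j)} - G^{(j)}\mathbf{B}u^{(j)})(u^{(j)})^\top$, notes that $\mathbf{B}^\top \nabla f_j = 0$ so the tangent-space projection acts as the identity, and then sums over $j$. Your aside about the canonical metric is also correct: because $\mathbf{B}^\top\nabla F = 0$, the only term where the two metrics differ vanishes.
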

At each iteration $t$, we sample uniformly without replacement a mini-batch
$J_t\subset\{1,\dots,n\}$, independently of the past, and form the stochastic
gradient
\begin{equation}
\label{eq:stochastic_gradient}
\mathbf{g}_t \;=\; \frac{2n}{|J_t|}\sum_{j\in J_t}\bigl(\mu^{(j)}-G^{(j)}\mathbf{B}_t u^{(j)}\bigr)(u^{(j)})^\top,
\end{equation}
which is an unbiased estimator of the full gradient:
$\mathbb{E}[\mathbf g_t\mid\mathbf B_t]=\mathrm{grad}\,F(\mathbf B_t)$.  We normalize $\mathbf{g}_t \gets \mathbf{g}_t / \|\mathbf{g}_t\|_F$ to stabilize step sizes, incorporate heavy-ball momentum, and update via QR retraction:
\begin{equation}
\label{eq:sgd_update}
\mathbf{v}_{t+1} = \beta \mathbf{v}_t + \mathbf{g}_t, \qquad \mathbf{B}_{t+1} = R_{\mathbf{B}_t}^{\mathrm{QR}}\bigl(\alpha_t \mathbf{v}_{t+1}\bigr),
\end{equation}
where $\alpha_t = \alpha_0 / (1 + \gamma t)$ is a decaying step size and
$\beta$ a momentum coefficient. The neighborhood size is calibrated to
match Silverman's effective kernel neighborhood in $\mathbb{R}^d$
(Section~\ref{sec:experiments}). 

SMAVE procedure is summarized in Algorithm~\ref{alg:orsgdknn}. Observe that SMAVE initializes from a uniform draw on $\mathrm{St}(p,d)$, obtained
as the orthonormal factor of a $p \times d$ Gaussian matrix; this
contrasts with RMAVE, which warm-starts from the OPG estimate. The
combination of random initialization and stochastic updates decouples
SMAVE from the OPG basin, a property that becomes critical when $p$
is large and OPG's ambient-space localization degrades.

\begin{algorithm}[t]
\caption{SMAVE}
\label{alg:orsgdknn}
\begin{algorithmic}[1]
\REQUIRE Data $\{(X_i, Y_i)\}_{i=1}^n$, target dimension $d\le p$, iterations $T$, refresh period $\tau$, mini-batch size $m\ge 1$
\STATE Initialize $\mathbf{B}_0 \in \mathrm{St}(p,d)$; set $k \gets \lceil n^{4/(d+4)} \rceil$ clipped to $[20, n/3]$
\STATE Build $k$-NN index on $\{\mathbf{B}_0^\top X_i\}_{i=1}^n$
\FOR{$t = 0, \ldots, T-1$}
    \STATE Draw $J_t\subset\{1,\dots,n\}$, $|J_t|=m$, uniformly without replacement and independently of $(\mathbf B_0,J_0,\dots,J_{t-1})$
    \STATE Compute $\mathbf{g}_t$ via \eqref{eq:stochastic_gradient}; normalize $\mathbf{g}_t \gets \mathbf{g}_t / \|\mathbf{g}_t\|_F$
    \STATE Update $\mathbf{B}_{t+1}$ via \eqref{eq:sgd_update}
    \IF{$t > 0$ \textbf{and} $t \bmod \tau = 0$}
        \STATE Rebuild $k$-NN index on $\{\mathbf{B}_{t+1}^\top X_i\}_{i=1}^n$
    \ENDIF
\ENDFOR
\STATE \textbf{return} $\mathbf{B}_T$
\end{algorithmic}
\end{algorithm}
\begin{remark}[Momentum implementation]
The momentum vector $\mathbf{v}_t$ accumulates in ambient space without parallel or vector transport 
between tangent spaces. This common simplification avoids $O(pd^2)$ transport costs per 
iteration; the QR retraction implicitly projects updates onto the manifold \citep{becigneul2018riemannian}.
\end{remark}

\subsection{Convergence Analysis}
\label{ssec:convergence}

We analyze \textbf{SMAVE} in a simplified regime that isolates the geometric
and statistical contributions but does not match the implementation: fixed
neighborhoods ($\tau=\infty$), no momentum ($\beta=0$), no gradient
normalization. Mini-batches are drawn uniformly without replacement, as in
the implementation. We work with the regularized objective
\begin{equation}\label{eq:Feps}
  F_{\varepsilon}(\mathbf{B}) = \sum_{j=1}^{n}(\mu^{(j)})^{\top}\mathbf{B}
  \bigl(\mathbf{B}^{\top}G_{\varepsilon}^{(j)}\mathbf{B}\bigr)^{-1}
  \mathbf{B}^{\top}\mu^{(j)},
\end{equation}
where $G_{\varepsilon}^{(j)} := G^{(j)} + \varepsilon\,I_p$ for $\varepsilon>0$, to handle the rank-deficient regime $p>k-1$.  This corresponds to ridge-penalized local regression
(Remark~\ref{rem:regularized-gradient}). The relation
$\mathbf{B}^{\top}G_{\varepsilon}^{(j)}\mathbf{B}\succeq\varepsilon I_d$ on
$\St(p,d)$ makes the inversion well-posed, and the closed form of
Proposition~\ref{prop:full_gradient} carries over to $\mathrm{grad}\,F_\varepsilon$. We first establish
almost-sure convergence, then quantify the non-asymptotic rate.
\begin{proposition}[Asymptotic convergence]\label{thm:asym}
For step sizes $\alpha_t=\alpha_0/(1+\gamma t)$ with $\alpha_0,\gamma>0$ and
$\alpha_0$ small enough (explicit threshold in Appendix~\ref{app:smave}), almost surely:
(i) $F_\varepsilon(\mathbf{B}_t)$ converges to a finite limit;
(ii) $\mathrm{grad}\,F_\varepsilon(\mathbf{B}_t)\to 0$;
(iii) every accumulation point of $\{\mathbf{B}_t\}$ is a critical point of
$F_\varepsilon$.
\end{proposition}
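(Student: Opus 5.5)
We follow the standard Riemannian SGD analysis on a compact manifold (Bonnabel; Boumal et al.). It rests on three ingredients: a retraction-based descent (here, ascent) lemma, bounded stochastic gradients, and the Robbins--Siegmund supermartingale theorem. Here $\mathbf{B}_{t+1}=R^{\mathrm{QR}}_{\mathbf{B}_t}(\alpha_t\mathbf{g}_t)$, and $\mathbf{g}_t$ is the unnormalized mini-batch estimator \eqref{eq:stochastic_gradient} with $\varepsilon$-regularized Gram matrices.

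\textbf{Step 1: regularity constants.} $\mathrm{St}(p,d)$ is compact, and $\mathbf{B}^\top G^{(j)}_\varepsilon\mathbf{B}\succeq \varepsilon I_d$ on a neighbourhood of it in $\mathbb{R}^{p\times d}$. Hence $F_\varepsilon$ extends smoothly to an open set containing $\mathrm{St}(p,d)$. Explicitly, $\|u^{(j)}\|\le \varepsilon^{-1}\|\mu^{(j)}\|$ for $u^{(j)}=(\mathbf{B}^\top G_\varepsilon^{(j)}\mathbf{B})^{-1}\mathbf{B}^\top\mu^{(j)}$, which gives a bound on each summand
\[
\|(\mu^{(j)}-G^{(j)}_\varepsilon\mathbf{B}u^{(j)})(u^{(j)})^\top\|_F\le C_j:=\varepsilon^{-1}\|\mu^{(j)}\|^2\bigl(1+\varepsilon^{-1}\|G^{(j)}_\varepsilon\|\bigr).
\]
It follows that $F_\varepsilon$ is bounded on $\mathrm{St}(p,d)$, which supplies both the upper bound needed for ascent and the lower bound used to conclude convergence. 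It also follows that $\|\mathbf{g}_t\|_F\le G_{\max}:=2n\max_j C_j$ deterministically.

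\textbf{Step 2: retraction smoothness.} Because the QR retraction is smooth and $\mathrm{St}(p,d)$ is compact, the pullback $\xi\mapsto F_\varepsilon(R_{\mathbf{B}}(\xi))$ has a uniformly Lipschitz gradient on the ball $\|\xi\|\le r$. By Boumal--Absil--Cartis, this yields a constant $L_R$ such that
\[
F_\varepsilon(R_{\mathbf{B}}(\xi))\ge F_\varepsilon(\mathbf{B})+\langle\mathrm{grad}F_\varepsilon(\mathbf{B}),\xi\rangle-\tfrac{L_R}{2}\|\xi\|^2 .
\]
The ball condition holds once $\alpha_0 G_{\max}\le r$, and this is where the explicit threshold on $\alpha_0$ comes from. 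Obtaining $L_R$ explicitly, in terms of $\varepsilon$, $\|\mu^{(j)}\|$, $\|G^{(j)}\|$ and second-order bounds on the QR map, is the main technical obstacle. I would bound the Euclidean Hessian of the extended $F_\varepsilon$ on the tubular neighbourhood via derivatives of $(\mathbf{B}^\top G\mathbf{B})^{-1}$, and then combine this with the standard second-order bound $\|R_{\mathbf{B}}(\xi)-\mathbf{B}-\xi\|\le c\|\xi\|^2$ for QR.

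\textbf{Step 3: supermartingale argument.} Apply the inequality of Step 2 with $\xi=\alpha_t\mathbf{g}_t$ and condition on $\mathcal{F}_t=\sigma(\mathbf{B}_0,J_0,\dots,J_{t-1})$. Unbiasedness holds because $J_t$ is independent of $\mathcal{F}_t$ and the neighbourhoods are fixed. This gives
\[
\mathbb{E}[F_\varepsilon(\mathbf{B}_{t+1})\mid\mathcal{F}_t]\ge F_\varepsilon(\mathbf{B}_t)+\alpha_t\|\mathrm{grad}F_\varepsilon(\mathbf{B}_t)\|^2-\tfrac{L_R}{2}\alpha_t^2G_{\max}^2 .
\]
Set $Z_t=\sup F_\varepsilon-F_\varepsilon(\mathbf{B}_t)\ge0$. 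Since $\sum\alpha_t^2<\infty$, Robbins--Siegmund implies that $Z_t$ converges a.s. to a finite limit, which is (i), and that $\sum_t\alpha_t\|\mathrm{grad}F_\varepsilon(\mathbf{B}_t)\|^2<\infty$ a.s. Combined with $\sum_t\alpha_t=\infty$, this gives $\liminf_t\|\mathrm{grad}F_\varepsilon(\mathbf{B}_t)\|=0$.

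\textbf{Step 4: from liminf to limit.} To upgrade to (ii), I would use that $\mathbf{B}\mapsto\mathrm{grad}F_\varepsilon(\mathbf{B})$ is Lipschitz on the compact manifold with constant $L_g$. The iterates also move slowly: $\|\mathbf{B}_{t+1}-\mathbf{B}_t\|\le c'\alpha_tG_{\max}$. Now suppose $\|\mathrm{grad}F_\varepsilon(\mathbf{B}_t)\|^2$ exceeded $2\delta$ infinitely often. Each such up-crossing from level $\delta$ to level $2\delta$ requires $\sum\alpha_s\gtrsim\delta/(L_gG_{\max})$ over the crossing window, and on that window $\|\mathrm{grad}F_\varepsilon\|^2\ge\delta$. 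Infinitely many crossings would then contradict the summability from Step 3. This is the classical Bertsekas--Tsitsiklis argument.

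\textbf{Conclusion.} Part (iii) follows immediately from (ii): if $\mathbf{B}_{t_k}\to\bar{\mathbf{B}}$ along a subsequence, continuity of $\mathrm{grad}F_\varepsilon$ gives $\mathrm{grad}F_\varepsilon(\bar{\mathbf{B}})=0$, and accumulation points exist by compactness.
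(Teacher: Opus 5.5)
Your proposal is correct and takes essentially the paper's route. The paper verifies the hypotheses of Bonnabel's Theorem~2 using the same two inputs you isolate: the deterministic bound $\|\mathbf{g}_t\|_F\le nA_0$, and retraction smoothness of $F_\varepsilon$ on a ball of radius $\rho$ (Lemma~\ref{lem:retr-smooth}), which gives the threshold $\alpha_0\le\rho/(nA_0)$. Your Robbins--Siegmund and up-crossing steps unpack what that cited theorem delivers, and they show that the extra condition $\alpha_0\le 1/L_\varepsilon$ in the paper's threshold is not needed for the asymptotic statement.
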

Asymptotic convergence leaves the rate open. Under a constant step size, the
following gives an explicit non-asymptotic bound on the average squared
gradient.
\begin{theorem}[Non-asymptotic rate]\label{thm:non-asym}
There exist explicit constants $\bar\alpha_\varepsilon, L_\varepsilon,
\sigma_\varepsilon^{2}, \Delta_\varepsilon$ (see Appendix~\ref{app:non-asym}) such that, for any constant step size
$\alpha\in(0,\bar\alpha_\varepsilon]$ and any $T\ge 1$,
\begin{equation}\label{eq:rate-main}
  \frac{1}{T}\sum_{t=0}^{T-1}\E\bigl[\|\mathrm{grad}\,F_\varepsilon(\mathbf{B}_t)\|_F^{2}\bigr]
  \;\le\; \frac{2\Delta_\varepsilon}{\alpha T} + L_\varepsilon\,\alpha\,\sigma_\varepsilon^{2}.
\end{equation}
Tuning $\alpha\propto 1/\sqrt T$ yields the rate $O(1/\sqrt{T})$.
\end{theorem}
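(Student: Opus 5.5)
The plan is the standard descent-lemma argument for Riemannian SGD with a retraction, written for ascent. The key ingredient is a \emph{retraction-smoothness} inequality for $F_\varepsilon$ on $\St(p,d)$. Since $F_\varepsilon$ extends smoothly to a neighborhood of the compact set $\St(p,d)$, and the QR retraction satisfies $\|R^{\mathrm{QR}}_{\mathbf{B}}(\xi)-\mathbf{B}-\xi\|_F\le M\|\xi\|_F^2$ for tangent $\xi$, a standard result (Boumal, Lemma 2.7 / Prop.\ 10.57) gives a constant $L_\varepsilon$ with
\[
F_\varepsilon\bigl(R_{\mathbf{B}}(\xi)\bigr)\ \ge\ F_\varepsilon(\mathbf{B})+\langle \mathrm{grad}\,F_\varepsilon(\mathbf{B}),\xi\rangle-\tfrac{L_\varepsilon}{2}\|\xi\|_F^2
\]
for all $\mathbf{B}\in\St(p,d)$ and all tangent $\xi$.

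Making $L_\varepsilon$ explicit is the main obstacle. I would bound the Euclidean gradient and Hessian of $\mathbf{B}\mapsto \mu^\top\mathbf{B}(\mathbf{B}^\top G_\varepsilon\mathbf{B})^{-1}\mathbf{B}^\top\mu$ using $\mathbf{B}^\top G^{(j)}_\varepsilon\mathbf{B}\succeq\varepsilon I_d$. This yields $\|u^{(j)}\|\le\|\mu^{(j)}\|/\varepsilon$ and polynomial bounds in $\|\mu^{(j)}\|$, $\|G_\varepsilon^{(j)}\|$ and $1/\varepsilon$. These bounds are then combined with the second-order constant of the QR retraction.

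I also need the gradient-regularity facts. One subtlety is that the closed form of Proposition~\ref{prop:full_gradient} is tangent: $\mathbf{B}^\top(\mu-G\mathbf{B}u)=0$ by the definition of $u$. Hence $\mathbf{g}_t\in T_{\mathbf{B}_t}\St$, and it is unbiased for $\mathrm{grad}\,F_\varepsilon(\mathbf{B}_t)$ conditionally on $\mathcal{F}_t$. Its conditional second moment satisfies
\[
\E\|\mathbf{g}_t\|_F^2\le \|\mathrm{grad}\,F_\varepsilon(\mathbf{B}_t)\|_F^2+\sigma_\varepsilon^2 .
\]
Here $\sigma_\varepsilon^2$ comes from the sampling-without-replacement variance formula with $m$ draws, $\frac{n^2}{m}\frac{n-m}{n-1}\cdot\frac1n\sum_j\|4h_j\|^2$, where $h_j$ is the $j$-th summand. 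Each $h_j$ is uniformly bounded on $\St(p,d)$ by $\|\mu^{(j)}\|^2(1+\|G^{(j)}_\varepsilon\|/\varepsilon)/\varepsilon$. This variance formula also gives the $1/m$ dependence advertised in the introduction.

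Next, I apply the retraction-smoothness inequality with $\xi=\alpha\mathbf{g}_t$ and take the conditional expectation:
\[
\E[F_\varepsilon(\mathbf{B}_{t+1})\mid\mathcal F_t]\ge F_\varepsilon(\mathbf{B}_t)+\alpha\|\mathrm{grad}F_\varepsilon(\mathbf{B}_t)\|^2-\tfrac{L_\varepsilon\alpha^2}{2}\bigl(\|\mathrm{grad}F_\varepsilon(\mathbf{B}_t)\|^2+\sigma_\varepsilon^2\bigr).
\]
Choosing $\bar\alpha_\varepsilon=1/L_\varepsilon$ ensures $\alpha-L_\varepsilon\alpha^2/2\ge\alpha/2$. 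Taking full expectations and telescoping over $t=0,\dots,T-1$ gives
\[
\tfrac{\alpha}{2}\sum_t\E\|\mathrm{grad}\,F_\varepsilon(\mathbf{B}_t)\|^2\le \E F_\varepsilon(\mathbf{B}_T)-\E F_\varepsilon(\mathbf{B}_0)+\tfrac{L_\varepsilon\alpha^2T}{2}\sigma_\varepsilon^2 .
\]

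Finally, I set $\Delta_\varepsilon:=\sup_{\St}F_\varepsilon-\inf_{\St}F_\varepsilon$. This is finite because $0\le F_\varepsilon\le\sum_j\|\mu^{(j)}\|^2/\varepsilon$, since $\mathbf{B}(\mathbf{B}^\top G_\varepsilon\mathbf{B})^{-1}\mathbf{B}^\top\preceq \varepsilon^{-1}I$. Dividing the telescoped inequality by $\alpha T/2$ yields \eqref{eq:rate-main}. Setting $\alpha=\min(\bar\alpha_\varepsilon,c/\sqrt T)$ balances the two terms and gives $O(1/\sqrt T)$.
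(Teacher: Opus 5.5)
Your architecture matches the paper's: retraction smoothness of $F_\varepsilon$, tangency and conditional unbiasedness of $\mathbf g_t$, the without-replacement variance bound, telescoping, and $\Delta_\varepsilon\le nM_\mu^2/\lambda_\varepsilon$. The gap is in the key inequality. You assert it for \emph{all} tangent $\xi$ and pair it with $\bar\alpha_\varepsilon=1/L_\varepsilon$ alone, but neither justification you give reaches large $\xi$.

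\begin{itemize}
\item A Lemma~2.7-type result of Boumal--Absil--Cartis needs $\nabla F_\varepsilon$ Lipschitz on the convex hull of $\St(p,d)$, which is the operator-norm unit ball. That ball contains $0$ and rank-deficient matrices where $F_\varepsilon$ is undefined. Since $F_\varepsilon(c\mathbf Z)=F_\varepsilon(\mathbf Z)$ for $c\neq0$, $F_\varepsilon$ does not even extend continuously to $0$.
\item Smoothness on a neighborhood of $\St(p,d)$ only handles short chords.
\item Your explicit Hessian bound rests on $\mathbf B^\top G_\varepsilon^{(j)}\mathbf B\succeq\varepsilon I_d$, which holds only on $\St(p,d)$. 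The Taylor step, however, evaluates the Hessian along the chord $[\mathbf B,R^{\mathrm{QR}}_{\mathbf B}(\xi)]$, which leaves the manifold.
\end{itemize}

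The paper handles this by showing $\sigma_{\min}(\mathbf Z)\ge 1/4$ on the chord, hence $\|(\mathbf Z^\top G_\varepsilon^{(j)}\mathbf Z)^{-1}\|_{\mathrm{op}}\le 16/\lambda_\varepsilon$. That is only guaranteed for $\|\xi\|_F\le\rho$. As written, nothing ensures the actual step $\alpha\mathbf g_t$ falls where your inequality has been established.

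The missing ingredient is the paper's second step-size condition. Combined with the deterministic bound $\|\mathbf g_t\|_F\le nA_0$, requiring $\alpha\le\rho/(nA_0)$ forces $\|\alpha\mathbf g_t\|_F\le\rho$ surely. This is why the paper takes $\bar\alpha_\varepsilon=\min\{1/L_\varepsilon,\rho/(nA_0)\}$.

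Alternatively, you can keep your global statement. For tangent $\xi$, $(\mathbf B+\xi)^\top(\mathbf B+\xi)=I_d+\xi^\top\xi$, so the QR retraction is defined on all of $T_{\mathbf B}\St(p,d)$. For $\|\xi\|_F>\rho$,
\[
F_\varepsilon(\mathbf B)-F_\varepsilon\bigl(R^{\mathrm{QR}}_{\mathbf B}(\xi)\bigr)+\langle\mathrm{grad}\,F_\varepsilon(\mathbf B),\xi\rangle\le\Delta_\varepsilon+nA_0\|\xi\|_F\le\Bigl(\frac{\Delta_\varepsilon}{\rho^2}+\frac{nA_0}{\rho}\Bigr)\|\xi\|_F^2 .
\]
So replacing $L_\varepsilon$ by $\max\{L_\varepsilon,\,2\Delta_\varepsilon/\rho^2+2nA_0/\rho\}$ makes the inequality global. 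After that, $\bar\alpha_\varepsilon=1/L_\varepsilon$ suffices and the rest of your argument goes through unchanged.

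One cosmetic slip: with $h_j$ defined without the factor $2$, the variance term should contain $\|2h_j\|_F^2$, not $\|4h_j\|_F^2$.
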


The results and their proofs (with explicit constants) appear in
Appendix~\ref{app:smave}. Proposition~\ref{thm:asym} follows from
\citet[Thm.2]{bonnabel2013stochastic}. Theorem~\ref{thm:non-asym} combines retraction smoothness of $F_\varepsilon$
(the sole geometric input, yielding the per-step descent inequality in
\citealp{boumal2019global}) with the without-replacement
finite-population variance. The $T$-dependence matches the rate of \citet{ghadimi2013stochastic} for Euclidean
stochastic gradient methods, which is optimal among stochastic first-order
methods on non-convex smooth objectives under bounded variance
\citep{arjevani2023lower}.

The analysis applies verbatim to any localization scheme (kernel, $k$-NN,
fixed-radius, similarity graphs) yielding an objective of the form~\eqref{eq:Feps}: the
Riemannian-optimization and statistical layers decouple. The $\varepsilon$-degradation of the constants and its consequences
are deferred to the Appendix discussion. The convergence analysis of the full SMAVE algorithm, with periodic
$k$-NN refresh inducing time-varying objectives and heavy-ball momentum inducing
temporal correlations, remains open.

\section{Numerical Experiments}
\label{sec:experiments}
\subsection{Baselines}

\paragraph{Methods.}
We focus the main comparison on \textbf{OPG}, \textbf{RMAVE}
(Section~\ref{sec:related}), and our \textbf{SMAVE}. Two further
methods, \textbf{OPG-$k$NN} and \textbf{RMAVE-$k$NN}, replace the kernel
weights of OPG and RMAVE with the same uniform $k$-NN weights as SMAVE
(in $\mathbb{R}^p$ and $\mathbb{R}^d$ respectively); these are
diagnostic ablations isolating the effect of sparse localization, and
we defer their results to Appendix~\ref{app:synthetic-extended}. We
omit structure-adaptive OPG
\citep{HristacheEtAl2001MIM,JMLR:v9:dalalyan08a}, which improves
statistical efficiency but shares RMAVE's quadratic-in-$n$ refinement
cost.

\paragraph{Neighborhood calibration.}
Let $q$ denote the localization dimension: $q = p$ for ambient-space
methods (OPG, OPG-$k$NN) and $q = d$ for projected-space methods
(RMAVE, RMAVE-$k$NN, SMAVE). Kernel methods use Gaussian weights with
Silverman's rule-of-thumb bandwidth $h \propto n^{-1/(q+4)}$
\citep{silverman2018density}; $k$-NN methods use
$k = \lceil n^{4/(q+4)} \rceil$, matching the expected number of points
within Silverman's effective kernel support. Full implementation
details are in Appendix~\ref{app:implementation}.

\paragraph{Outer iteration vs gradient steps.}
RMAVE and SMAVE differ in what an iteration computes. One RMAVE
\emph{outer} iteration recomputes the full $n \times n$ matrix of
pairwise kernel weights from the current subspace estimate and runs a
column-wise coordinate-descent (CD) solver cycling over the $d$ columns of
$\mathbf{B}$, each column update itself alternating between
local-coefficient and direction updates. One SMAVE iteration is a
single Riemannian gradient step on $\mathbf{B}$ using a mini-batch of
$m$ anchors with $k$-NN weights.

\paragraph{Per-iteration complexity.}
For OPG, the dominant costs are forming the $n$ local Gram matrices
$G^{(j)} \in \mathbb{R}^{p \times p}$ ($O(n^2 p^2)$) and solving the
$n$ local $p$-dimensional systems ($O(np^3)$). OPG-$k$NN restricts
each local regression to its $k$ neighbors, giving $O(nkp^2 + np^3)$, excluding KD-tree, construction and querying, which for reasonable dimension $p$ is secondary. For RMAVE, one outer
iteration computes the $n^2$ projected pairwise distances in
$\mathbb{R}^d$ ($O(n^2 d)$) and re-forms all $n$ local Gram matrices
from the resulting dense weights ($O(n^2 p^2)$); the latter dominates
in the regime $p \gg d$ relevant to SDR, but the subsequent
CD updates are subdominant in our implementation.
RMAVE-$k$NN replaces the dense weights with $k$-NN weights at cost
$O(nkp^2)$ but leaves the CD solver without exploiting weight sparsity.
The CD solver cost remains unchanged and dominates overall. This is
consistent with our finding (Appendix~\ref{app:synthetic-extended}).
For SMAVE, the per-iteration cost is dominated by the computation of $m$ local Gram matrices of size $p\times p$ leading to an \(
  O\!\bigl(mkp^2\bigr)
\) overhead per iteration, to which KD-tree construction and querying ($O(n\log(n))$, when $d$ is small), should be added.
The advantage of SMAVE is thus to bypass the CD solver of RMAVE, replacing it with a cheap stochastic gradient step that exploits small minibatch sizes $m$ and sparse $k$-NN weights.

\subsection{Synthetic Experiments}
We consider the SDR model $Y = g(\mathbf{B}_\ast^\top X) + \xi$,
where $X \sim \mathcal{N}(0, \mathbf{\Sigma})$, 
$\mathbf{B}_\ast \in \St(p, d)$ is a sparse EDR matrix (20\% nonzero entries, QR-orthonormalized) with $d = 2$,
and $\xi \sim \mathcal{N}(0, 0.5^{2})$.

Sparse EDR matrices arise naturally in high-dimensional settings where only a subset of covariates contributes to the response, aligning with the sufficient variable selection literature \citep{li2007sparse}. 
All methods compared are agnostic to the sparsity structure and do not exploit it explicitly.

\paragraph{Evaluation metric.}
Given the true EDR matrix $\mathbf{B}_\ast$ and an estimate $\widehat{\mathbf{B}}$, 
we measure accuracy via the squared subspace distance
\begin{equation}
m^2(\widehat{\mathbf{B}}, \mathbf{B}_\ast) = 
\big\|(I_p - \mathbf{B}_\ast \mathbf{B}_\ast^\top)\,\widehat{\mathbf{B}}\big\|_F^2,
\end{equation}
which equals zero when the column spaces coincide and is bounded by $d$ when they are orthogonal.

\label{sec:results}



\begin{table}[t]
\centering
\caption{Experimental design: 10 scenarios from 5 link functions $\times$ 2 covariance structures, with $n \in \{1000, 2000, 5000\}$ and $p \in \{50, 100, 200\}$.}
\label{tab:design-synth}
\vskip 0.1in
\small
\begin{tabular}{@{}ll@{}}
\toprule
\textbf{Factor} & \textbf{Levels} \\
\midrule
Link functions & \begin{tabular}[t]{@{}l@{}}Polynomial, Sinusoidal, Exponential,\\ Interaction, Rational\end{tabular} \\
\midrule
Covariance $\boldsymbol{\Sigma}$ & Identity ($\mathbf{I}_p$); AR(1) ($\Sigma_{ij} = 0.5^{|i-j|}$) \\
\midrule
Sample size $n$ & $1000, 2000, 5000$ \\
Dimension $p$ &  $50, 100, 200$ \\
Replications & 10 per configuration \\
\bottomrule
\end{tabular}
\vskip -0.1in
\end{table}
\begin{table}[ht]
\centering
\caption{Squared subspace distance $m^2$ (mean $\pm$ s.e.\ over 10 scenarios $\times$ 10 replications). \textbf{Bold}: best; \underline{underline}: second best.}
\label{tab:accuracy}
\vskip 0.1in
\small
\begin{tabular}{@{}cc|c|c|c@{}}
\toprule
$n$ & $p$ & OPG & RMAVE & SMAVE \\
\midrule
\multirow{3}{*}{1000}
 & 50  & $1.90_{\pm.01}$ & $\underline{0.54_{\pm.05}}$ & $\mathbf{0.25_{\pm.03}}$ \\
 & 100 & $1.95_{\pm.00}$ & $\underline{0.98_{\pm.04}}$ & $\mathbf{0.59_{\pm.04}}$ \\
 & 200 & $1.96_{\pm.01}$ & $\underline{1.48_{\pm.03}}$ & $\mathbf{1.02_{\pm.04}}$ \\
\midrule
\multirow{3}{*}{2000}
 & 50  & $1.89_{\pm.01}$ & $\underline{0.43_{\pm.04}}$ & $\mathbf{0.11_{\pm.02}}$ \\
 & 100 & $1.95_{\pm.00}$ & $\underline{0.73_{\pm.04}}$ & $\mathbf{0.28_{\pm.03}}$ \\
 & 200 & $1.98_{\pm.00}$ & $\underline{1.12_{\pm.04}}$ & $\mathbf{0.59_{\pm.04}}$ \\
\midrule
\multirow{3}{*}{5000}
 & 50  & $1.89_{\pm.01}$ & $\underline{0.33_{\pm.04}}$ & $\mathbf{0.03_{\pm.01}}$ \\
 & 100 & $1.94_{\pm.00}$ & $\underline{0.40_{\pm.05}}$ & $\mathbf{0.13_{\pm.03}}$ \\
 & 200 & $1.98_{\pm.00}$ & $\underline{0.69_{\pm.05}}$ & $\mathbf{0.25_{\pm.03}}$ \\
\bottomrule
\end{tabular}
\vskip -0.1in
\end{table}

\begin{table}[ht]
\centering
\caption{Runtime in seconds (mean over 10 scenarios $\times$ 10 replications). \textbf{Bold}: fastest; \underline{underline}: second fastest.}
\label{tab:runtime}
\vskip 0.1in
\small
\begin{tabular}{@{}cc|c|c|c@{}}
\toprule
$n$ & $p$ & OPG & RMAVE & SMAVE \\
\midrule
\multirow{3}{*}{1000}
 & 50  & \underline{0.29}  & 3.86   & \textbf{0.28} \\
 & 100 & \underline{0.68}  & 5.83   & \textbf{0.49} \\
 & 200 & \underline{1.75}  & 9.99   & \textbf{1.41} \\
\midrule
\multirow{3}{*}{2000}
 & 50  & \underline{1.04}  & 12.94  & \textbf{0.35} \\
 & 100 & \underline{2.41}  & 20.89  & \textbf{0.69} \\
 & 200 & \underline{6.25}  & 34.89  & \textbf{1.78} \\
\midrule
\multirow{3}{*}{5000}
 & 50  & \underline{6.44}  & 61.05  & \textbf{0.95} \\
 & 100 & \underline{14.16} & 98.01  & \textbf{2.02} \\
 & 200 & \underline{37.14} & 173.88 & \textbf{5.12} \\
\bottomrule
\end{tabular}
\vskip -0.1in
\end{table}

\paragraph{Subspace recovery accuracy.}
Tables~\ref{tab:accuracy}--\ref{tab:runtime} report the squared subspace
distance $m^2$ and runtime, aggregated across all 10 scenarios. The
ranking depends on $p$, and the transition is informative.

At $p \in \{50, 100, 200\}$, SMAVE reduces error by $2$--$9\times$
compared to RMAVE while running an order of magnitude faster. The
bottleneck is initialization quality: RMAVE warm-starts from the OPG
solution, whose kernel bandwidth scales as $h \sim n^{-1/(p+4)}$ in
$\mathbb{R}^p$ and degrades exponentially as $p$ grows. RMAVE's
deterministic coordinate descent then refines this starting point but
cannot escape suboptimal basins. SMAVE's three coupled design choices
address this regime directly: (i) $k$-NN neighborhoods in the
\emph{projected} space $\mathbb{R}^d$ rather than the ambient space
$\mathbb{R}^p$, eliminating the curse of dimensionality in the
localization step; (ii) uniform initialization on $\mathrm{St}(p,d)$, removing the dependence on the OPG
warm start; and (iii) mini-batch stochastic Riemannian gradients,
which explore the non-convex landscape rather than committing early
to one basin. Figure~\ref{fig:convergence}
(Appendix~\ref{app:conv-curves}) makes the landscape effect visible:
RMAVE starts with lower error but plateaus early, while SMAVE continues
improving from random initialization throughout the 100 iterations;
the wider confidence bands for SMAVE reflect stochastic gradient
variance, but this is precisely what enables escape from suboptimal
basins.

In the lower-dimensional regime (Appendix~\ref{app:synthetic-extended}, $p \in \{10, 20\}$), RMAVE matches or slightly beats SMAVE.
Here the OPG initialization is already accurate  ($h \sim n^{-1/(p+4)}$
is effective when $p$ is small) so the optimization-landscape
advantages of SMAVE no longer dominate, and a secondary effect surfaces:
RMAVE's smooth Gaussian weights produce better-calibrated local
regressions than SMAVE's hard $k$-NN indicators. This is a minor
disadvantage at low $p$ that becomes inconsequential as $p$ grows and
initialization quality takes over.

\paragraph{Accuracy--runtime tradeoff.}
Table~\ref{tab:runtime} reveals the cost of statistical accuracy. RMAVE
achieves reasonable estimates but its per-outer-iteration cost scales as
$O(n^2 p^2)$ in our implementation, dominated by the $n$ local
$p\times p$ Gram matrices; multiplied across $T$ outer iterations this
becomes prohibitive at scale. At $(n, p) = (5000, 200)$, RMAVE takes
nearly three minutes per replication while SMAVE finishes in roughly
five seconds. OPG offers a closed-form solution at lower cost but
substantially worse accuracy. SMAVE resolves this tradeoff, matching
or exceeding RMAVE's accuracy at all reported $(n, p)$ configurations
while running $10$--$35\times$ faster. Figure~\ref{fig:pareto} illustrates
the resulting Pareto frontier.

\begin{figure}[t]
\centering
\begin{tikzpicture}
\begin{axis}[
    width=0.9\columnwidth,
    height=0.7\columnwidth,
    xlabel={Mean runtime (s)},
    ylabel={Mean subspace error $m^2$},
    xmode=log,
    ymode=log,
    xmin=0.15, xmax=250,
    ymin=0.02, ymax=2.5,
    legend style={
        at={(0.5,-0.18)},
        anchor=north,
        font=\scriptsize,
        cells={anchor=west},
        draw=none,
        fill=none,
        column sep=4pt,
    },
    legend columns=3,
    grid=none,
    tick label style={font=\footnotesize},
    label style={font=\small},
    clip=false,
]
\addplot[
    only marks,
    mark=o,
    mark size=2.5pt,
    color=black,
    line width=0.8pt,
] coordinates {
    (0.29,  1.90)  
    (0.68,  1.95)  
    (1.75,  1.96)  
    (1.04,  1.89)  
    (2.41,  1.95)  
    (6.25,  1.98)  
    (6.44,  1.89)  
    (14.16, 1.94)  
    (37.14, 1.98)  
};
\addlegendentry{OPG}

\addplot[
    only marks,
    mark=square*,
    mark size=2.5pt,
    color=black!50,
] coordinates {
    (3.86,   0.54)  
    (5.83,   0.98)  
    (9.99,   1.48)  
    (12.94,  0.43)  
    (20.89,  0.73)  
    (34.89,  1.12)  
    (61.05,  0.33)  
    (98.01,  0.40)  
    (173.88, 0.69)  
};
\addlegendentry{RMAVE}

\addplot[
    only marks,
    mark=diamond*,
    mark size=3.5pt,
    color=black,
] coordinates {
    (0.28, 0.25)  
    (0.49, 0.59)  
    (1.41, 1.02)  
    (0.35, 0.11)  
    (0.69, 0.28)  
    (1.78, 0.59)  
    (0.95, 0.03)  
    (2.02, 0.13)  
    (5.12, 0.25)  
};
\addlegendentry{SMAVE (ours)}

\draw[-stealth, thick, black!60] (axis cs:0.22,0.12) -- (axis cs:0.22,0.035);
\draw[-stealth, thick, black!60] (axis cs:0.22,0.12) -- (axis cs:0.17,0.12);
\node[font=\scriptsize, black!60, anchor=south west] at (axis cs:0.23,0.035) {better};
\end{axis}
\end{tikzpicture}
\caption{Accuracy--efficiency trade-off ($p \in \{50, 100, 200\}$, $n \in \{1000, 2000, 5000\}$). SMAVE consistently occupies the favorable lower-left region.}
\label{fig:pareto}
\end{figure}
\subsection{Real-data Experiments}
\label{sec:real_data}

We evaluate SDR methods on real regression tasks where the true subspace is unknown and performance is measured by downstream prediction accuracy.

\paragraph{Datasets.}
Table~\ref{tab:datasets} summarizes four publicly available datasets spanning diverse domains and scales. \textbf{Wine Quality} \citep{cortez2009modeling} combines red and white wine samples, predicting sensory quality scores from 11 physicochemical properties. \textbf{Seoul Bike} \citep{sathishkumar2020using} predicts hourly rental demand from weather conditions and temporal features. \textbf{Pumadyn} \citep{orr2000assessing} predicts angular acceleration from 32 attributes of simulated robot arm dynamics. \textbf{Gas Sensor} \citep{fonollosa2015reservoir} predicts gas concentration from a 128-dimensional metal-oxide sensor array, providing a high-dimensional test case where the curse of dimensionality is most pronounced.

\begin{table}[t]
\centering
\caption{Real-world datasets for prediction experiments.}
\label{tab:datasets}
\vskip 0.1in
\small
\begin{tabular}{@{}lccc@{}}
\toprule
Dataset & $n$ & $p$ & $d$ candidates \\
\midrule
Wine Quality & 6,497 & 11 & $\{2, 4, 6, 8\}$ \\
Seoul Bike   & 8,760 & 14 & $\{2, 4, 6, 8, 10\}$ \\
Pumadyn      & 8,192 & 32 & $\{2, 4, 8, 12, 16\}$ \\
Gas Sensor   & 2,565 & 128 & $\{5, 10, 20, 40, 60\}$ \\
\bottomrule
\end{tabular}
\vskip -0.1in
\end{table}

\paragraph{Evaluation protocol.}
We perform $5$-fold cross-validation repeated $3$ times, yielding 15 train/test splits per configuration. Features and response are centered and scaled using statistics computed exclusively on the training fold. The test fold is then transformed using these training statistics. For each candidate reduced dimension $d$, we apply the SDR method to training data, project both splits via $Z = X\widehat{\mathbf{B}}$, and fit a distance-weighted $k$-NN regressor with $k = \min(10, \lfloor n_{\mathrm{train}}/5 \rfloor)$ on the projected training data. We select $d^* = \arg\min_d \overline{\mathrm{MSE}}_{\mathrm{CV}}(d)$ based on mean test MSE across all 15 splits and report the corresponding performance.

The candidate dimensions $d$ in Table~\ref{tab:datasets} are chosen to span a range from aggressive reduction to near-full rank, scaled to each dataset's ambient dimension $p$. For low-dimensional datasets ($p \leq 15$), candidates cover most of the range up to $p - 1$. For moderate dimensions ($p \in [26, 32]$), candidates extend to roughly $p/2$. For the high-dimensional Gas Sensor dataset ($p = 128$), candidates range from $5$ to $60$, testing whether SDR methods can identify a low-dimensional structure amid many irrelevant features. This design ensures fair comparison: all methods search over the same candidate set, and the best $d^*$ is selected independently for each method.

We compare OPG, RMAVE, and SMAVE from Section~\ref{sec:related}, along with PCA as an unsupervised baseline. RMAVE and SMAVE run for $T = 100$ iterations. SMAVE hyperparameters are fixed to the values tuned on synthetic data (Appendix~\ref{app:experiment_details}), with no dataset-specific adjustments.

\paragraph{Results.}
Figure~\ref{fig:mse-vs-dimension} (Appendix~\ref{sec:ext-real-dataset}) shows test MSE across all candidate dimensions; Table~\ref{tab:real-prediction} reports results at the optimal $d^*$ selected independently per method by cross-validation. Test MSE is computed on standardized responses to enable cross-dataset comparison. 
\begin{table}[t]
\centering
\caption{Test MSE (mean $\pm$ s.d.\ over 15 splits) and selected dimension
$d^*$. \textbf{Bold}: best MSE; \underline{underline}: second best.}
\label{tab:real-prediction}
\vskip 0.1in
\small
\begin{tabular}{@{}ll|cc|c@{}}
\toprule
Dataset & Method & MSE & $d^*$ & Time (s) \\
\midrule
Wine Quality & PCA   & $0.515 \pm 0.015$ & $8$ & $< 0.1$ \\
$n{=}6497$   & OPG   & $0.521 \pm 0.018$ & $8$ & $1.4$ \\
$p{=}11$     & RMAVE & $\mathbf{0.498 \pm 0.017}$ & $8$ & $600.2$ \\
             & SMAVE & $\underline{0.511 \pm 0.018}$ & $8$ & $0.2$ \\
\midrule
Seoul Bike & PCA   & $0.215 \pm 0.015$ & $10$ & $< 0.1$ \\
$n{=}8760$ & OPG   & $0.268 \pm 0.012$ & $8$ & $2.8$ \\
$p{=}14$   & RMAVE & $\mathbf{0.165 \pm 0.008}$ & $4$ & $327.3$ \\
           & SMAVE & $\underline{0.167 \pm 0.011}$ & $4$ & $0.3$ \\
\midrule
Pumadyn  & PCA   & $1.070 \pm 0.048$ & $16$ & $< 0.1$ \\
$n{=}8192$ & OPG & $0.174 \pm 0.009$ & $2$ & $4.4$ \\
$p{=}32$   & RMAVE & $\mathbf{0.092 \pm 0.003}$ & $2$ & $55.4$ \\
           & SMAVE & $\underline{0.127 \pm 0.014}$ & $4$ & $0.5$ \\
\midrule
Gas Sensor & PCA   & $\underline{0.020 \pm 0.011}$ & $10$ & $< 0.1$ \\
$n{=}2565$ & OPG   & $0.026 \pm 0.014$ & $20$ & $1.9$ \\
$p{=}128$  & RMAVE & $0.025 \pm 0.013$ & $5$ & $35.1$ \\
           & SMAVE & $\mathbf{0.018 \pm 0.011}$ & $5$ & $0.2$ \\
\bottomrule
\end{tabular}
\vskip -0.1in
\end{table}
\begin{table}[t]
\centering
\caption{Direct comparison of SMAVE and RMAVE for sufficient dimension reduction. 
Test MSE (mean $\pm$ std) computed over 15 independent cross-validation splits. 
The $p$-value column reports paired $t$-test results; bold marks the method with 
significantly lower error ($p < 0.05$).}
\label{tab:pairwise-smave-rmave}
\vskip 0.1in
\small
\begin{tabular}{@{}l cc c@{}}
\toprule
Dataset & RMAVE & SMAVE & $p$-value \\
\midrule
Wine Quality & $\mathbf{0.498 \pm 0.017}$ & $0.511 \pm 0.018$ & $9.2 \times 10^{-5}$ \\
Seoul Bike & $0.165 \pm 0.008$ & $0.167 \pm 0.011$ & $0.47$ \\
Pumadyn & $\mathbf{0.092 \pm 0.003}$ & $0.127 \pm 0.014$ & $9.4 \times 10^{-8}$ \\
Gas Sensor & $0.025 \pm 0.013$ & $\mathbf{0.018 \pm 0.011}$ & $0.027$ \\
\bottomrule
\end{tabular}
\vskip -0.1in
\end{table}
\paragraph{Analysis.}
Table~\ref{tab:pairwise-smave-rmave} provides a direct comparison between
SMAVE and RMAVE. RMAVE achieves significantly lower error on Wine Quality
($p = 9.2 \times 10^{-5}$) and Pumadyn ($p = 9.4 \times 10^{-8}$), though
the absolute differences are modest ($0.013$ and $0.035$, respectively)
while runtime differs by 2--3 orders of magnitude. On Seoul Bike, the two
methods are statistically indistinguishable ($p = 0.47$). On Gas Sensor
($p = 128$), SMAVE significantly outperforms RMAVE ($p = 0.027$),
consistent with the synthetic finding that the stochastic approach scales
better in high ambient dimension.

Both MAVE-based methods consistently select more parsimonious subspaces
than OPG: $d^* = 5$ on Gas Sensor versus $d^* = 20$ for OPG, achieving
lower error with $4\times$ fewer dimensions. This pattern suggests the
MAVE objective better identifies the minimal sufficient subspace. 
\section{Conclusion}
We introduced SMAVE, a scalable algorithm combining Riemannian stochastic
optimization on the Stiefel manifold with adaptive projected-space $k$-NN
localization, and showed that MAVE shares its target with OPG via
projected-space local regression. SMAVE matches or improves on RMAVE's
subspace recovery at moderate-to-high ambient dimension while running
much faster, with the gap widening as $p$ grows.
The theory developed in this work opens several natural directions for future research. First, an explicit deviation inequality for the empirical risk minimizer could be derived, extending Proposition \ref{prop:global_bound} to the estimation setting. Second, extending the analysis to iterate-dependent neighborhoods, as arising in SMAVE, would provide finite-sample guarantees for this practically important algorithm.

The scalability techniques developed here could potentially be combined with other objective functions in future work. Two concrete directions come to mind. First, the MADE deviance-based objective function \citep{adragni2018minimum} would be of interest for non-Gaussian regression model. Second, SMAVE does not readily extend to the classification setting, as its objective function is tailored to regression. The local logistic risk objective recently proposed in \cite{ahmad2024logistic} allows gradient estimation of the conditional probability function. Both represent promising avenues for extending our stochastic gradient descent approach.
\clearpage
\section*{Acknowledgements}
This research work is supported by France 2030 funding managed by the National Research Agency (ANR) as part of IA CLUSTER program, reference ANR-23-IACL-0003 - DATAIA CLUSTER.
\section*{Impact Statement}
SMAVE is a general-purpose statistical method without direct harm capabilities. However, the learned projection may retain sensitive attributes correlated with the response, or discard fairness-relevant features orthogonal to it. On the positive side, linear projections are inherently auditable, and SMAVE's computational gains lower the barrier to principled dimensionality reduction in resource-constrained settings.

\bibliography{example_paper}
\bibliographystyle{icml2026}

\newpage
\appendix
\onecolumn
\section{Riemannian Geometry Background}
\label{app:riemannian}
We provide the necessary background on Riemannian optimization over the Stiefel manifold. For a comprehensive treatment, we refer the reader to \citet{absil2008optimization,boumal2023introduction}.

\paragraph{The Stiefel and Grassmann manifolds.}
The Stiefel manifold $\St(p,d)$ is the set of $p \times d$ matrices with orthonormal columns:
\[
\St(p,d) = \left\{ \mathbf{B} \in \RR^{p \times d} : \mathbf{B}^{\top}\mathbf{B} = I_d \right\}.
\]
It is a compact embedded submanifold of $\mathbb{R}^{p \times d}$ of dimension $pd - d(d+1)/2$.

The Grassmann manifold $\mathrm{Gr}(p,d)$ is the set of all $d$-dimensional linear subspaces of $\mathbb{R}^p$. Since two matrices $\mathbf{B}, \mathbf{B}' \in \St(p,d)$ span the same subspace if and only if $\mathbf{B}' = \mathbf{B}\mathbf{Q}$ for some orthogonal matrix $\mathbf{Q} \in \mathcal{O}(d)$, the Grassmann manifold can be identified with the quotient
\[
\mathrm{Gr}(p,d) \cong \St(p,d) / \mathcal{O}(d).
\]
We therefore optimize over $\St(p,d)$ as a convenient computational
representation, with the understanding that any two solutions differing by a
right orthogonal factor represent the same subspace.

\paragraph{Tangent space and Riemannian metric.} 
The tangent space to $\St(p,d)$ at $\mathbf{B}$ is
\[
T_{\mathbf{B}} \St(p,d) = \left\{ \xi \in \mathbb{R}^{p \times d} : \mathbf{B}^\top \xi + \xi^\top \mathbf{B} = 0 \right\},
\]
which consists of matrices whose product with $\mathbf{B}^\top$ is skew-symmetric. 
Throughout the paper, $\St(p,d)$ is equipped with the
\emph{embedded Frobenius metric} inherited from $\R^{p\times d}$, that is,
$\langle\xi,\eta\rangle_{\mathbf B}=\mathrm{tr}(\xi^\top\eta)$ for $\xi,\eta\in T_\mathbf B\St(p,d)$.
All Riemannian gradients, projections, and norms below refer to this metric;
the alternative canonical metric of \citet{edelman1998geometry} would yield
different expressions.

The orthogonal projection of $Z\in\R^{p\times d}$ onto $T_{\mathbf B}\St(p,d)$
\citep[\S3.6.1]{absil2008optimization} is
\[
\Pi_{T_{\mathbf{B}}\St}(Z)= Z-\mathbf B\,\mathrm{sym}(\mathbf B^{\!\top}Z),
\qquad\mathrm{sym}(A)=\tfrac{1}{2}(A+A^{\!\top}).
\]

\paragraph{Euclidean and Riemannian gradients.}
We adopt the following convention throughout the paper. Let
$f:\mathcal U\to\mathbb R$ be a smooth function defined on an open set
$\mathcal U\subset\mathbb R^{p\times d}$ containing $\St(p,d)$. The
\emph{Euclidean} (or \emph{ambient}) gradient of $f$ at $\mathbf B$ is the matrix
of partial derivatives,
\[
  \nabla f(\mathbf B)\in\mathbb R^{p\times d},
  \qquad
  \bigl(\nabla f(\mathbf B)\bigr)_{ij}
  \;=\;\frac{\partial f}{\partial \mathbf B_{ij}}(\mathbf B),
  \quad i\in\{1,\dots,p\},\ j\in\{1,\dots,d\}.
\]
The \emph{Riemannian} gradient of $f\big|_{\St(p,d)}$ at $\mathbf B$ is the
unique tangent vector $\mathrm{grad}\,f(\mathbf B)\in T_\mathbf B\St(p,d)$
satisfying
\[
  df(\mathbf B)[\xi]
  \;=\;\langle\mathrm{grad}\,f(\mathbf B),\,\xi\rangle_F
  \qquad\text{for all }\xi\in T_\mathbf B\St(p,d),
\]
where $\langle\cdot,\cdot\rangle_F$ is the Frobenius inner product. Under the
embedded Frobenius metric on $\St(p,d)$, the Riemannian gradient is the
orthogonal projection of the Euclidean gradient onto the tangent space:
\begin{equation}\label{eq:grad-projection}
  \mathrm{grad}\,f(\mathbf B)
  \;=\;\Pi_{T_\mathbf B\St}\bigl(\nabla f(\mathbf B)\bigr).
\end{equation}
We emphasize that $\nabla f(\mathbf B)$ requires $f$ to be defined on an
ambient neighborhood of $\mathbf B$; functions defined only intrinsically on
$\St(p,d)$ do not admit a Euclidean gradient without such an extension. In
the present paper, every $f$ we differentiate (notably $F$ and $F_\varepsilon$)
is naturally defined on an open set $\mathcal U\supset\St(p,d)$ in
$\mathbb R^{p\times d}$, so no ambiguity arises.

The same symbol $\nabla$ is also used in
Section~\ref{sec:problem} and Appendix~\ref{app:sdr-local-approx} to denote
the standard gradient of the regression function $g:\mathbb R^p\to\mathbb R$,
that is, $\nabla g(x)\in\mathbb R^p$ with $(\nabla g(x))_i=\partial g/\partial x_i$.
Context resolves which meaning applies: $\nabla g(x)$ is a vector in
$\mathbb R^p$, while $\nabla f(\mathbf B)$ is a matrix in $\mathbb R^{p\times d}$.

\paragraph{Retractions.}
The exponential map $\exp_x: T_x\mathcal{M} \to \mathcal{M}$ sends a tangent vector $\xi$ to the endpoint of the geodesic starting at $x$ with initial velocity $\xi$, providing the canonical way to move from a point along a tangent direction while staying on the manifold. A retraction $R: T\mathcal{M} \to \mathcal{M}$ relaxes this by requiring only first-order agreement with the exponential map: (i) $R_x(0_x) = x$, and (ii) $\frac{d}{dt}\big|_{t=0} R_x(t\xi) = \xi$ for all $\xi \in T_x\mathcal{M}$. These conditions suffice for convergence of gradient-based methods while being cheaper to compute.
On the Stiefel manifold, we use the QR retraction, which approximates the exponential map at the first order. Given a matrix $A \in \mathbb{R}^{p \times d}$ with full column rank, let $A = QR$ be its thin QR decomposition, where $Q \in \St(p,d)$ and $R \in \mathbb{R}^{d \times d}$ is upper triangular with positive diagonal entries. We denote by $\mathrm{qf}(A) = Q$ the Q-factor of this decomposition. The QR retraction is then defined as
\[
R_{\mathbf{B}}^{\mathrm{QR}}(\xi) = \mathrm{qf}(\mathbf{B} + \xi), \quad \xi \in T_{\mathbf{B}}\St(p,d),
\]
which maps the tangent vector $\xi$ to a point on the manifold by orthonormalizing the columns of $\mathbf{B} + \xi$. This retraction has computational complexity $O(pd^2)$.
\section{Proofs of Section~\ref{sec:problem}: SDR and Local Approximation}
\label{app:sdr-local-approx}
\paragraph{Discussion on assumptions}
\label{app:assumptions}

Recall some details about the mathematical background of the paper. Let $(Y, X)$ be a random vector where $Y \in \R$ and $X \in \R^p$. We consider the regression function $g(x) := \E[Y \mid X = x]$ and write $Y = g(X) + \varepsilon$ where $\E[\varepsilon \mid X] = 0$ and $\Var(\varepsilon \mid X) = \sigma^2(X)$.

For a matrix $B \in \R^{p \times d}$ with orthonormal columns (i.e., $B \in \St(p, d)$), we denote by $P_B = BB^\top$ the orthogonal projection onto the column space of $B$.

For $x \in \R^p$ and $u > 0$, let $\Exu[\cdot]$ denote the conditional expectation given that $X \in B(x, u) = \{y \in \R^p : \|y - x\| \leq u\}$. The local objective function is defined as:
\begin{equation}\label{eq:local_objective}
D_{(x,u)}(B) := \min_{a \in \R,\, b \in \R^d} \Exu\left[(Y - a - b^\top B^\top (X - x))^2\right].
\end{equation}

\begin{assumption}[$L$-smoothness]\label{ass:smooth}
There exists $L > 0$ such that, for $u$ small enough,
\begin{equation}\label{eq:smoothness}
\Exu\left[|g(X) - g(x) - \nabla g(x)^\top (X - x)|^2\right] \leq \frac{L^2}{4} u^4.
\end{equation}
\end{assumption}

\begin{assumption}[Local covariance lower bound]\label{ass:covariance}
There exists $\lambda \in (0, 1]$ such that, for $u$ small enough,
\begin{equation}\label{eq:covariance}
\Gxu \succeq \lambda u^2 \Ip,
\end{equation}
where $\Gxu := \Exu\left[(X - \Exu[X])(X - \Exu[X])^\top\right]$ is the local covariance matrix.

\end{assumption}

\noindent\textit{On the constraint $\lambda \leq 1$:} For any real random variable $Y$ and any fixed $y \in \R$, the mean minimizes the expected squared deviation: $\E[(Y - \E[Y])^2] \leq \E[(Y - y)^2]$. Applying this to $Y = X^\top v$ for arbitrary $v \in \R^p$ and the fixed point $x^\top v$ yields $v^\top \Gxu v \leq v^\top \Exu[(X - x)(X - x)^\top] v$. Since this holds for all $v$, we obtain the matrix inequality $\Gxu \preceq \Exu[(X - x)(X - x)^\top]$. Under $\Exu$, we have $X \in B(x, u)$, so $\|X - x\| \leq u$ almost surely, which implies $\Exu[(X - x)(X - x)^\top] \preceq u^2 \Ip$. Combining these inequalities: $\Gxu \preceq u^2 \Ip$, hence $\lambda \leq 1$.

\label{app:proofs}

This section contains the proofs of the mathematical statements of the paper. 

\subsection{Proof of Proposition \ref{lem:optim_local}}

\noindent  \textbf {Proof of upper bound.} We need to show that if $(\Ip - P_B)\nabla g(x) = 0$ (i.e., $\nabla g(x) \in \spn(B)$), then
\begin{equation}\label{eq:upper}
D_{(x,u)}(B) \leq \Exu[\sigma^2(X)] + \frac{L^2}{4} u^4.
\end{equation}
Since $Y = g(X) + \varepsilon$ with $\E[\varepsilon \mid X] = 0$, we have the decomposition:
\begin{equation}\label{eq:bias_variance}
\Exu\left[(Y - a - b^\top B^\top (X - x))^2\right] = \Exu[\sigma^2(X)] + \Exu\left[(g(X) - a - b^\top B^\top (X - x))^2\right].
\end{equation}
Indeed, expanding the square and using $\Exu[\varepsilon \cdot h(X)] = \Exu[h(X) \cdot \E[\varepsilon \mid X]] = 0$ for any function $h$.

Thus, minimizing over $(a, b)$ gives:
\begin{equation}\label{eq:objective_decomp}
D_{(x,u)}(B) = \Exu[\sigma^2(X)] + \min_{a \in \R,\, b \in \R^d} \Exu\left[(g(X) - a - b^\top B^\top (X - x))^2\right].
\end{equation}

Take $a = g(x)$ and $b = B^\top \nabla g(x)$. With this choice:
\[
b^\top B^\top (X - x) = \nabla g(x)^\top B B^\top (X - x) = \nabla g(x)^\top P_B (X - x).
\]
Now we use the gradient condition: If $(\Ip - P_B)\nabla g(x) = 0$, then $P_B \nabla g(x) = \nabla g(x)$, so:
\[
b^\top B^\top (X - x) = \nabla g(x)^\top (X - x).
\]
By definition of the minimum:
\[
\min_{a, b} \Exu\left[(g(X) - a - b^\top B^\top (X - x))^2\right] \leq \Exu\left[(g(X) - g(x) - \nabla g(x)^\top (X - x))^2\right] \leq \frac{L^2}{4} u^4,
\]
where the last inequality uses Assumption~\ref{ass:smooth}. \hfill $\square$

\noindent  \textbf{Proof of lower bound:} We need to show that whenever $uL \leq \dfrac{\lambda}{2} \|(\Ip - P_B)\nabla g(x)\|_2$,
\begin{equation}\label{eq:lower}
D_{(x,u)}(B) \geq \Exu[\sigma^2(X)] + \frac{\lambda u^2}{2} \|(\Ip - P_B)\nabla g(x)\|_2^2.
\end{equation}
We adopt the following strategy. We optimize with respect to $a$ first, characterize the optimal $b$, then bound from below.

\medskip
\noindent \textit{Step 1: Optimize over $a$.} Define the centered quantities:
\begin{align}
\gxu &:= g(X) - \Exu[g(X)], \label{eq:centered_g}\\
\Xxu &:= X - \Exu[X]. \label{eq:centered_X}
\end{align}
Optimizing over $a$ (taking $a^* = \Exu[g(X)] - b^\top B^\top \Exu[X - x]$):
\begin{equation}\label{eq:optimize_a}
\min_{a \in \R} \Exu\left[(g(X) - a - b^\top B^\top (X - x))^2\right] = \Exu\left[(\gxu - b^\top B^\top \Xxu)^2\right].
\end{equation}

\medskip
\noindent \textit{Step 2: Characterize the optimal $b$.} After optimizing over $a$, we need to solve:
\begin{equation}\label{eq:reduced_problem}
\min_{b \in \R^d} L(b) := \Exu\left[(\gxu - b^\top B^\top \Xxu)^2\right],
\end{equation}
where $B \in \St(p, d)$ is fixed.

\medskip
\noindent\textit{Step 2.1: Expand the objective.} Expanding the square:
\[
L(b) = \Exu[\gxu^2] - 2b^\top B^\top \Exu[\Xxu \gxu] + b^\top B^\top \Gxu B b.
\]
Define $\muxu := \Exu[\Xxu \gxu] \in \R^p$ and $H := B^\top \Gxu B \in \R^{d \times d}$. Then:
\begin{equation}\label{eq:L_expanded}
L(b) = \Exu[\gxu^2] - 2b^\top B^\top \muxu + b^\top H b.
\end{equation}

\medskip
\noindent\textit{Step 2.2: Solve the first-order condition.} Setting $\nabla_b L(b) = 0$:
\[
B^\top \Gxu B \cdot b = B^\top \muxu.
\]

We verify that $H = B^\top \Gxu B$ is invertible. By Assumption~\ref{ass:covariance}, $\Gxu \succeq \lambda u^2 \Ip$. Since $B \in \St(p, d)$ has orthonormal columns, for any $v \in \R^d$:
\[
v^\top H v = v^\top B^\top \Gxu B v = (Bv)^\top \Gxu (Bv) \geq \lambda u^2 \|Bv\|_2^2 = \lambda u^2 \|v\|_2^2,
\]
where the last equality uses $B^\top B = \Id$. Hence $H \succeq \lambda u^2 \Id$, guaranteeing invertibility.

The unique solution is:
\begin{equation}\label{eq:optimal_b}
\bxu = (B^\top \Gxu B)^{-1} B^\top \Exu[\Xxu \gxu].
\end{equation}

\medskip
\noindent\textit{Step 3: Quadratic expansion.} Define the linearization residual:
\begin{equation}\label{eq:residual}
\Lstar := \gxu - \nabla g(x)^\top \Xxu.
\end{equation}
We can write:
\[
\gxu - \bxu^\top B^\top \Xxu = \Lstar + (\nabla g(x) - B\bxu)^\top \Xxu.
\]
Expanding the square:
\begin{align}\label{eq:quadratic_expansion}
&\Exu\left[(\gxu - \bxu^\top B^\top \Xxu)^2\right] \\
&= \Exu[(\Lstar)^2] + 2\Exu\left[\Lstar \cdot (\nabla g(x) - B\bxu)^\top \Xxu\right] + \|\nabla g(x) - B\bxu\|_{\Gxu}^2,
\end{align}
where $\|v\|_{\Gxu}^2 := v^\top \Gxu v$.

\medskip
\noindent\textit{Step 4: Bound the first term.} By Assumption~\ref{ass:smooth}:
\begin{equation}\label{eq:first_term}
\Exu[(\Lstar)^2] \leq \frac{L^2}{4} u^4.
\end{equation}

\medskip
\noindent\textit{Step 5: Bound the cross term.} Let $v := \nabla g(x) - B\bxu$. By the Cauchy--Schwarz inequality:
\[
\left|\Exu[\Lstar \cdot v^\top \Xxu]\right| \leq \sqrt{\Exu[(\Lstar)^2]} \cdot \sqrt{\Exu[(v^\top \Xxu)^2]}.
\]

For the first factor, by~\eqref{eq:first_term}:
\[
\sqrt{\Exu[(\Lstar)^2]} \leq \frac{L}{2} u^2.
\]

For the second factor, note that $\Exu[\|\Xxu\|_2^2] \leq \Exu[\|X - x\|_2^2] \leq u^2$ (since $X \in B(x,u)$ under $\Exu$, and by the variance-minimization property: the covariance is bounded by the second moment about any fixed point). Thus:
\[
\Exu[(v^\top \Xxu)^2] \leq \|v\|_2^2 \cdot \Exu[\|\Xxu\|_2^2] \leq u^2 \|v\|_2^2.
\]

Combining:
\begin{equation}\label{eq:cross_term}
\left|\Exu[\Lstar \cdot v^\top \Xxu]\right| \leq \frac{L}{2} u^2 \cdot u \|v\|_2 = \frac{L}{2} u^3 \|v\|_2.
\end{equation}

\medskip
\noindent\textit{Step 6: Lower bound on the quadratic term.} Since $B\bxu \in \spn(B)$, by the Pythagorean theorem:
\begin{equation}\label{eq:pythagoras}
\|\nabla g(x) - B\bxu\|_2^2 = \|(\Ip - P_B)\nabla g(x)\|_2^2 + \|P_B \nabla g(x) - B\bxu\|_2^2 \geq \|(\Ip - P_B)\nabla g(x)\|_2^2.
\end{equation}

By Assumption~\ref{ass:covariance}:
\begin{equation}\label{eq:quadratic_lower}
\|\nabla g(x) - B\bxu\|_{\Gxu}^2 \geq \lambda u^2 \|\nabla g(x) - B\bxu\|_2^2 \geq \lambda u^2 \|(\Ip - P_B)\nabla g(x)\|_2^2.
\end{equation}

\medskip
\noindent\textit{Step 7: Combine the bounds.} Let $\delta := \|(\Ip - P_B)\nabla g(x)\|_2$. From~\eqref{eq:quadratic_expansion}, dropping the non-negative term $\Exu[(\Lstar)^2]$ and using~\eqref{eq:cross_term} and~\eqref{eq:quadratic_lower}:
\begin{align*}
\Exu\left[(\gxu - \bxu^\top B^\top \Xxu)^2\right] &\geq \lambda u^2 \|v\|_2^2 - 2 \cdot \frac{L}{2} u^3 \|v\|_2 \\
&= \lambda u^2 \|v\|_2^2 - L u^3 \|v\|_2 \\
&= u^2 \|v\|_2 \left(\lambda \|v\|_2 - Lu\right).
\end{align*}

Since $\|v\|_2 \geq \delta$ by~\eqref{eq:pythagoras}, whenever $\lambda \delta \geq Lu$ (equivalently, $uL \leq \lambda \delta$), we have:
\[
\lambda \|v\|_2 - Lu \geq \lambda \delta - Lu \geq 0.
\]

To obtain the coefficient $\lambda/2$ in the final bound, we need:
\[
\lambda u^2 \|v\|_2^2 - Lu^3 \|v\|_2 \geq \frac{\lambda u^2}{2} \|v\|_2^2.
\]
This simplifies to:
\[
\frac{\lambda u^2}{2} \|v\|_2 \geq Lu^3 \quad \Longleftrightarrow \quad \|v\|_2 \geq \frac{2L}{\lambda} u.
\]

Since $\|v\|_2 \geq \delta$, a sufficient condition is $\delta \geq \frac{2L}{\lambda} u$, i.e.,
\[
uL \leq \frac{\lambda}{2} \delta = \frac{\lambda}{2} \|(\Ip - P_B)\nabla g(x)\|_2.
\]

Under this condition:
\[
\Exu\left[(\gxu - \bxu^\top B^\top \Xxu)^2\right] \geq \frac{\lambda u^2}{2} \|v\|_2^2 \geq \frac{\lambda u^2}{2} \delta^2 = \frac{\lambda u^2}{2} \|(\Ip - P_B)\nabla g(x)\|_2^2.
\]

Combining with the bias-variance decomposition~\eqref{eq:objective_decomp}:
\[
D_{(x,u)}(B) \geq \Exu[\sigma^2(X)] + \frac{\lambda u^2}{2} \|(\Ip - P_B)\nabla g(x)\|_2^2.
\]
\hfill $\square$

\subsection{Proof of Proposition~\ref{prop:global_bound}}
Let $\tilde{\mathbf{B}}$ be a minimizer of $D_u(\mathbf{B}) = \int D_{(x,u)}(\mathbf{B}) \, \mathbb{P}(\mathrm{d}x)$. Since the central mean subspace has dimension at most $d$, there exists $\mathbf{B}_\ast \in \mathrm{St}(p,d)$ such that $\nabla g(x) \in \mathrm{span}(\mathbf{B}_\ast)$ for $\mathbb{P}$-almost every $x$. Define $\tilde{w}(x) := (I_p - P_{\tilde{\mathbf{B}}})\nabla g(x)$.\\
\textit{Step 1: Upper bound on $D_u(\tilde{\mathbf{B}})$.} By optimality of $\tilde{\mathbf{B}}$ and Proposition~\ref{lem:optim_local}(ii) applied to $\mathbf{B}_\ast$:
\[
D_u(\tilde{\mathbf{B}}) \leq D_u(\mathbf{B}_\ast) \leq \int \mathbb{E}_{(x,u)}[\sigma^2(X)] \, \mathbb{P}(\mathrm{d}x) + \frac{L^2 u^4}{4}.
\]
\textit{Step 2: Lower bound on $D_u(\tilde{\mathbf{B}})$.} Partition the support according to whether the condition in Proposition~\ref{lem:optim_local}(i) holds:
\[
A = \Bigl\{ x : \|\tilde{w}(x)\|_2 \geq \frac{2Lu}{\lambda} \Bigr\}, \qquad A^c = \Bigl\{ x : \|\tilde{w}(x)\|_2 < \frac{2Lu}{\lambda} \Bigr\}.
\]
On $A$, we have $u \leq \frac{\lambda}{2L}\|\tilde{w}(x)\|_2$, so Proposition~\ref{lem:optim_local}(i) yields
\[
D_{(x,u)}(\tilde{\mathbf{B}}) \geq \mathbb{E}_{(x,u)}[\sigma^2(X)] + \frac{\lambda u^2}{2} \|\tilde{w}(x)\|_2^2.
\]
On $A^c$, we use the trivial bound $D_{(x,u)}(\tilde{\mathbf{B}}) \geq \mathbb{E}_{(x,u)}[\sigma^2(X)]$. Integrating over each region:
\[
D_u(\tilde{\mathbf{B}}) = \int_A D_{(x,u)}(\tilde{\mathbf{B}}) \, \mathbb{P}(\mathrm{d}x) + \int_{A^c} D_{(x,u)}(\tilde{\mathbf{B}}) \, \mathbb{P}(\mathrm{d}x) \geq \int \mathbb{E}_{(x,u)}[\sigma^2(X)] \, \mathbb{P}(\mathrm{d}x) + \frac{\lambda u^2}{2} \int_A \|\tilde{w}(x)\|_2^2 \, \mathbb{P}(\mathrm{d}x).
\]
\textit{Step 3: Combine bounds.} Comparing Steps 1 and 2:
\[
\frac{\lambda u^2}{2} \int_A \|\tilde{w}(x)\|_2^2 \, \mathbb{P}(\mathrm{d}x) \leq \frac{L^2 u^4}{4},
\]
which gives $\int_A \|\tilde{w}(x)\|_2^2 \, \mathbb{P}(\mathrm{d}x) \leq \frac{L^2 u^2}{2\lambda}$.
On $A^c$, by definition $\|\tilde{w}(x)\|_2^2 < \frac{4L^2 u^2}{\lambda^2}$, so
\[
\int_{A^c} \|\tilde{w}(x)\|_2^2 \, \mathbb{P}(\mathrm{d}x) \leq \frac{4L^2 u^2}{\lambda^2} \, \mathbb{P}(A^c) \leq \frac{4L^2 u^2}{\lambda^2}.
\]
\textit{Step 4: Conclude.} Adding the contributions and using $\lambda \leq 1$:
\[
E(\tilde{\mathbf{B}}, \mathcal{M}) = \int_A \|\tilde{w}(x)\|_2^2 \, \mathbb{P}(\mathrm{d}x) + \int_{A^c} \|\tilde{w}(x)\|_2^2 \, \mathbb{P}(\mathrm{d}x) \leq \frac{L^2 u^2}{2\lambda} + \frac{4L^2 u^2}{\lambda^2} \leq \frac{9L^2 u^2}{2\lambda^2}.
\]





\subsection{Proof of Proposition \ref{prop:stiefel_equivalence}}

Introduce the following notation. For each $j \in \{1,\ldots,n\}$, define: the weighted means $\bar{Y}^{(j)}=\sum_{i=1}^{n}w_{ij}Y_i$ and $\bar{X}^{(j)}=\sum_{i=1}^{n}w_{ij}X_i$, and   the weight matrix $W^{(j)}=\text{diag}(w_{1j},\dots, w_{nj})\in \mathbb{R}^{n\times n}$. The proof is based on the following lemma.

\begin{lemma}[Local regression subproblem]
\label{lem:local_regression}
For fixed $\mathbf{B} \in \St(p,d)$ and $j \in \{1,\ldots,n\}$, the local regression coefficients solving
\[
\min_{a,\mathbf{b}} \sum_{i=1}^{n}[Y_i-a-\mathbf{b}^{\top}\mathbf{B}^{\top}(X_i-X_j)]^2w_{ij}
\]
are given by
\begin{align*}
\hat{a}_j &= \bar{Y}^{(j)}-\hat{\mathbf{b}}_j^{\top}\mathbf{B}^{\top}(\bar{X}^{(j)}-X_j), \\
\hat{\mathbf{b}}_j &= (\mathbf{B}^{\top}G^{(j)}\mathbf{B})^{\ast}\mathbf{B}^{\top}\mu^{(j)},
\end{align*}
where $(\cdot)^{\ast}$ denotes the Moore-Penrose pseudoinverse. Moreover, the corresponding residual sum of squares equals
\[
\sigma^2_j(\mathbf{B}) := \sum_{i=1}^{n}[Y_i-\hat{a}_j-\hat{\mathbf{b}}_j^{\top}\mathbf{B}^{\top}(X_i-X_j)]^2w_{ij} = \|(I_n - H^{(j)}){W^{(j)}}^{1/2}\tilde{Y}^{(j)}\|_2^2,
\]
where $H^{(j)}={W^{(j)}}^{1/2}\tilde{Z}^{(j)}\mathbf{B}\left(\mathbf{B}^{\top}G^{(j)}\mathbf{B}\right)^{\ast}\mathbf{B}^{\top}(\tilde{Z}^{(j)})^{\top}{W^{(j)}}^{1/2}$ is the orthogonal projection onto the column space of ${W^{(j)}}^{1/2}\tilde{Z}^{(j)}\mathbf{B}$.
\end{lemma}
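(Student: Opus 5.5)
The plan is to treat this as a weighted least-squares problem and eliminate the intercept first. Here $\tilde{Z}^{(j)}$ is read as the $n\times p$ matrix with rows $(\tilde X_i^{(j)})^\top = (X_i-\bar X^{(j)})^\top$, and $\tilde Y^{(j)}$ as the vector with entries $Y_i-\bar Y^{(j)}$. We also use that the weights sum to one, $\sum_i w_{ij}=1$; this holds for the $k$-NN weights and can otherwise be arranged by normalizing.

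\textbf{Profiling out $a$.} For fixed $\mathbf b$, the objective is a convex quadratic in $a$. Setting its derivative to zero gives $\sum_i w_{ij}\bigl[Y_i-a-\mathbf b^\top\mathbf B^\top(X_i-X_j)\bigr]=0$. Hence $a=\bar Y^{(j)}-\mathbf b^\top\mathbf B^\top(\bar X^{(j)}-X_j)$, which is the stated $\hat a_j$ once $\mathbf b=\hat{\mathbf b}_j$. Substituting back, the residual becomes
\[
Y_i-a-\mathbf b^\top\mathbf B^\top(X_i-X_j)=\tilde Y_i^{(j)}-\mathbf b^\top\mathbf B^\top\tilde X_i^{(j)}.
\]
The profiled objective is therefore
\[
\bigl\|{W^{(j)}}^{1/2}\tilde Y^{(j)}-{W^{(j)}}^{1/2}\tilde Z^{(j)}\mathbf B\,\mathbf b\bigr\|_2^2 .
\]

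\textbf{Solving for $\mathbf b$.} This is an ordinary least-squares problem with design matrix $A:={W^{(j)}}^{1/2}\tilde Z^{(j)}\mathbf B$ and response $y:={W^{(j)}}^{1/2}\tilde Y^{(j)}$. A direct computation gives
\[
A^\top A=\mathbf B^\top(\tilde Z^{(j)})^\top W^{(j)}\tilde Z^{(j)}\mathbf B=\mathbf B^\top G^{(j)}\mathbf B,
\qquad
A^\top y=\mathbf B^\top\sum_i w_{ij}\tilde X_i^{(j)}\tilde Y_i^{(j)}=\mathbf B^\top\mu^{(j)} .
\]
The normal equations read $(A^\top A)\mathbf b=A^\top y$. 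They are always consistent, because $A^\top y\in\mathrm{range}(A^\top)=\mathrm{range}(A^\top A)$. The pseudoinverse choice $\hat{\mathbf b}_j=(A^\top A)^{\ast}A^\top y$ therefore solves them. It is the minimum-norm minimizer, and the unique minimizer when $\mathbf B^\top G^{(j)}\mathbf B$ is invertible.

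\textbf{Residual sum of squares.} The fitted vector is $A\hat{\mathbf b}_j=A(A^\top A)^{\ast}A^\top y=H^{(j)}y$. The optimal value is then $\|(I_n-H^{(j)})y\|_2^2$, which is the claimed expression for $\sigma^2_j(\mathbf B)$.

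The only step needing real care is showing that $H^{(j)}=A(A^\top A)^{\ast}A^\top$ is the orthogonal projector onto $\mathrm{col}(A)$ in the possibly rank-deficient case. For this I would use the SVD $A=U\Sigma V^\top$. Then $(A^\top A)^{\ast}=V(\Sigma^\top\Sigma)^{\ast}V^\top$, so $A(A^\top A)^{\ast}A^\top=U\Sigma(\Sigma^\top\Sigma)^{\ast}\Sigma^\top U^\top=U_rU_r^\top$, where $U_r$ collects the left singular vectors with nonzero singular values. Equivalently, one can invoke the identity $A(A^\top A)^{\ast}A^\top=AA^{\ast}$.

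From this, $H^{(j)}$ is symmetric and idempotent with range $\mathrm{col}(A)$. It follows that $y-H^{(j)}y\perp\mathrm{col}(A)$, and $H^{(j)}y$ is the minimizer by the Pythagorean theorem. Everything else is a routine expansion.
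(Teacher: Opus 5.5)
Your proof is correct and follows the same route as the paper. You profile out $a$ using $\sum_i w_{ij}=1$, reduce to the centered weighted least-squares problem with design ${W^{(j)}}^{1/2}\tilde Z^{(j)}\mathbf B$, and read off $\hat{\mathbf b}_j$ from the normal equations and the residual sum of squares from the projection; your checks that the normal equations are consistent and that $H^{(j)}$ is the orthogonal projector in the rank-deficient case fill in details the paper leaves implicit.
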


\begin{proof}
Setting $\partial_{a}D_{n,j}=0$ where $D_{n,j}(\mathbf{B},a,\mathbf{b})=\sum_{i=1}^{n}[Y_i-a-\mathbf{b}^{\top}\mathbf{B}^{\top}(X_i-X_j)]^2w_{ij}$, and using $\sum_{i}w_{ij}=1$, yields
\[
a = \bar{Y}^{(j)}-\mathbf{b}^{\top}\mathbf{B}^{\top}(\bar{X}^{(j)}-X_j).
\]
Substituting back, the problem reduces to the centered weighted least squares
\[
\min_{\mathbf{b}}\sum_{i=1}^{n}\left\{\tilde{Y}^{(j)}_{i}-\mathbf{b}^{\top}\mathbf{B}^{\top}\tilde{X}_i^{(j)}\right\}^{2}w_{ij} = \min_{\mathbf{b}}\|{W^{(j)}}^{1/2}(\tilde{Y}^{(j)}-\tilde{Z}^{(j)}\mathbf{B}\mathbf{b})\|_2^2,
\]
which is a weighted least squares problem with design matrix $\tilde{Z}^{(j)}\mathbf{B}\in \mathbb{R}^{n\times d}$. The normal equations give the stated formula for $\hat{\mathbf{b}}_j$, and the minimum value is the squared norm of the residual after projection.
\end{proof}

By Lemma \ref{lem:local_regression}, after profiling out the local regression coefficients $(a_j, \mathbf{b}_j)$, the criterion \eqref{eq:mave_criterion} becomes
\[
\min_{\mathbf{B}\in \St(p,d)} \sum_{j=1}^{n} \sigma^2_j(\mathbf{B}) = \min_{\mathbf{B}\in \St(p,d)} \sum_{j=1}^{n} \|(I_n - H^{(j)}){W^{(j)}}^{1/2}\tilde{Y}^{(j)}\|_2^2.
\]
Since $H^{(j)}$ is an orthogonal projection, we have the decomposition
\[
\|{W^{(j)}}^{1/2}\tilde{Y}^{(j)}\|_2^2 = \|(I_n - H^{(j)}){W^{(j)}}^{1/2}\tilde{Y}^{(j)}\|_2^2 + \|H^{(j)}{W^{(j)}}^{1/2}\tilde{Y}^{(j)}\|_2^2.
\]
Since the first term on the left-hand side is constant with respect to $\mathbf{B}$, minimizing the residual sum of squares is equivalent to maximizing $\sum_{j=1}^{n}\|H^{(j)}{W^{(j)}}^{1/2}\tilde{Y}^{(j)}\|_2^2$. Finally, a direct computation shows
\[
\|H^{(j)}{W^{(j)}}^{1/2}\tilde{Y}^{(j)}\|_2^2 = (\mu^{(j)})^{\top}\mathbf{B}(\mathbf{B}^{\top}G^{(j)}\mathbf{B})^{\ast}\mathbf{B}^{\top}\mu^{(j)},
\]
which yields the result.
For any orthogonal $\mathbf{Q}$,
\[
(\mathbf{B}\mathbf{Q})^{\top} G^{(j)} (\mathbf{B}\mathbf{Q}) = \mathbf{Q}^{\top} (\mathbf{B}^{\top} G^{(j)} \mathbf{B}) \mathbf{Q},\]
and since the pseudoinverse satisfies $(\mathbf{Q}^{\top} \mathbf{M} \mathbf{Q})^{\ast} = \mathbf{Q}^{\top} \mathbf{M}^{\ast} \mathbf{Q}$ for orthogonal $\mathbf{Q}$, we have
\begin{align*}
    &(\mathbf{B}\mathbf{Q})^{\top} \mu^{(j)} \left[ (\mathbf{B}\mathbf{Q})^{\top} G^{(j)} (\mathbf{B}\mathbf{Q}) \right]^{\ast} (\mathbf{B}\mathbf{Q})^{\top} \mu^{(j)} \\
    &= (\mu^{(j)})^{\top} \mathbf{B} \left( \mathbf{B}^{\top} G^{(j)} \mathbf{B} \right)^{\ast} \mathbf{B}^{\top} \mu^{(j)}.
\end{align*}

\section{Proof of Section~\ref{sec:related}: Equivalence Under Isotropy of MAVE and OPG}
\label{app:equiv-mave-opg}
\begin{proposition}[Equivalence OPG with MAVE]
\label{prop:mave_opg_equivalence_appendix}
When $G^{(j)} = c \cdot I_p$ for all $j$ and some constant $c > 0$, the MAVE and OPG estimators coincide. Both yield the top $d$ eigenvectors of $M := \sum_{j=1}^{n} \mu^{(j)} (\mu^{(j)})^\top$.
\end{proposition}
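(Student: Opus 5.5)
The plan is to substitute the isotropy assumption directly into both estimators and check that each reduces to the same eigenproblem.

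\emph{OPG side.} Under $G^{(j)} = c\,I_p$ with $c>0$, each local Gram matrix is invertible. The OPG gradient estimate is therefore $\hat b_j = (G^{(j)})^{-1}\mu^{(j)} = c^{-1}\mu^{(j)}$. This gives
\[
\hat\Sigma_{\mathrm{OPG}} = \frac{1}{n}\sum_{j=1}^n \hat b_j\hat b_j^\top = \frac{1}{nc^2}\, M .
\]
Multiplying a symmetric positive semidefinite matrix by the positive scalar $1/(nc^2)$ leaves its eigenvectors and their ordering unchanged. Hence the OPG estimate is spanned by the top $d$ eigenvectors of $M$.

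\emph{MAVE side.} By Proposition~\ref{prop:stiefel_equivalence}, minimizing the empirical criterion \eqref{eq:mave_criterion} over $\mathrm{St}(p,d)$ is equivalent to maximizing $F$ in \eqref{eq:objective_F}. The invertibility hypothesis holds, since $\mathbf B^\top G^{(j)}\mathbf B = c\,\mathbf B^\top\mathbf B = c\,I_d$. Substituting this into $F$ gives
\[
F(\mathbf B) = \frac1c\sum_{j=1}^n (\mu^{(j)})^\top \mathbf B\mathbf B^\top \mu^{(j)} = \frac1c\,\mathrm{tr}\bigl(\mathbf B^\top M\mathbf B\bigr).
\]
So MAVE reduces to maximizing $\mathrm{tr}(\mathbf B^\top M\mathbf B)$ over $\mathbf B^\top\mathbf B = I_d$. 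By the Ky Fan maximum principle, this maximum equals the sum of the $d$ largest eigenvalues of $M$. It is attained at any $\mathbf B$ whose columns form an orthonormal basis of a top-$d$ eigenspace.

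\emph{Conclusion and main obstacle.} Both estimators therefore return the span of the top $d$ eigenvectors of $M$, which proves the claim at the level of subspaces in $\mathrm{Gr}(p,d)$. Recall that $F$ is $\mathcal O(d)$-invariant, so MAVE is only defined up to a right orthogonal factor anyway.

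The main subtlety is \textbf{uniqueness}, which I would handle with a short remark. If $\lambda_d(M) = \lambda_{d+1}(M)$, the top-$d$ eigenspace is not unique, and both methods have the same set of admissible solutions. In that case "coincide" must be read as equality of the solution sets. I would state this explicitly, and note that under an eigengap $\lambda_d>\lambda_{d+1}$ the maximizing subspace is unique and equals the OPG subspace.

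For completeness, I would include the one-line Ky Fan argument rather than only cite it. Write $\mathrm{tr}(\mathbf B^\top M\mathbf B) = \sum_i \lambda_i\,\|\mathbf B^\top v_i\|^2$, where $(\lambda_i, v_i)$ are the eigenpairs of $M$. The weights satisfy $\|\mathbf B^\top v_i\|^2\in[0,1]$ and sum to $d$. A linear objective under these constraints is maximized by putting weight one on the $d$ largest $\lambda_i$.
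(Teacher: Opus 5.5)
Your proposal is correct and follows the paper's proof essentially line for line: $\hat b_j = c^{-1}\mu^{(j)}$ gives $\hat\Sigma_{\mathrm{OPG}} = M/(nc^2)$, and $\mathbf{B}^\top G^{(j)}\mathbf{B} = c\,I_d$ reduces $F$ to $c^{-1}\mathrm{tr}(\mathbf{B}^\top M\mathbf{B})$. Two things you add are left implicit in the paper and are worth keeping: the explicit Ky Fan argument, and the remark that ``coincide'' should be read at the level of subspaces, with uniqueness guaranteed under an eigengap $\lambda_d(M) > \lambda_{d+1}(M)$.
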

\begin{proof}
Under the assumption $G^{(j)} = c \cdot I_p$, the OPG gradient estimates simplify to $\hat{b}_j = c^{-1} \mu^{(j)}$, so
\[
\hat{\Sigma}_{\mathrm{OPG}} = \frac{1}{n c^2} \sum_{j=1}^{n} \mu^{(j)} (\mu^{(j)})^\top = \frac{1}{n c^2} M.
\]
Thus, $\hat{\mathbf{B}}_{\mathrm{OPG}}$ consists of the top $d$ eigenvectors of $M$.

For MAVE, since $\mathbf{B}^\top G^{(j)} \mathbf{B} = c \cdot \mathbf{B}^\top \mathbf{B} = c \cdot I_d$ for $\mathbf{B} \in \St(p,d)$, the objective \eqref{eq:objective_F} becomes
\[
F(\mathbf{B}) = \frac{1}{c} \sum_{j=1}^{n} \left\| \mathbf{B}^\top \mu^{(j)} \right\|^2 = \frac{1}{c} \, \mathrm{tr}\left( \mathbf{B}^\top M \mathbf{B} \right).
\]
This is maximized by the top $d$ eigenvectors of $M$.

\end{proof}
\section{Proof of Section~\ref{sec:algorithm}: SMAVE Algorithm}
\label{app:smave}
\subsection{Proof of Proposition~\ref{prop:full_gradient}}
We derive the Riemannian gradient of the MAVE objective \eqref{eq:objective_F}. We first establish the gradient for a single summand. 

\begin{lemma}
\label{lem:riemannian_gradient}
Let $G \in \mathbb{R}^{p \times p}$ be symmetric positive semi-definite and $\mu \in \mathbb{R}^p$. Define $f: \St(p,d) \to \mathbb{R}$ by $f(\mathbf{B}) = \mu^\top \mathbf{B} (\mathbf{B}^\top G \mathbf{B})^{-1} \mathbf{B}^\top \mu$, assuming $\mathbf{B}^\top G \mathbf{B}$ is invertible. Then:
\begin{enumerate}
    \item The Euclidean gradient is $\nabla f(\mathbf{B}) = 2(\mu - G\mathbf{B} u)u^\top$, where $u = (\mathbf{B}^\top G \mathbf{B})^{-1} \mathbf{B}^\top \mu$.
    \item The Euclidean gradient already lies in the tangent space: $\nabla f(\mathbf{B}) \in T_{\mathbf{B}} \St(p,d)$.
    \item Consequently, $\mathrm{grad}\, f(\mathbf{B}) = \nabla f(\mathbf{B})$.
\end{enumerate}
\end{lemma}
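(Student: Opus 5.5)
The proof is a direct matrix-calculus computation, followed by a short check that the result is tangent. Write $H(\mathbf{B}) = \mathbf{B}^\top G \mathbf{B}$, which is symmetric. Note that $f$ extends naturally to the open set of $\mathbf{B}\in\R^{p\times d}$ with $H(\mathbf{B})$ invertible, so the Euclidean gradient makes sense.

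For part 1, take a perturbation $\mathbf{B} + t\xi$ with arbitrary $\xi \in \R^{p\times d}$ and differentiate at $t=0$. There are three factors to handle. The two outer factors $\mu^\top\mathbf{B}$ and $\mathbf{B}^\top\mu$ together contribute $2\,\mu^\top \xi\, u$, using the symmetry of $H^{-1}$ and $u = H^{-1}\mathbf{B}^\top\mu$. For the inner inverse, use $d(H^{-1}) = -H^{-1}(dH)H^{-1}$ with $dH = \xi^\top G\mathbf{B} + \mathbf{B}^\top G \xi$. This contributes $-u^\top(\xi^\top G\mathbf{B} + \mathbf{B}^\top G\xi)u = -2\,u^\top \mathbf{B}^\top G \xi\, u$, where I use the symmetry of $G$ to combine the two terms. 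Rewriting both pieces as traces gives
\[
df(\mathbf{B})[\xi] = 2\,\mathrm{tr}\bigl(\xi^\top(\mu - G\mathbf{B}u)u^\top\bigr).
\]
This identifies $\nabla f(\mathbf{B}) = 2(\mu - G\mathbf{B}u)u^\top$.

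For part 2, I check the tangent-space condition $\mathbf{B}^\top Z + Z^\top\mathbf{B} = 0$ for $Z = \nabla f(\mathbf{B})$. Compute
\[
\mathbf{B}^\top Z = 2(\mathbf{B}^\top\mu - H u)u^\top .
\]
Since $Hu = \mathbf{B}^\top\mu$ by the definition of $u$, this vanishes identically. Hence $\mathbf{B}^\top Z = 0$, so certainly $\mathbf{B}^\top Z + Z^\top \mathbf{B} = 0$. This is the key observation: it is exactly the normal equation of the local regression in Lemma~\ref{lem:local_regression}, and it encodes the invariance of $f$ under $\mathbf{B}\mapsto\mathbf{B}A$ for invertible $A$.

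For part 3, apply \eqref{eq:grad-projection}: $\mathrm{grad}\,f(\mathbf{B}) = \Pi_{T_\mathbf{B}\St}(Z) = Z - \mathbf{B}\,\mathrm{sym}(\mathbf{B}^\top Z) = Z$, since $\mathbf{B}^\top Z = 0$.

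Proposition~\ref{prop:full_gradient} then follows by linearity, summing over $j$. The only step needing care is the differential of the inverse and the bookkeeping of transposes in the cross terms. This is routine; I expect no real obstacle beyond keeping the symmetry of $G$ and $H$ explicit.
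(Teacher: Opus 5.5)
Your proposal is correct and follows essentially the same route as the paper. You compute the differential via $d(H^{-1}) = -H^{-1}(dH)H^{-1}$, merge the cross terms using the symmetry of $G$, and then observe that $\mathbf{B}^\top \nabla f(\mathbf{B}) = 2(\mathbf{B}^\top\mu - Hu)u^\top = 0$, so the tangent projection acts as the identity.
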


\begin{proof}
Let $M = \mathbf{B}^\top G \mathbf{B}$, $v = \mathbf{B}^\top \mu$, and $u = M^{-1} v$, so that $f(\mathbf{B}) = v^\top u$.

\textit{Part (i).} Consider a perturbation $\Delta \in \mathbb{R}^{p \times d}$. The differentials are $dv = \Delta^\top \mu$ and $dM = \Delta^\top G \mathbf{B} + \mathbf{B}^\top G \Delta$. Using $d(M^{-1}) = -M^{-1}(dM)M^{-1}$, we compute
\begin{align*}
df &= 2(dv)^\top u + v^\top d(M^{-1}) v \\
&= 2\mu^\top \Delta u - u^\top (\Delta^\top G \mathbf{B} + \mathbf{B}^\top G \Delta) u \\
&= 2\mu^\top \Delta u - 2u^\top \mathbf{B}^\top G \Delta u \\
&= 2\,\mathrm{tr}\bigl(u (\mu - G\mathbf{B} u)^\top \Delta\bigr) \\
&= \langle 2(\mu - G\mathbf{B} u)u^\top, \Delta \rangle.
\end{align*}
Hence $\nabla f(\mathbf{B}) = 2(\mu - G\mathbf{B} u)u^\top$.

\textit{Part (ii).} To verify that $\nabla f(\mathbf{B}) \in T_{\mathbf{B}} \St(p,d)$, we check that $\mathbf{B}^\top \nabla f(\mathbf{B})$ is skew-symmetric:
\[
\mathbf{B}^\top \nabla f(\mathbf{B}) = 2(\mathbf{B}^\top\mu - \mathbf{B}^\top G\mathbf{B} u)u^\top = 2(v - Mu)u^\top = 0,
\]
since $Mu = v$ by definition of $u$. A zero matrix is trivially skew-symmetric. 

\textit{Part (iii).} Since $\nabla f(\mathbf{B}) \in T_{\mathbf{B}}\St(p,d)$, the projection is the identity: $\mathrm{grad}\, f(\mathbf{B}) = \Pi_{T_{\mathbf{B}}\St}(\nabla f(\mathbf{B})) = \nabla f(\mathbf{B})$.
\end{proof}

Proposition~\ref{prop:full_gradient} follows immediately by summing the contributions from each $j \in \{1,\ldots,n\}$:
\[
\mathrm{grad}\, F(\mathbf{B}) = \sum_{j=1}^{n} \mathrm{grad}\, f_j(\mathbf{B}) = 2 \sum_{j=1}^{n} \left( \mu^{(j)} - G^{(j)} \mathbf{B} u^{(j)} \right) (u^{(j)})^\top.
\]
\begin{remark}[Horizontality]\label{rem:horizontality}
The identity $\mathbf B^\top\nabla f(\mathbf B)=0$ is strictly stronger than tangency:
$\nabla f(\mathbf B)$ lies in the \emph{horizontal} space of the Riemannian submersion
$\St(p,d)\to\Gr(p,d)$. This reflects the $\mathcal O(d)$-invariance of $f$
(and hence of $F_\varepsilon$): the objective descends to a function on the
Grassmannian, and its Stiefel gradient automatically points in directions
transverse to the fiber.
\end{remark}

\begin{remark}[Computational simplification]
Because $\nabla F(\mathbf{B})$ already lies in $T_{\mathbf{B}}\St(p,d)$, no
explicit tangent-space projection is needed, which reduces per-iteration
cost and simplifies implementation. This identification of Riemannian and
Euclidean gradients is specific to the embedded Frobenius metric: under
the canonical metric of \citet{edelman1998geometry} or preconditioned
metrics such as the one induced by $\bar G := n^{-1}\sum_{j=1}^{n}G^{(j)}$,
an explicit metric tensor must be inverted.
\end{remark}

\begin{remark}[Regularization for singular cases]\label{rem:regularized-gradient}
When $\mathbf{B}^\top G^{(j)}\mathbf{B}$ is singular or ill-conditioned (for instance when $p>k-1$, in which case $G^{(j)}$ has rank at most
$k-1<p$) we consider the regularized objective
\[
F_\varepsilon(\mathbf{B}) = \sum_{j=1}^n (\mu^{(j)})^\top \mathbf{B}\bigl(\mathbf{B}^\top G_\varepsilon^{(j)} \mathbf{B}\bigr)^{-1} \mathbf{B}^\top \mu^{(j)},
\qquad G_\varepsilon^{(j)} := G^{(j)} + \varepsilon\,I_p,\qquad \varepsilon>0.
\]
Proposition~\ref{prop:full_gradient} applies verbatim to  $F_\varepsilon$ since 
$\mathbf{B}^\top G_\varepsilon^{(j)}\mathbf{B} = \mathbf{B}^\top G^{(j)}\mathbf{B}+\varepsilon I_d \succeq \varepsilon I_d$,
hence the inverse is well-defined for every $\mathbf{B}\in\St(p,d)$. As a result, the Riemannian gradient of $F_{\varepsilon}$ takes the same closed form, which we encode as a sum of
\emph{per-sample terms}
\[
 \mathrm{grad}\,F_\varepsilon(\mathbf{B}) = \sum_{j=1}^{n} h_{j,\varepsilon}(\mathbf{B}),\qquad 
  h_{j,\varepsilon}(\mathbf{B})
  := 2\bigl(\mu^{(j)}-G_\varepsilon^{(j)}\mathbf{B}\,u_\varepsilon^{(j)}\bigr)
        (u_\varepsilon^{(j)})^{\!\top},
  \qquad
  u_\varepsilon^{(j)}
  := \bigl(\mathbf{B}^{\!\top} G_\varepsilon^{(j)}\mathbf{B}\bigr)^{-1}\!\mathbf{B}^{\!\top}\!\mu^{(j)},
\]
with $h_{j,\varepsilon}(\mathbf{B})\in T_{\mathbf{B}}\mathrm{St}(p,d)$ without projection.
Note that considering the regularized objective $F_{\varepsilon}$ corresponds to profiling out \emph{ridge-regularized} local regression coefficients:
$$\widehat{\mathbf{b}}_{j}^{\varepsilon}=\arg \min_{\mathbf{b}\in \R^{d}}\left\{\sum_{i=1}^{n}\left(Y_i-a-\mathbf{b}^{\top}\mathbf{B}^{\top}(X_i-X_j)\right)^2w_{ij}+\varepsilon\|\mathbf{b}\|^2\right\}.$$
\end{remark}

\subsection{Convergence analysis: setup and key estimates}
\label{app:smave-proofs}
We work in the simplified setting of Section~\ref{ssec:convergence}: fixed
neighborhoods, no momentum, no gradient normalization, mini-batch size
$m\ge 1$ drawn uniformly without replacement from $\{1,\dots,n\}$,
independently across iterations. Data $\{(\mu^{(j)},G^{(j)})\}_{j=1}^n$ are
fixed; randomness comes from $\{J_t\}$, generating the filtration
$\mathcal{F}_t:=\sigma(\mathbf{B}_0, J_0,\dots, J_{t-1})$. The iterates are
\begin{equation}\label{eq:alg-update-simplified}
\mathbf{B}_{t+1} = R^{\mathrm{QR}}_{\mathbf{B}_t}(\alpha_t \mathbf{g}_t),
\qquad
\mathbf{g}_t := \frac{n}{m}\sum_{j\in J_t} h_{j,\varepsilon}(\mathbf{B}_t),
\end{equation}
with $h_{j,\varepsilon},u_\varepsilon^{(j)}$ as in
Remark~\ref{rem:regularized-gradient}, giving $\mathbf g_t\in T_{\mathbf B_t}\St(p,d)$
a.s. Since $\mathbf g_t=\phi(\mathbf B_t,J_t)$ with $J_t\perp\!\!\!\perp\mathcal F_t$
and $\mathbf B_t$ being $\mathcal F_t$-measurable, the standard freezing identity
\cite{kallenberg1997foundations} gives
$\E[\mathbf g_t\mid\mathcal F_t]=\E[\mathbf g_t\mid\mathbf B_t]$, and likewise
for $\|\mathbf g_t\|_F^{2}$. Uniform sampling yields
$\Pr(j\in J_t\mid\mathcal F_t)=m/n$, hence
$\E[\mathbf g_t\mid\mathcal F_t]=\mathrm{grad}\,F_\varepsilon(\mathbf B_t)$.

Define $\mathcal U := \{\mathbf Z\in\R^{p\times d}:\
\mathbf Z^\top G_\varepsilon^{(j)}\mathbf Z\succ 0\ \forall j\}$; this is open
and contains $\St(p,d)$ since
$v^\top\mathbf B^\top G_\varepsilon^{(j)}\mathbf B v\ge\varepsilon\|v\|_2^{2}$
for $\mathbf B\in\St(p,d)$. As $A\mapsto A^{-1}$ is real-analytic on
$\mathrm{GL}_d(\R)$, $F_\varepsilon$ extends to a $C^\infty$ function on
$\mathcal U$. 
Define the data-dependent constants
\[
  M_\mu := \max_{j}\|\mu^{(j)}\|_2,\quad
  M_G := \max_{j}\|G_\varepsilon^{(j)}\|_{\mathrm{op}},\quad
  \lambda_\varepsilon := \min_{j}\lambda_{\min}(G_\varepsilon^{(j)}).
\]
Each of $M_\mu, M_G$ is finite for any finite sample, and
$\lambda_\varepsilon\ge\varepsilon$ by construction of
$G_\varepsilon^{(j)}=G^{(j)}+\varepsilon I_p$, so these are not assumptions but
notational shorthand. All constants below depend only on
$(n,d,p,M_\mu,M_G,\lambda_\varepsilon)$ and a $(p,d)$-dependent
QR-retraction constant $c_{\mathrm R}$
\cite{absil2012projection}. We write $C:=-F_\varepsilon$
to apply standard \emph{minimization} results.
\paragraph{Sign and boundedness of $F_\varepsilon$.}
For every $\mathbf B\in\St(p,d)$ and every $j$, the matrix
$\mathbf B^{\!\top}G_\varepsilon^{(j)}\mathbf B\succeq\varepsilon I_d\succ 0$ is invertible
with a positive-definite inverse, so each summand of $F_\varepsilon$ is a
non-negative quadratic form in $\mathbf B^{\!\top}\!\mu^{(j)}$. Hence
\begin{equation}\label{eq:F-nonneg-bound}
  0\;\le\;F_\varepsilon(\mathbf B)\;\le\;\sum_{j=1}^{n}\frac{\|\mathbf B^{\!\top}\!\mu^{(j)}\|_2^{2}}{\lambda_\varepsilon}
  \;\le\;\frac{nM_\mu^{2}}{\lambda_\varepsilon},
  \qquad\forall\,\mathbf B\in\St(p,d).
\end{equation}
Let
\begin{equation}\label{eq:Delta-eps}
  \Delta_\varepsilon \;:=\; \sup_{\St(p,d)}F_\varepsilon - \inf_{\St(p,d)}F_\varepsilon
  \;\le\; \sup_{\St(p,d)}F_\varepsilon \;\le\; \frac{nM_\mu^{2}}{\lambda_\varepsilon}
\end{equation}
by~\eqref{eq:F-nonneg-bound}; this bound is deterministic and independent
of $\mathbf B_0$. 

\paragraph{Uniform pointwise bounds.}
Set $A_0:=2M_\mu^{2}(1+M_G/\lambda_\varepsilon)/\lambda_\varepsilon$. For
$\mathbf B\in\St(p,d)$, $\|\mathbf B\|_{\mathrm{op}}=1$ and
$\|(\mathbf B^\top G_\varepsilon^{(j)}\mathbf B)^{-1}\|_{\mathrm{op}}\le 1/\lambda_\varepsilon$,
hence $\|u_\varepsilon^{(j)}\|_2\le M_\mu/\lambda_\varepsilon$. The rank-one identity
$\|ab^\top\|_F=\|a\|_2\|b\|_2$ then yields, for every $\mathbf B\in\St(p,d)$
and every $j$,
\begin{equation}\label{eq:gradient-bound}
  \|h_{j,\varepsilon}(\mathbf B)\|_F\le A_0,\qquad
  \|\mathrm{grad}\,F_\varepsilon(\mathbf B)\|_F\le nA_0,\qquad
  \|\mathbf g_t\|_F\le nA_0\ \text{pointwise.}
\end{equation}
\paragraph{Retraction smoothness.}
We use the QR quadratic-error bound: there exist $\rho_0\in(0,1)$ and
$c_{\mathrm R}=c_{\mathrm R}(p,d)>0$ such that
\begin{equation}\label{eq:retr-second-order}
  \bigl\|R^{\mathrm{QR}}_\mathbf{B}(\xi)-\mathbf{B}-\xi\bigr\|_F
  \;\le\; c_{\mathrm{R}}\,\|\xi\|_F^{2},
  \qquad \mathbf B\in\St(p,d),\ \|\xi\|_F\le\rho_0,
\end{equation}
which follows from Taylor expansion of the $C^\infty$ QR retraction on the
compact $\St(p,d)$ \citep{boumal2023introduction}.

\begin{lemma}[Retraction smoothness of $F_{\varepsilon}$]\label{lem:retr-smooth}
There exist a radius $\rho=\rho(p,d)>0$ and a constant
\begin{equation}\label{eq:Leps}
  L_\varepsilon \;=\; 2c_{\mathrm R}\,nA_0\;+\;\tfrac{9}{4}\,L_\nabla,
  \qquad
  L_\nabla \;\le\; \frac{C_*\,nM_\mu^{2}(1+M_G/\lambda_\varepsilon)^{2}}{\lambda_\varepsilon},
\end{equation}
with $C_*>0$ absolute, such that $C=-F_\varepsilon$ is
retraction $L_\varepsilon$-smooth on $\St(p,d)$ w.r.t.\ the QR retraction:
\begin{equation}\label{eq:retr-smooth}
  C\bigl(R^{\mathrm{QR}}_{\mathbf B}(\xi)\bigr)
  \;\le\; C(\mathbf B)+\langle\mathrm{grad}\,C(\mathbf B),\xi\rangle
  +\tfrac{L_\varepsilon}{2}\|\xi\|_F^{2},
  \qquad \forall\,\mathbf B\in\St(p,d),\ \|\xi\|_F\le\rho.
\end{equation}
In particular $L_\varepsilon=\mathcal O\!\bigl(nM_\mu^{2}(1+M_G/\lambda_\varepsilon)^{2}/\lambda_\varepsilon\bigr)$
as $\varepsilon\to 0$, with the implied constant depending only on $(p,d)$.
\end{lemma}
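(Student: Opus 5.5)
We view $C=-F_\varepsilon$ as the restriction to $\St(p,d)$ of the $C^\infty$ function on the open set $\mathcal U\supset\St(p,d)$, and combine a Euclidean second-order Taylor bound with the QR quadratic-error estimate \eqref{eq:retr-second-order}. Fix $\mathbf B\in\St(p,d)$ and $\xi\in T_{\mathbf B}\St(p,d)$ with $\|\xi\|_F\le\rho$, and set $\mathbf B^+:=R^{\mathrm{QR}}_{\mathbf B}(\xi)$ and $e:=\mathbf B^+-\mathbf B-\xi$, so $\|e\|_F\le c_{\mathrm R}\|\xi\|_F^2$ once $\rho\le\rho_0$. Then $\|\mathbf B^+-\mathbf B\|_F\le\|\xi\|_F+c_{\mathrm R}\|\xi\|_F^2\le\tfrac32\|\xi\|_F$ if we also take $\rho\le 1/(2c_{\mathrm R})$. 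This is where the factor $9/4=(3/2)^2$ comes from.

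\textbf{Step 1 (Euclidean Lipschitz gradient on a tube).} Let $\mathcal T:=\{\mathbf Z:\operatorname{dist}(\mathbf Z,\St(p,d))\le\tfrac32\rho\}$. We show that $\nabla F_\varepsilon$ is $L_\nabla$-Lipschitz along segments in $\mathcal T$. For $\mathbf Z$ on the segment $[\mathbf B,\mathbf B^+]$, $\|\mathbf Z-\mathbf B\|_{\mathrm{op}}\le\tfrac32\rho$, so
\[
\mathbf Z^\top G_\varepsilon^{(j)}\mathbf Z\succeq \lambda_\varepsilon\,\sigma_{\min}(\mathbf Z)^2 I_d\succeq\lambda_\varepsilon(1-\tfrac32\rho)^2 I_d\succeq\tfrac{\lambda_\varepsilon}{4}I_d
\]
for $\rho\le1/3$. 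Likewise $\|\mathbf Z\|_{\mathrm{op}}\le 2$, so the segment lies in $\mathcal U$. Differentiating $\nabla f_j(\mathbf Z)=2(\mu-G\mathbf Zu)u^\top$ once more, with $du=M^{-1}(d\mathbf Z^\top\mu-dM\,u)$ and $dM=d\mathbf Z^\top G\mathbf Z+\mathbf Z^\top G\,d\mathbf Z$, gives a bound on $\|D^2f_j\|$ built from $\|u\|\lesssim M_\mu/\lambda_\varepsilon$, $\|G\|\le M_G$ and $\|M^{-1}\|\lesssim1/\lambda_\varepsilon$. Every term is of order $M_\mu^2\lambda_\varepsilon^{-1}(1+M_G/\lambda_\varepsilon)^a$ with $a\le2$. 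Summing over $j$ yields $L_\nabla\le C_*nM_\mu^2(1+M_G/\lambda_\varepsilon)^2/\lambda_\varepsilon$. The main obstacle is this bookkeeping: the worst term is $G\,d\mathbf Z\,M^{-1}G\mathbf Z u\,u^\top\sim M_G^2M_\mu^2/\lambda_\varepsilon^3$, and it must fit the stated form. It does, as $(M_G/\lambda_\varepsilon)^2/\lambda_\varepsilon$.

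\textbf{Step 2 (descent lemma).} The standard Euclidean inequality gives
\[
C(\mathbf B^+)\le C(\mathbf B)+\langle\nabla C(\mathbf B),\mathbf B^+-\mathbf B\rangle+\tfrac{L_\nabla}{2}\|\mathbf B^+-\mathbf B\|_F^2 .
\]
Split the inner product as $\langle\nabla C(\mathbf B),\xi\rangle+\langle\nabla C(\mathbf B),e\rangle$. Since $\xi$ is tangent, $\langle\nabla C(\mathbf B),\xi\rangle=\langle\mathrm{grad}\,C(\mathbf B),\xi\rangle$; indeed, by Lemma~\ref{lem:riemannian_gradient} and Remark~\ref{rem:regularized-gradient}, $\nabla C(\mathbf B)$ is already tangent. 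The remainder satisfies $|\langle\nabla C(\mathbf B),e\rangle|\le nA_0\,c_{\mathrm R}\|\xi\|_F^2$ by \eqref{eq:gradient-bound}. Finally, $\tfrac{L_\nabla}{2}\|\mathbf B^+-\mathbf B\|_F^2\le\tfrac{9}{8}L_\nabla\|\xi\|_F^2$.

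\textbf{Step 3 (collect).} Adding the three terms gives
\[
C(\mathbf B^+)\le C(\mathbf B)+\langle\mathrm{grad}\,C(\mathbf B),\xi\rangle+\Bigl(c_{\mathrm R}nA_0+\tfrac98L_\nabla\Bigr)\|\xi\|_F^2,
\]
which is exactly \eqref{eq:retr-smooth} with $L_\varepsilon=2c_{\mathrm R}nA_0+\tfrac94L_\nabla$. Take $\rho=\min(\rho_0,1/(2c_{\mathrm R}),1/3)$, which depends only on $(p,d)$. For the asymptotic claim, note $nA_0=2nM_\mu^2(1+M_G/\lambda_\varepsilon)/\lambda_\varepsilon$ is dominated by the $L_\nabla$ bound, and $c_{\mathrm R}$ depends only on $(p,d)$.
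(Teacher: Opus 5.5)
Your proposal is correct and follows essentially the same route as the paper: keep the chord $[\mathbf B,R^{\mathrm{QR}}_{\mathbf B}(\xi)]$ in a region where $\mathbf Z^\top G_\varepsilon^{(j)}\mathbf Z$ is uniformly invertible, bound the Euclidean Hessian there to get $L_\nabla$, and apply the Euclidean descent lemma. You then split $\mathbf B^+-\mathbf B=\xi+e$ using tangency of $\nabla F_\varepsilon(\mathbf B)$ and $\|\mathbf B^+-\mathbf B\|_F\le\tfrac32\|\xi\|_F$, exactly as the paper does; the only differences are cosmetic (your tube with $\rho\le 1/3$ and $\sigma_{\min}\ge 1/2$, versus the paper's compact set $\mathcal V$ with $\sigma_{\min}\ge 1/4$ and $\rho\le 1/(2(1+c_{\mathrm R}))$).
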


\begin{proof}
We prove the equivalent lower retraction smoothness for $F_\varepsilon$. For
$\mathbf{Z}\in\R^{p\times d}$, $\sigma_{\min}(\mathbf{Z}):=\sqrt{\lambda_{\min}(\mathbf{Z}^\top\mathbf{Z})}$
denotes its smallest singular value. Choose
$\rho:=\min\{\rho_0,\,1/(2(1+c_{\mathrm{R}}))\}$, so that $c_{\mathrm{R}}\rho\le\tfrac12$
and $(1+c_{\mathrm{R}}\rho)\rho\le 3/4$. Define
\[
  \mathcal{V}\;:=\;\bigl\{\mathbf{Z}\in\R^{p\times d}:
  \sigma_{\min}(\mathbf{Z})\ge\tfrac14,\;\|\mathbf{Z}\|_{\mathrm{op}}\le\tfrac74\bigr\}.
\]
Both $\sigma_{\min}$ and $\|\cdot\|_{\mathrm{op}}$ are continuous on
$\R^{p\times d}$, so $\mathcal V$ is closed; it is bounded since
$\|\mathbf Z\|_F\le\sqrt d\,\|\mathbf Z\|_{\mathrm{op}}\le 7\sqrt d/4$
on $\mathcal V$. Hence $\mathcal V$ is compact. ($\mathcal V$ is not convex,
since $\{\sigma_{\min}\ge 1/4\}$ is not convex in general; the descent
argument below uses only that the segment $[\mathbf B,y]$ lies in $\mathcal V$,
established in Step~0.)

\emph{Step 0 (Chord in $\mathcal{V}$).} Fix $\mathbf B\in\St(p,d)$,
$\xi\in T_\mathbf B\St(p,d)$ with $\|\xi\|_F\le\rho$, and set
$y:=R^{\mathrm{QR}}_\mathbf B(\xi)\in\St(p,d)$. By \eqref{eq:retr-second-order} and the triangle inequality,
\[
  \|y-\mathbf B\|_F \;\le\; \|\xi\|_F+c_{\mathrm R}\|\xi\|_F^{2}
  \;=\;(1+c_{\mathrm R}\|\xi\|_F)\|\xi\|_F
  \;\le\;(1+c_{\mathrm R}\rho)\|\xi\|_F
  \;\le\;(1+c_{\mathrm R}\rho)\rho.
\] The choice $\rho\le 1/(2(1+c_{\mathrm R}))$ gives
$c_{\mathrm R}\rho\le 1/2$ and $(1+c_{\mathrm R}\rho)\rho\le 3/(4(1+c_{\mathrm R}))\le 3/4$.
For $z_t:=(1-t)\mathbf B+ty$, $t\in[0,1]$, we have $\|z_t-\mathbf B\|_{\mathrm{op}}
\le\|z_t-\mathbf B\|_F=t\|y-\mathbf B\|_F\le 3/4$. 
Since $\mathbf B^\top\mathbf B=I_d$ gives $\sigma_{\min}(\mathbf B)=1$,
Weyl's perturbation inequality for singular values
\citep{horn2013matrix} yields
$\sigma_{\min}(z_t)\ge\sigma_{\min}(\mathbf B)-\|z_t-\mathbf B\|_{\mathrm{op}}\ge 1/4$,
and $\|z_t\|_{\mathrm{op}}\le 1+3/4=7/4$, so $z_t\in\mathcal V$. Hence
$[\mathbf B,y]\subset\mathcal V$.

\emph{Step 1 (Hessian bound on $\mathcal{V}$).} For every $\mathbf{Z}\in\mathcal{V}$,
every $j$, and every $v\in\R^d$,
\[
  v^\top\mathbf{Z}^\top G_\varepsilon^{(j)}\mathbf{Z}\,v
  =(\mathbf{Z} v)^\top G_\varepsilon^{(j)}(\mathbf{Z} v)
  \ge\lambda_\varepsilon\|\mathbf{Z} v\|_2^2
  \ge(\lambda_\varepsilon/16)\|v\|_2^2,
\]
so $M_j(\mathbf Z):=\mathbf{Z}^\top G_\varepsilon^{(j)}\mathbf{Z}$ is invertible on $\mathcal V$
with $\|M_j(\mathbf Z)^{-1}\|_{\mathrm{op}}\le 16/\lambda_\varepsilon$, and $F_\varepsilon$
is $C^\infty$ on $\mathcal V$. Write
$u_j(\mathbf Z):=M_j(\mathbf Z)^{-1}\mathbf Z^\top\mu^{(j)}$ and
$r_j(\mathbf Z):=\mu^{(j)}-G_\varepsilon^{(j)}\mathbf Z\,u_j(\mathbf Z)$, so that
$\nabla f_j(\mathbf Z)=2\,r_j(\mathbf Z)\,u_j(\mathbf Z)^\top$. Pointwise on $\mathcal V$,
\begin{equation}\label{eq:uj-rj-bounds}
  \|u_j(\mathbf Z)\|_2 \le \tfrac{16}{\lambda_\varepsilon}\!\cdot\!\tfrac{7}{4}\!\cdot\! M_\mu
  = \tfrac{28 M_\mu}{\lambda_\varepsilon},
  \qquad
  \|r_j(\mathbf Z)\|_2 \le M_\mu\bigl(1+\tfrac{49 M_G}{\lambda_\varepsilon}\bigr)
  \le 49\,M_\mu\bigl(1+M_G/\lambda_\varepsilon\bigr).
\end{equation}

Throughout this proof, $C_*$ denotes an absolute constant whose value may change
from line to line. Differentiating along $\Delta\in\R^{p\times d}$ via
$d(M_j^{-1})[\Delta]=-M_j^{-1}\bigl(\Delta^\top G_\varepsilon^{(j)}\mathbf Z
+\mathbf Z^\top G_\varepsilon^{(j)}\Delta\bigr)M_j^{-1}$,
\begin{align*}
  du_j[\Delta]
  &= -M_j^{-1}\bigl(\Delta^\top G_\varepsilon^{(j)}\mathbf Z
     +\mathbf Z^\top G_\varepsilon^{(j)}\Delta\bigr)M_j^{-1}\mathbf Z^\top\!\mu^{(j)}
     + M_j^{-1}\Delta^\top\!\mu^{(j)}, \\
  dr_j[\Delta]
  &= -G_\varepsilon^{(j)}\Delta\,u_j - G_\varepsilon^{(j)}\mathbf Z\,du_j[\Delta].
\end{align*}
Substituting $\|M_j^{-1}\|_{\mathrm{op}}\le 16/\lambda_\varepsilon$,
$\|\mathbf Z\|_{\mathrm{op}}\le 7/4$, $\|G_\varepsilon^{(j)}\|_{\mathrm{op}}\le M_G$,
and~\eqref{eq:uj-rj-bounds},
\[
  \|du_j[\Delta]\|_2 \;\le\; \frac{C_*\,M_\mu(1+M_G/\lambda_\varepsilon)}{\lambda_\varepsilon}\,\|\Delta\|_F,
  \qquad
  \|dr_j[\Delta]\|_2 \;\le\; \frac{C_*\,M_G M_\mu(1+M_G/\lambda_\varepsilon)}{\lambda_\varepsilon}\,\|\Delta\|_F.
\]
By the rank-one Frobenius identity $\|ab^\top\|_F=\|a\|_2\|b\|_2$,
\[
  \|\nabla^{2}f_j(\mathbf Z)[\Delta]\|_F
  \;\le\; 2\bigl(\|dr_j[\Delta]\|_2\|u_j\|_2+\|r_j\|_2\|du_j[\Delta]\|_2\bigr).
\]
Substituting~\eqref{eq:uj-rj-bounds} and the displayed bounds on
$\|du_j[\Delta]\|_2,\|dr_j[\Delta]\|_2$ gives two summands:
\[
  \|dr_j[\Delta]\|_2\|u_j\|_2
  \le \frac{C_*\,M_G\,M_\mu^{2}(1+M_G/\lambda_\varepsilon)}{\lambda_\varepsilon^{2}}\|\Delta\|_F,
  \qquad
  \|r_j\|_2\|du_j[\Delta]\|_2
  \le \frac{C_*\,M_\mu^{2}(1+M_G/\lambda_\varepsilon)^{2}}{\lambda_\varepsilon}\|\Delta\|_F.
\]
Using $M_G/[\lambda_\varepsilon(1+M_G/\lambda_\varepsilon)]= M_G/(\lambda_\varepsilon+M_G)\le 1$,
the first summand is bounded by the second; hence
\[
  \|\nabla^{2}f_j(\mathbf Z)[\Delta]\|_F
  \;\le\; \frac{C_*\,M_\mu^{2}(1+M_G/\lambda_\varepsilon)^{2}}{\lambda_\varepsilon}\,\|\Delta\|_F.
\]
Summing over $j$ gives the Lipschitz constant of $\nabla F_\varepsilon$ on $\mathcal V$:
\[
  L_\nabla \;\le\; \frac{C_*\,nM_\mu^{2}(1+M_G/\lambda_\varepsilon)^{2}}{\lambda_\varepsilon}.
\]
\emph{Step 2 (Euclidean descent on $[\mathbf{B},y]\subset\mathcal{V}$).}
Since $\nabla F_\varepsilon$ is $L_\nabla$-Lipschitz on the convex segment,
\[
  F_\varepsilon(y)\;\ge\;F_\varepsilon(\mathbf{B})+\langle\nabla F_\varepsilon(\mathbf{B}),y-\mathbf{B}\rangle
  -\tfrac{L_\nabla}{2}\|y-\mathbf{B}\|_F^2.
\]

\emph{Step 3 (Combine).} By Lemma~\ref{lem:riemannian_gradient},
$\nabla F_\varepsilon(\mathbf B)=\mathrm{grad}\,F_\varepsilon(\mathbf B)\in T_\mathbf B\St(p,d)$,
so $\langle\nabla F_\varepsilon(\mathbf B),\eta\rangle=\langle\mathrm{grad}\,F_\varepsilon(\mathbf B),\eta\rangle$ for any $\eta\in\R^{p\times d}$. Writing
$y-\mathbf{B}=\xi+(y-\mathbf{B}-\xi)$, Cauchy--Schwarz with
\eqref{eq:retr-second-order} and \eqref{eq:gradient-bound} gives
$$|\langle\mathrm{grad}\,F_\varepsilon(\mathbf{B}),y-\mathbf{B}-\xi\rangle|\le nA_0\,c_{\mathrm{R}}\|\xi\|_F^2$$
and $\|y-\mathbf{B}\|_F\le(1+c_{\mathrm{R}}\rho)\|\xi\|_F\le\tfrac32\|\xi\|_F$, so
$\tfrac{L_\nabla}{2}\|y-\mathbf{B}\|_F^2\le\tfrac{9L_\nabla}{8}\|\xi\|_F^2$. Thus
\[
  F_\varepsilon(y)\;\ge\;F_\varepsilon(\mathbf{B})+\langle\mathrm{grad}\,F_\varepsilon(\mathbf{B}),\xi\rangle
  -\underbrace{\bigl(c_{\mathrm{R}}\,nA_0+\tfrac{9}{8}L_\nabla\bigr)}_{=L_\varepsilon/2}\|\xi\|_F^2,
\]
which is~\eqref{eq:retr-smooth}, and substituting
$A_0=2M_\mu^{2}(1+M_G/\lambda_\varepsilon)/\lambda_\varepsilon$ together with the
bound on $L_\nabla$ yields~\eqref{eq:Leps}.
\end{proof}
Lemma~\ref{lem:retr-smooth} is the sole place where the geometry of
$\St(p,d)$ and the QR retraction enter the convergence analysis
quantitatively; the remaining ingredient is the without-replacement
variance bound, which we establish next.

\paragraph{Without-replacement variance identity.}
Applying the scalar without-replacement variance formula
\citep{cochran1977sampling} entrywise yields
\begin{equation}\label{eq:variance-wor}
  \E\bigl[\|\mathbf{g}_t-\mathrm{grad}\,F_\varepsilon(\mathbf B_t)\|_F^{2}\mid\mathcal{F}_t\bigr]
  \;=\;\sigma_\varepsilon^{2}(\mathbf B_t)
  \;:=\;\frac{n(n-m)}{m(n-1)}\sum_{j=1}^{n}\|h_{j,\varepsilon}(\mathbf B_t)-\bar h_\varepsilon(\mathbf B_t)\|_F^{2},
\end{equation}
where $\bar h_\varepsilon(\mathbf B):=n^{-1}\mathrm{grad}\,F_\varepsilon(\mathbf B)$.
The bound $\sum_j\|h_{j,\varepsilon}-\bar h_\varepsilon\|_F^{2}\le\sum_j\|h_{j,\varepsilon}\|_F^{2}\le nA_0^{2}$
[from~\eqref{eq:gradient-bound}] gives the uniform upper bound
\begin{equation}\label{eq:sigma-eps}
  \sigma_\varepsilon^{2}\;:=\;\frac{n^{2}(n-m)A_0^{2}}{m(n-1)},
  \qquad
  \E\bigl[\|\mathbf g_t\|_F^{2}\mid\mathcal F_t\bigr]\le\|\mathrm{grad}\,F_\varepsilon(\mathbf B_t)\|_F^{2}+\sigma_\varepsilon^{2}.
\end{equation}
In particular $\sigma_\varepsilon^{2}=0$ at $m=n$, and dropping the
finite-population factor $(n-m)/[m(n-1)]$ recovers the crude pointwise
bound $\|\mathbf g_t\|_F\le nA_0$ from~\eqref{eq:gradient-bound}.

\subsection{Proof of Proposition~\ref{thm:asym}: asymptotic convergence}
We verify the hypotheses of Theorem~2 in~\citet{bonnabel2013stochastic}
applied to $C=-F_\varepsilon$. The manifold $\St(p,d)$ is connected, compact, and $C^\infty$ embedded in $\R^{p\times d}$, the QR retraction is $C^\infty$
\cite{absil2008optimization}. The step sizes
$\alpha_t=\alpha_0/(1+\gamma t)$ satisfy $\sum_t\alpha_t=\infty$ and
$\sum_t\alpha_t^{2}<\infty$, and the pointwise
bound~\eqref{eq:gradient-bound} gives the uniform gradient bound
$\|\mathbf g_t\|_F\le nA_0$ on $\St(p,d)$.

The remaining hypothesis --- a uniform quadratic upper bound on
$C\circ R^{\mathrm{QR}}_\mathbf B$ along the iterates --- is supplied by
the retraction-smoothness inequality~\eqref{eq:retr-smooth} provided each
step lies within its smoothness radius $\rho$. Imposing
\begin{equation}\label{eq:asym-alpha-threshold}
  \alpha_0 \;\le\; \bar\alpha_\varepsilon \;=\; \min\!\bigl\{1/L_\varepsilon,\,\rho/(nA_0)\bigr\}
\end{equation}
together with~\eqref{eq:gradient-bound} yields
$\|\alpha_t\mathbf g_t\|_F\le\alpha_0\,nA_0\le\rho$ for all $t$, so
\eqref{eq:retr-smooth} applies at every iteration with the deterministic
constant $L_\varepsilon$. The same bound ensures the QR retraction is
well-defined, since
$\sigma_{\min}(\mathbf B+\alpha_t\mathbf g_t)\ge 1-\rho>0$. The threshold
$\bar\alpha_\varepsilon$ in~\eqref{eq:asym-alpha-threshold} is the
"explicit threshold" referenced in the statement of
Proposition~\ref{thm:asym}.

By Theorem~2 in~\citet{bonnabel2013stochastic}, $F_\varepsilon(\mathbf B_t)$
converges a.s.\ and $\mathrm{grad}\,F_\varepsilon(\mathbf B_t)\to 0$ a.s.,
proving (i) and (ii). Conclusion (iii) follows: on the probability-one
event from (ii), any accumulation point $\mathbf B^\star(\omega)$ of
$\{\mathbf B_t(\omega)\}$ satisfies
$\mathrm{grad}\,F_\varepsilon(\mathbf B^\star(\omega))=0$ by continuity of
$\mathrm{grad}\,F_\varepsilon$ on the compact $\St(p,d)$.

\subsection{Proof of Theorem~\ref{thm:non-asym}: non-asymptotic rate}
\label{app:non-asym}
We state and prove the explicit-constant version
(Theorem~\ref{thm:non-asym-precise}); the main-text bound is immediate
from~\eqref{eq:rate-alpha}--\eqref{eq:rate-optimal}. The two ingredients
are the retraction smoothness from Lemma~\ref{lem:retr-smooth} and the
variance identity~\eqref{eq:variance-wor}--\eqref{eq:sigma-eps}, both
established in the setup.
\begin{theorem}[Non-asymptotic rate, explicit]\label{thm:non-asym-precise}
Run \eqref{eq:alg-update-simplified} with constant step size
$\alpha\in(0,\bar\alpha_\varepsilon]$, where
$\bar\alpha_\varepsilon:=\min\bigl\{1/L_\varepsilon,\,\rho/(nA_0)\bigr\}$.
Then for all $T \ge 1$,
\begin{equation}\label{eq:rate-alpha}
  \frac{1}{T}\sum_{t=0}^{T-1}
  \E\bigl[\|\mathrm{grad}\,F_\varepsilon(\mathbf{B}_t)\|_F^2\bigr]
  \;\le\; \frac{2\Delta_\varepsilon}{\alpha T} + L_\varepsilon\alpha\sigma_\varepsilon^2.
\end{equation}
For $T \ge T_0 := 2\Delta_\varepsilon/(L_\varepsilon\sigma_\varepsilon^2\bar\alpha_\varepsilon^2)$,
the choice $\alpha^\star = \sqrt{2\Delta_\varepsilon/(L_\varepsilon\sigma_\varepsilon^2 T)}$
is admissible and yields
\begin{equation}\label{eq:rate-optimal}
  \frac{1}{T}\sum_{t=0}^{T-1}
  \E\bigl[\|\mathrm{grad}\,F_\varepsilon(\mathbf{B}_t)\|_F^2\bigr]
  \;\le\; 2\sqrt{\frac{2L_\varepsilon\Delta_\varepsilon\sigma_\varepsilon^2}{T}}
  \;=\; \frac{2nA_0}{\sqrt{m}}
        \sqrt{\frac{2L_\varepsilon\Delta_\varepsilon(n-m)}{(n-1)\,T}}
  \;=\; O\!\left(\tfrac{1}{\sqrt{T}}\right).
\end{equation}
\end{theorem}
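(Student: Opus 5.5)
The argument is the standard descent-lemma argument for nonconvex SGD, transported to $\St(p,d)$ through Lemma~\ref{lem:retr-smooth}. Apply that lemma with $C=-F_\varepsilon$ at $\mathbf B=\mathbf B_t$ and the ascent step $\xi=\alpha\mathbf g_t$, so that $\mathrm{grad}\,C=-\mathrm{grad}\,F_\varepsilon$. First check that the step is admissible. Since $\alpha\le\bar\alpha_\varepsilon\le\rho/(nA_0)$ and $\|\mathbf g_t\|_F\le nA_0$ pointwise by \eqref{eq:gradient-bound}, we have $\|\alpha\mathbf g_t\|_F\le\rho$ almost surely. Hence \eqref{eq:retr-smooth} applies at every iteration, and the QR retraction is well defined. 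This yields, almost surely,
\[
F_\varepsilon(\mathbf B_{t+1})\ge F_\varepsilon(\mathbf B_t)+\alpha\langle\mathrm{grad}\,F_\varepsilon(\mathbf B_t),\mathbf g_t\rangle-\tfrac{L_\varepsilon\alpha^2}{2}\|\mathbf g_t\|_F^2 .
\]

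Next, take conditional expectations given $\mathcal F_t$. The unbiasedness $\E[\mathbf g_t\mid\mathcal F_t]=\mathrm{grad}\,F_\varepsilon(\mathbf B_t)$ turns the inner product into $\|\mathrm{grad}\,F_\varepsilon(\mathbf B_t)\|_F^2$. For the second moment, use \eqref{eq:sigma-eps}, namely $\E[\|\mathbf g_t\|_F^2\mid\mathcal F_t]\le\|\mathrm{grad}\,F_\varepsilon(\mathbf B_t)\|_F^2+\sigma_\varepsilon^2$. Together these give
\[
\E[F_\varepsilon(\mathbf B_{t+1})\mid\mathcal F_t]\ge F_\varepsilon(\mathbf B_t)+\alpha\Bigl(1-\tfrac{L_\varepsilon\alpha}{2}\Bigr)\|\mathrm{grad}\,F_\varepsilon(\mathbf B_t)\|_F^2-\tfrac{L_\varepsilon\alpha^2}{2}\sigma_\varepsilon^2 .
\]
Because $\alpha\le 1/L_\varepsilon$, the factor $1-L_\varepsilon\alpha/2$ is at least $1/2$. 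Take full expectations and sum over $t=0,\dots,T-1$; the function values telescope. Then bound $\E F_\varepsilon(\mathbf B_T)-\E F_\varepsilon(\mathbf B_0)\le\Delta_\varepsilon$ using \eqref{eq:Delta-eps}. This gives $\tfrac{\alpha}{2}\sum_t\E\|\mathrm{grad}\,F_\varepsilon(\mathbf B_t)\|_F^2\le\Delta_\varepsilon+\tfrac{L_\varepsilon\alpha^2T}{2}\sigma_\varepsilon^2$. Dividing by $\alpha T/2$ yields exactly \eqref{eq:rate-alpha}.

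For \eqref{eq:rate-optimal}, minimize the right-hand side $2\Delta_\varepsilon/(\alpha T)+L_\varepsilon\alpha\sigma_\varepsilon^2$ over $\alpha$. The minimizer is $\alpha^\star=\sqrt{2\Delta_\varepsilon/(L_\varepsilon\sigma_\varepsilon^2T)}$, where the two terms are equal, so the bound becomes $2\sqrt{2L_\varepsilon\Delta_\varepsilon\sigma_\varepsilon^2/T}$. Admissibility $\alpha^\star\le\bar\alpha_\varepsilon$ is equivalent to $T\ge T_0$. Substituting $\sigma_\varepsilon^2=n^2(n-m)A_0^2/(m(n-1))$ gives the displayed closed form. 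If $m=n$, then $\sigma_\varepsilon^2=0$ and the first bound is already $O(1/T)$, so $\alpha^\star$ is not needed in that case.

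The only step needing care is admissibility of the step: the retraction-smoothness inequality holds only inside radius $\rho$. This is handled by the almost-sure bound $\|\mathbf g_t\|_F\le nA_0$, which holds deterministically because the data are fixed. Everything else is standard bookkeeping, with the conditioning justified by the freezing identity already established in the setup.
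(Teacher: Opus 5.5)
Your proposal is correct and follows essentially the same route as the paper. You apply retraction smoothness of $-F_\varepsilon$ with $\xi=\alpha\mathbf g_t$ (admissible since $\alpha\le\rho/(nA_0)$), use unbiasedness and the without-replacement second-moment bound under conditional expectation, invoke $\alpha\le 1/L_\varepsilon$ to get the factor $1/2$, telescope against $\Delta_\varepsilon$, and finally balance the two terms of $a/\alpha+b\alpha$. Your remark on the degenerate case $m=n$ (where $\sigma_\varepsilon^2=0$, so $T_0$ and $\alpha^\star$ are undefined) is a sensible addition that the paper only addresses in its later discussion.
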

\begin{proof}
The constraint $\alpha\le\rho/(nA_0)$ and~\eqref{eq:gradient-bound} enforce
$\|\alpha\mathbf g_t\|_F\le\rho$ pointwise, so Lemma~\ref{lem:retr-smooth}
applies with $\xi=\alpha\mathbf g_t$. Taking conditional expectation, using
unbiasedness and~\eqref{eq:sigma-eps},
\[
  \E[C(\mathbf B_{t+1})\mid\mathcal F_t]
  \le C(\mathbf B_t)-\alpha\|\mathrm{grad}\,F_\varepsilon(\mathbf B_t)\|_F^{2}
  +\tfrac{L_\varepsilon\alpha^{2}}{2}\bigl(\|\mathrm{grad}\,F_\varepsilon(\mathbf B_t)\|_F^{2}+\sigma_\varepsilon^{2}\bigr).
\]
The constraint $\alpha\le 1/L_\varepsilon$ gives $\alpha-L_\varepsilon\alpha^{2}/2\ge\alpha/2$,
yielding the standard non-convex stochastic descent recursion
\cite{boumal2019global}:
\begin{equation}\label{eq:per-step}
  \E[C(\mathbf B_{t+1})\mid\mathcal F_t]
  \le C(\mathbf B_t)-\tfrac{\alpha}{2}\|\mathrm{grad}\,F_\varepsilon(\mathbf B_t)\|_F^{2}
  +\tfrac{L_\varepsilon\alpha^{2}}{2}\sigma_\varepsilon^{2}.
\end{equation}
Telescoping \eqref{eq:per-step} and taking full expectations gives
\begin{eqnarray*}
  \tfrac{\alpha}{2}\sum_{t=0}^{T-1}\E\!\bigl[\|\mathrm{grad}\,F_\varepsilon(\mathbf B_t)\|_F^{2}\bigr]
  &\;\le\;& \E[C(\mathbf B_0)]-\E[C(\mathbf B_T)]+T\tfrac{L_\varepsilon\alpha^{2}}{2}\sigma_\varepsilon^{2}\\
  &\;=\;& \E[F_\varepsilon(\mathbf B_T)]-\E[F_\varepsilon(\mathbf B_0)]+T\tfrac{L_\varepsilon\alpha^{2}}{2}\sigma_\varepsilon^{2}\\
  &\;\le\;& \Delta_\varepsilon+T\tfrac{L_\varepsilon\alpha^{2}}{2}\sigma_\varepsilon^{2},
\end{eqnarray*}
where the final inequality uses $\E[F_\varepsilon(\mathbf B_T)]\le\sup_\St F_\varepsilon$,
$\E[F_\varepsilon(\mathbf B_0)]\ge\inf_\St F_\varepsilon$, and the definition
\eqref{eq:Delta-eps}. Dividing by $\alpha T/2$ yields \eqref{eq:rate-alpha}. The right-hand side of~\eqref{eq:rate-alpha} is of
the form $a/\alpha+b\alpha$ with $a=2\Delta_\varepsilon/T,\ b=L_\varepsilon\sigma_\varepsilon^{2}$,
minimized at $\alpha^\star=\sqrt{a/b}$ with value $2\sqrt{ab}$; the admissibility
$\alpha^\star\le\bar\alpha_\varepsilon$ is equivalent to $T\ge T_0$.
Substituting~\eqref{eq:sigma-eps} gives~\eqref{eq:rate-optimal}.
\end{proof}

\subsection{Discussion}

\paragraph{Finite-population correction.}
The factor $(n-m)/(n-1)\in[0,1]$ is the standard finite-population
correction. At $m=n$, $\sigma_\varepsilon^2=0$ and \eqref{eq:rate-alpha}
reduces to the deterministic bound $2\Delta_\varepsilon/(\alpha T)$,
matching full Riemannian gradient descent.
\begin{remark}[Regularization bias]\label{rem:reg-bias}
On $\mathcal{B}_c := \{\mathbf{B}\in\St(p,d):\min_j\lambda_{\min}(\mathbf{B}^{\!\top}G^{(j)}\mathbf{B})\ge c\}$
with $c>0$, the identity $M^{-1}-(M+\varepsilon I_d)^{-1}=\varepsilon\,M^{-1}(M+\varepsilon I_d)^{-1}$
applied to $M=\mathbf{B}^{\!\top}G^{(j)}\mathbf{B}$ yields
\[
  0 \;\le\; F(\mathbf{B})-F_\varepsilon(\mathbf{B})
  \;\le\; \frac{n\,M_\mu^2\,\varepsilon}{c^2},
  \qquad \mathbf{B}\in\mathcal{B}_c.
\]
Taking $\varepsilon\ll c$ recovers the unregularized objective uniformly on $\mathcal{B}_c$.
\end{remark}
\paragraph{Behaviour as $\varepsilon\to 0$.}
When $p>k-1$, $G^{(j)}$ is rank-deficient and $\lambda_\varepsilon=\varepsilon$.
Substituting into \eqref{eq:Leps}–\eqref{eq:sigma-eps} and using
$F_\varepsilon\le nM_\mu^2/\lambda_\varepsilon$ on $\St(p,d)$ yields
$L_\varepsilon=\Theta(n\,\varepsilon^{-3})$,
$\sigma_\varepsilon^2=\Theta(n^2\varepsilon^{-4}/m)$,
$\Delta_\varepsilon=\mathcal O(n\,\varepsilon^{-1})$ by \eqref{eq:Delta-eps}, and
$\bar\alpha_\varepsilon=\Theta(\varepsilon^3)$ (for $\varepsilon<1$, the binding
constraint is $1/L_\varepsilon=\Theta(\varepsilon^3)<\rho/(nA_0)=\Theta(\varepsilon^2)$).
\emph{The admissibility threshold $T_0$ remains $O(1)$ in $\varepsilon$}:
$T_0=2\Delta_\varepsilon/[L_\varepsilon\sigma_\varepsilon^{2}\bar\alpha_\varepsilon^{2}]
=O(\varepsilon^{-1}/[\varepsilon^{-3}\cdot\varepsilon^{-4}\cdot\varepsilon^{6}])=O(1)$.
The overall rate is $\mathcal O(n^2\varepsilon^{-4}/\sqrt{mT})$. The degradation
is intrinsic since $F_0$ is undefined when $\mathbf B^\top G^{(j)}\mathbf B$ is
rank-deficient. In practice one fixes
$\varepsilon\ll\min_j\lambda_{\min}(G^{(j)})$ when the latter is positive,
recovering the unregularized minimizer up to the bias of
Remark~\ref{rem:reg-bias}.

\section{Implementation Details}
\label{app:implementation}
\paragraph{Kernel and neighborhood-size calibration.}
All kernel-based methods (OPG, RMAVE) use the Gaussian kernel
$K_h(u) = \exp(-\|u\|^2 / 2h^2)$ with Silverman's rule-of-thumb bandwidth
\citep{silverman2018density},
\begin{equation}
\label{eq:silverman}
h = \hat{\sigma}\, n^{-1/(q+4)},
\end{equation}
where $q$ is the operating dimension ($q = p$ for OPG, $q = d$ for RMAVE)
and $\hat{\sigma}$ is the median absolute deviation across coordinates,
scaled by $1.4826 = 1/\Phi^{-1}(3/4)$. For $k$-NN methods, we match the
expected kernel neighborhood size ${\sim}\, n h^q$ by setting
$k = \lceil n^{4/(q+4)} \rceil$, clipped to $[k_{\min}, n/3]$ with
$k_{\min} = 50$ for OPG-$k$NN (operating in $\mathbb{R}^p$) and
$k_{\min} = 20$ for RMAVE-$k$NN and SMAVE (operating in $\mathbb{R}^d$).

\paragraph{Iterative methods.}
RMAVE, RMAVE-$k$NN, and SMAVE all run for $T = 100$ iterations. RMAVE and
RMAVE-$k$NN are warm-started from the OPG solution, following the original
RMAVE protocol \citep{xia2002adaptive}. SMAVE is initialized uniformly on
$\mathrm{St}(p,d)$ by orthonormalizing a $p \times d$ matrix with i.i.d.\
$\mathcal{N}(0,1)$ entries via QR decomposition. All matrix inversions are
regularized with $\varepsilon = 10^{-5}$. Implementations use Python with
NumPy/SciPy; random seeds are fixed for reproducibility.
\paragraph{Runtime accounting.}
Reported runtimes are wall-clock seconds. For RMAVE and RMAVE-$k$NN, the timer covers only the
refinement loop; the shared OPG warm start is timed once and reported
separately under OPG. SMAVE's runtime includes its random Stiefel draw.
\paragraph{RMAVE-$k$NN.}
RMAVE-$k$NN is identical to RMAVE except that the dense Gaussian kernel
weights are replaced, at each iteration, by uniform $k$-NN weights in the
projected space:
\[
    w_{ij} =
    \begin{cases}
        1/k & \text{if } X_i \in \mathcal{N}_k(\widehat{\mathbf{B}}, X_j), \\
        0   & \text{otherwise,}
    \end{cases}
\]
with $k = \lceil n^{4/(d+4)} \rceil$ matching SMAVE's default. The
coordinate-descent solver, direction update, re-orthogonalization step,
OPG initialization, and convergence criterion are unchanged. This isolates
the effect of replacing kernel localization with $k$-NN localization in
RMAVE's algorithmic structure (see Appendix~\ref{app:ablation-full}).
\paragraph{SMAVE hyperparameters.}
\label{app:experiment_details}
We tuned SMAVE hyperparameters via grid search over 7,560 synthetic configurations: five link functions $\times$ three sample sizes ($n \in \{1000, 2000, 5000\}$) $\times$ two dimensions ($p \in \{10, 20\}$) $\times$ five replications $\times$ 252 hyperparameter combinations. The tuning grid covered:
\begin{itemize}
    \item Initial step size: $\alpha_0 \in \{0.05, 0.08, 0.1, 0.12, 0.15, 0.18, 0.2\}$
    \item Step size decay: $\gamma \in \{0.02, 0.04, 0.06\}$
    \item Momentum: $\beta \in \{0.0, 0.5, 0.9\}$
    \item $k$-NN refresh period: $\tau \in \{20, 25, 30, 50\}$
\end{itemize}

Analysis revealed that momentum hyperparameter $\beta = 0.9$ consistently reduced error by approximately $3\times$ compared to no momentum, making it the most impactful hyperparameter. Larger step sizes ($\alpha_0 \geq 0.15$) improved convergence within the 100-iteration budget. Lower k-NN refresh frequencies (20--30) yielded better accuracy but increased runtime, whereas $\tau = 50$ provided a 20\% speedup with moderate accuracy loss. Balancing these trade-offs, we selected:
\[
\alpha_0 = 0.2, \quad \gamma = 0.02, \quad \beta = 0.9, \quad \tau = 25.
\]
The mini-batch size was set adaptively as $m = \min(200, \max(50, n/50))$. These hyperparameters, tuned exclusively on synthetic data, were held fixed for all real-data experiments.

\paragraph{SMAVE sensitivity to neighborhood size and graph refresh period.}
\label{sec:sensitivity}


Figures~\ref{fig:sensitivity_k}--\ref{fig:sensitivity_tau} report a systematic sweep of both $k$ and $\tau$ hyperparameters
across all six $(n,p) \in \{2000,5000\} \times \{20,50,100\}$
configurations, averaged over the five link functions and both covariance
structures (identity and AR(1)) with 10 replications each.
\begin{itemize}
\item For the neighbourhood size $k$, performance degrades sharply only when
$k/k^* \lesssim 0.25$ (severely undersmoothed), while the error curve is
nearly flat over $k/k^* \in [0.5, 3]$: halving or tripling the default
$k^* = \lfloor n^{4/(d+4)} \rfloor$ changes $m^2$ by less than a factor of
two in all settings.
The flat region widens with $n$, consistent with the default entering the
correct asymptotic regime.

\item For the refresh period $\tau$, accuracy is stable across
$\tau \in [5, 40]$ in every panel, and only a frozen graph ($\tau = 100$)
causes a clear increase in error.
The default $\tau = 25$ lies well within the stable region.
\end{itemize}
Together, these results show that neither $k$ nor $\tau$ requires careful
hand-tuning: the automatic rule for $k$ and the default $\tau$ are both
robust choices.

%
%

\begin{figure}[t]
\centering
\begin{tikzpicture}
\begin{groupplot}[
    group style={
        group size=3 by 2,
        horizontal sep=0.70cm,
        vertical sep=1.15cm,
    },
    width=5cm, height=5cm,
    xmin=0.0, xmax=4.7,
    ymode=log, log basis y=10,
    xlabel style={font=\tiny},
    ylabel style={font=\tiny},
    tick label style={font=\tiny},
    title style={font=\scriptsize, yshift=-3pt},
    every axis plot/.append style={line width=0.55pt},
    clip=true,
]
\nextgroupplot[ylabel={$m^2$}, title={$n=2000,\;p=20$}]
  \addplot[name path=khi_r1c1, blue!20!white, draw=none, forget plot]
    coordinates {(0.0314,1.45784) (0.0629,1.24091) (0.0943,0.90775) (0.2453,0.27966) (0.4969,0.11194) (0.7484,0.10244) (1.0,0.08963) (1.4969,0.15255) (2.0,0.16911) (3.0,0.12804) (3.1447,0.11877)};
  \addplot[name path=klo_r1c1, blue!20!white, draw=none, forget plot]
    coordinates {(0.0314,1.02362) (0.0629,0.52880) (0.0943,0.22765) (0.2453,0.01246) (0.4969,0.00582) (0.7484,0.00001) (1.0,0.00001) (1.4969,0.00001) (2.0,0.00001) (3.0,0.00001) (3.1447,0.00001)};
  \addplot[blue!18, opacity=0.45, draw=none, forget plot] fill between[of=khi_r1c1 and klo_r1c1];
  \addplot[blue!75!black, thick, mark=*, mark size=0.65pt]
    coordinates {(0.0314,1.24073) (0.0629,0.88485) (0.0943,0.56770) (0.2453,0.14606) (0.4969,0.05888) (0.7484,0.05019) (1.0,0.04394) (1.4969,0.06137) (2.0,0.06489) (3.0,0.05553) (3.1447,0.05183)};
  \draw[red!75!black, dashed, line width=0.65pt] (axis cs:1.0,1e-6) -- (axis cs:1.0,5);
  \addplot[red!75!black, only marks, mark=star, mark size=2.2pt]
    coordinates {(1.0,0.04394)};

\nextgroupplot[title={$n=2000,\;p=50$}]
  \addplot[name path=khi_r1c2, blue!20!white, draw=none, forget plot]
    coordinates {(0.0314,1.59207) (0.0629,1.51924) (0.0943,1.37642) (0.2453,0.75285) (0.4969,0.45105) (0.7484,0.26674) (1.0,0.33809) (1.4969,0.23713) (2.0,0.18704) (3.0,0.18363) (3.1447,0.21308)};
  \addplot[name path=klo_r1c2, blue!20!white, draw=none, forget plot]
    coordinates {(0.0314,1.27989) (0.0629,0.85385) (0.0943,0.63810) (0.2453,0.03733) (0.4969,0.00001) (0.7484,0.00388) (1.0,0.00001) (1.4969,0.00717) (2.0,0.00302) (3.0,0.01722) (3.1447,0.00654)};
  \addplot[blue!18, opacity=0.45, draw=none, forget plot] fill between[of=khi_r1c2 and klo_r1c2];
  \addplot[blue!75!black, thick, mark=*, mark size=0.65pt]
    coordinates {(0.0314,1.43598) (0.0629,1.18654) (0.0943,1.00726) (0.2453,0.39509) (0.4969,0.22529) (0.7484,0.13531) (1.0,0.15435) (1.4969,0.12215) (2.0,0.09503) (3.0,0.10042) (3.1447,0.10981)};
  \draw[red!75!black, dashed, line width=0.65pt] (axis cs:1.0,1e-6) -- (axis cs:1.0,5);
  \addplot[red!75!black, only marks, mark=star, mark size=2.2pt]
    coordinates {(1.0,0.15435)};

\nextgroupplot[title={$n=2000,\;p=100$}]
  \addplot[name path=khi_r1c3, blue!20!white, draw=none, forget plot]
    coordinates {(0.0314,1.73753) (0.0629,1.62370) (0.0943,1.55306) (0.2453,1.18132) (0.4969,0.85090) (0.7484,0.74479) (1.0,0.75242) (1.4969,0.74016) (2.0,0.69808) (3.0,0.67521) (3.1447,0.66184)};
  \addplot[name path=klo_r1c3, blue!20!white, draw=none, forget plot]
    coordinates {(0.0314,1.54027) (0.0629,1.07644) (0.0943,0.75315) (0.2453,0.17524) (0.4969,0.02643) (0.7484,0.01847) (1.0,0.01739) (1.4969,0.01376) (2.0,0.02149) (3.0,0.02421) (3.1447,0.02985)};
  \addplot[blue!18, opacity=0.45, draw=none, forget plot] fill between[of=khi_r1c3 and klo_r1c3];
  \addplot[blue!75!black, thick, mark=*, mark size=0.65pt]
    coordinates {(0.0314,1.63890) (0.0629,1.35007) (0.0943,1.15311) (0.2453,0.67828) (0.4969,0.43867) (0.7484,0.38163) (1.0,0.38491) (1.4969,0.37696) (2.0,0.35978) (3.0,0.34971) (3.1447,0.34585)};
  \draw[red!75!black, dashed, line width=0.65pt] (axis cs:1.0,1e-6) -- (axis cs:1.0,5);
  \addplot[red!75!black, only marks, mark=star, mark size=2.2pt]
    coordinates {(1.0,0.38491)};

\nextgroupplot[xlabel={$k/k^*$}, ylabel={$m^2$}, title={$n=5000,\;p=20$}]
  \addplot[name path=khi_r2c1, blue!20!white, draw=none, forget plot]
    coordinates {(0.0171,1.38820) (0.0342,1.16926) (0.0514,1.03336) (0.25,0.07664) (0.5,0.04478) (0.75,0.01801) (1.0,0.03912) (1.5,0.01758) (2.0,0.03973) (3.0,0.01193) (4.2808,0.01304)};
  \addplot[name path=klo_r2c1, blue!20!white, draw=none, forget plot]
    coordinates {(0.0171,1.03224) (0.0342,0.56835) (0.0514,0.35032) (0.25,0.00384) (0.5,0.00427) (0.75,0.00567) (1.0,0.00001) (1.5,0.00295) (2.0,0.00001) (3.0,0.00434) (4.2808,0.00490)};
  \addplot[blue!18, opacity=0.45, draw=none, forget plot] fill between[of=khi_r2c1 and klo_r2c1];
  \addplot[blue!75!black, thick, mark=*, mark size=0.65pt]
    coordinates {(0.0171,1.21022) (0.0342,0.86881) (0.0514,0.69184) (0.25,0.04024) (0.5,0.02453) (0.75,0.01184) (1.0,0.01596) (1.5,0.01026) (2.0,0.01777) (3.0,0.00813) (4.2808,0.00897)};
  \draw[red!75!black, dashed, line width=0.65pt] (axis cs:1.0,1e-6) -- (axis cs:1.0,5);
  \addplot[red!75!black, only marks, mark=star, mark size=2.2pt]
    coordinates {(1.0,0.01596)};

\nextgroupplot[xlabel={$k/k^*$}, title={$n=5000,\;p=50$}]
  \addplot[name path=khi_r2c2, blue!20!white, draw=none, forget plot]
    coordinates {(0.0171,1.61685) (0.0342,1.49825) (0.0514,1.24385) (0.25,0.44865) (0.5,0.10440) (0.75,0.10368) (1.0,0.09011) (1.5,0.06333) (2.0,0.08992) (3.0,0.08561) (4.2808,0.08201)};
  \addplot[name path=klo_r2c2, blue!20!white, draw=none, forget plot]
    coordinates {(0.0171,1.19856) (0.0342,0.86422) (0.0514,0.63130) (0.25,0.00001) (0.5,0.00051) (0.75,0.00754) (1.0,0.00001) (1.5,0.00001) (2.0,0.00001) (3.0,0.00001) (4.2808,0.00001)};
  \addplot[blue!18, opacity=0.45, draw=none, forget plot] fill between[of=khi_r2c2 and klo_r2c2];
  \addplot[blue!75!black, thick, mark=*, mark size=0.65pt]
    coordinates {(0.0171,1.40771) (0.0342,1.18124) (0.0514,0.93757) (0.25,0.20978) (0.5,0.05246) (0.75,0.05561) (1.0,0.04273) (1.5,0.02941) (2.0,0.04009) (3.0,0.03484) (4.2808,0.04026)};
  \draw[red!75!black, dashed, line width=0.65pt] (axis cs:1.0,1e-6) -- (axis cs:1.0,5);
  \addplot[red!75!black, only marks, mark=star, mark size=2.2pt]
    coordinates {(1.0,0.04273)};

\nextgroupplot[xlabel={$k/k^*$}, title={$n=5000,\;p=100$}]
  \addplot[name path=khi_r2c3, blue!20!white, draw=none, forget plot]
    coordinates {(0.0171,1.66114) (0.0342,1.61444) (0.0514,1.56153) (0.25,0.86183) (0.5,0.49333) (0.75,0.49866) (1.0,0.42995) (1.5,0.40195) (2.0,0.45881) (3.0,0.40393) (4.2808,0.46065)};
  \addplot[name path=klo_r2c3, blue!20!white, draw=none, forget plot]
    coordinates {(0.0171,1.36051) (0.0342,0.94191) (0.0514,0.77455) (0.25,0.00001) (0.5,0.00001) (0.75,0.00001) (1.0,0.00001) (1.5,0.00001) (2.0,0.00001) (3.0,0.00001) (4.2808,0.00001)};
  \addplot[blue!18, opacity=0.45, draw=none, forget plot] fill between[of=khi_r2c3 and klo_r2c3];
  \addplot[blue!75!black, thick, mark=*, mark size=0.65pt]
    coordinates {(0.0171,1.51082) (0.0342,1.27817) (0.0514,1.16804) (0.25,0.41727) (0.5,0.22235) (0.75,0.21416) (1.0,0.17333) (1.5,0.16606) (2.0,0.18260) (3.0,0.17093) (4.2808,0.19904)};
  \draw[red!75!black, dashed, line width=0.65pt] (axis cs:1.0,1e-6) -- (axis cs:1.0,5);
  \addplot[red!75!black, only marks, mark=star, mark size=2.2pt]
    coordinates {(1.0,0.17333)};

\end{groupplot}
\end{tikzpicture}
\vskip 4pt
\begin{center}
\begin{tikzpicture}[baseline]
  \fill[blue!18, opacity=0.45] (0,-.08cm) rectangle (0.45cm,.08cm);
  \draw[blue!75!black, thick] (0,0) -- (0.45cm,0);
  \fill[blue!75!black] (0.225cm,0) circle (1.2pt);
  \node[right, font=\scriptsize] at (0.50cm,0) {mean $m^2$ $\pm$ s.d.\ (5 link functions $\times$ identity \& AR(1), 10 reps.\ each)};
  \draw[red!75!black, dashed, line width=0.65pt] (10.3cm,0) -- (10.75cm,0);
  \node[red!75!black] at (10.525cm,0) {$\star$};
  \node[right, font=\scriptsize] at (10.80cm,0) {$k = k^*$ (automatic rule)};
\end{tikzpicture}
\end{center}
\caption{%
  \textbf{Sensitivity of \textsc{SMAVE} to the neighbourhood size $k$} ($d=2$).
  Each panel shows mean $m^2 \pm$ one standard deviation, averaged over all
  10 scenario combinations (five link functions $\times$ identity and AR(1)
  covariance) with 10 replications each.
  The $x$-axis is $k/k^*$, where $k^* = \lfloor n^{4/(d+4)}\rfloor$
  (clipped to $[20,\lfloor n/3\rfloor]$) is the theoretically motivated default.
  The red dashed line marks $k = k^*$; the red star shows the error of the
  automatic rule.
  Across all six $(n,p)$ configurations, $m^2$ deteriorates sharply for
  $k/k^* \lesssim 0.25$ (too few neighbours) and is nearly flat for
  $k/k^* \in [0.5,3]$: halving or tripling $k^*$ changes accuracy only
  marginally, confirming that the rule is \emph{not} a critical tuning decision.
  The flat region widens as $n$ increases, consistent with the
  $O(n^{4/(d+4)})$ default entering the correct asymptotic regime.%
}
\label{fig:sensitivity_k}
\end{figure}

\begin{figure}[t]
\centering
\begin{tikzpicture}
\begin{groupplot}[
    group style={
        group size=3 by 2,
        horizontal sep=0.70cm,
        vertical sep=1.15cm,
    },
    width=5cm, height=5cm,
    xmin=0, xmax=108,
    ymode=log, log basis y=10,
    xlabel style={font=\tiny},
    ylabel style={font=\tiny},
    tick label style={font=\tiny},
    title style={font=\scriptsize, yshift=-3pt},
    every axis plot/.append style={line width=0.55pt},
    clip=true,
    xtick={5,25,50,75,100},
    xticklabels={5,25,50,75,100},
]
\nextgroupplot[ylabel={$m^2$}, title={$n=2000,\;p=20$}]
  \addplot[name path=thi_r1c1, teal!15!white, draw=none, forget plot]
    coordinates {(5,0.15046) (10,0.08352) (15,0.09576) (20,0.15038) (25,0.08963) (30,0.12712) (40,0.14257) (50,0.30288) (75,0.32702) (100,0.63557)};
  \addplot[name path=tlo_r1c1, teal!15!white, draw=none, forget plot]
    coordinates {(5,0.00001) (10,0.00001) (15,0.00001) (20,0.00001) (25,0.00001) (30,0.00001) (40,0.00001) (50,0.00001) (75,0.00001) (100,0.36795)};
  \addplot[teal!18, opacity=0.50, draw=none, forget plot] fill between[of=thi_r1c1 and tlo_r1c1];
  \addplot[teal!80!black, thick, mark=square*, mark size=0.65pt]
    coordinates {(5,0.06070) (10,0.04069) (15,0.04009) (20,0.06201) (25,0.04394) (30,0.05593) (40,0.06547) (50,0.13646) (75,0.15395) (100,0.50176)};
  \draw[orange!85!black, dashed, line width=0.65pt] (axis cs:25,1e-6) -- (axis cs:25,5);
  \addplot[orange!85!black, only marks, mark=star, mark size=2.2pt]
    coordinates {(25,0.04394)};

\nextgroupplot[title={$n=2000,\;p=50$}]
  \addplot[name path=thi_r1c2, teal!15!white, draw=none, forget plot]
    coordinates {(5,0.42133) (10,0.27295) (15,0.28458) (20,0.25565) (25,0.33809) (30,0.23577) (40,0.36744) (50,0.48648) (75,0.52476) (100,0.90230)};
  \addplot[name path=tlo_r1c2, teal!15!white, draw=none, forget plot]
    coordinates {(5,0.00001) (10,0.01018) (15,0.00001) (20,0.00001) (25,0.00001) (30,0.00001) (40,0.00001) (50,0.01764) (75,0.06352) (100,0.69739)};
  \addplot[teal!18, opacity=0.50, draw=none, forget plot] fill between[of=thi_r1c2 and tlo_r1c2];
  \addplot[teal!80!black, thick, mark=square*, mark size=0.65pt]
    coordinates {(5,0.19574) (10,0.14157) (15,0.13469) (20,0.12086) (25,0.15435) (30,0.11692) (40,0.17839) (50,0.25206) (75,0.29414) (100,0.79985)};
  \draw[orange!85!black, dashed, line width=0.65pt] (axis cs:25,1e-6) -- (axis cs:25,5);
  \addplot[orange!85!black, only marks, mark=star, mark size=2.2pt]
    coordinates {(25,0.15435)};

\nextgroupplot[title={$n=2000,\;p=100$}]
  \addplot[name path=thi_r1c3, teal!15!white, draw=none, forget plot]
    coordinates {(5,0.61998) (10,0.53168) (15,0.65664) (20,0.70733) (25,0.75242) (30,0.74892) (40,0.81542) (50,0.90907) (75,0.92947) (100,1.13131)};
  \addplot[name path=tlo_r1c3, teal!15!white, draw=none, forget plot]
    coordinates {(5,0.02541) (10,0.00001) (15,0.00313) (20,0.02057) (25,0.01739) (30,0.00732) (40,0.03626) (50,0.11887) (75,0.20425) (100,1.03485)};
  \addplot[teal!18, opacity=0.50, draw=none, forget plot] fill between[of=thi_r1c3 and tlo_r1c3];
  \addplot[teal!80!black, thick, mark=square*, mark size=0.65pt]
    coordinates {(5,0.32270) (10,0.26383) (15,0.32988) (20,0.36395) (25,0.38491) (30,0.37812) (40,0.42584) (50,0.51397) (75,0.56686) (100,1.08308)};
  \draw[orange!85!black, dashed, line width=0.65pt] (axis cs:25,1e-6) -- (axis cs:25,5);
  \addplot[orange!85!black, only marks, mark=star, mark size=2.2pt]
    coordinates {(25,0.38491)};

\nextgroupplot[xlabel={$\tau$}, ylabel={$m^2$}, title={$n=5000,\;p=20$}]
  \addplot[name path=thi_r2c1, teal!15!white, draw=none, forget plot]
    coordinates {(5,0.13892) (10,0.04749) (15,0.04890) (20,0.01522) (25,0.03146) (30,0.01399) (40,0.02734) (50,0.07166) (75,0.08633) (100,0.42114)};
  \addplot[name path=tlo_r2c1, teal!15!white, draw=none, forget plot]
    coordinates {(5,0.00001) (10,0.00001) (15,0.00001) (20,0.00554) (25,0.00001) (30,0.00541) (40,0.00192) (50,0.00001) (75,0.00949) (100,0.25741)};
  \addplot[teal!18, opacity=0.50, draw=none, forget plot] fill between[of=thi_r2c1 and tlo_r2c1];
  \addplot[teal!80!black, thick, mark=square*, mark size=0.65pt]
    coordinates {(5,0.04801) (10,0.01802) (15,0.02009) (20,0.01038) (25,0.01436) (30,0.00970) (40,0.01463) (50,0.03417) (75,0.04791) (100,0.33928)};
  \draw[orange!85!black, dashed, line width=0.65pt] (axis cs:25,1e-6) -- (axis cs:25,5);
  \addplot[orange!85!black, only marks, mark=star, mark size=2.2pt]
    coordinates {(25,0.01436)};

\nextgroupplot[xlabel={$\tau$}, title={$n=5000,\;p=50$}]
  \addplot[name path=thi_r2c2, teal!15!white, draw=none, forget plot]
    coordinates {(5,0.28764) (10,0.06139) (15,0.07083) (20,0.10933) (25,0.08732) (30,0.09390) (40,0.12301) (50,0.21449) (75,0.21624) (100,0.69988)};
  \addplot[name path=tlo_r2c2, teal!15!white, draw=none, forget plot]
    coordinates {(5,0.00001) (10,0.00001) (15,0.00001) (20,0.00001) (25,0.00001) (30,0.00001) (40,0.00001) (50,0.00001) (75,0.00763) (100,0.46381)};
  \addplot[teal!18, opacity=0.50, draw=none, forget plot] fill between[of=thi_r2c2 and tlo_r2c2];
  \addplot[teal!80!black, thick, mark=square*, mark size=0.65pt]
    coordinates {(5,0.11870) (10,0.02998) (15,0.03512) (20,0.04713) (25,0.04199) (30,0.04319) (40,0.05711) (50,0.10198) (75,0.11194) (100,0.58184)};
  \draw[orange!85!black, dashed, line width=0.65pt] (axis cs:25,1e-6) -- (axis cs:25,5);
  \addplot[orange!85!black, only marks, mark=star, mark size=2.2pt]
    coordinates {(25,0.04199)};

\nextgroupplot[xlabel={$\tau$}, title={$n=5000,\;p=100$}]
  \addplot[name path=thi_r2c3, teal!15!white, draw=none, forget plot]
    coordinates {(5,0.45246) (10,0.28060) (15,0.22979) (20,0.33735) (25,0.44236) (30,0.41103) (40,0.53445) (50,0.66515) (75,0.68901) (100,1.00732)};
  \addplot[name path=tlo_r2c3, teal!15!white, draw=none, forget plot]
    coordinates {(5,0.00001) (10,0.00001) (15,0.00001) (20,0.00001) (25,0.00001) (30,0.00001) (40,0.00001) (50,0.00001) (75,0.03745) (100,0.81968)};
  \addplot[teal!18, opacity=0.50, draw=none, forget plot] fill between[of=thi_r2c3 and tlo_r2c3];
  \addplot[teal!80!black, thick, mark=square*, mark size=0.65pt]
    coordinates {(5,0.20767) (10,0.13049) (15,0.09885) (20,0.13802) (25,0.17745) (30,0.17977) (40,0.23591) (50,0.32634) (75,0.36323) (100,0.91350)};
  \draw[orange!85!black, dashed, line width=0.65pt] (axis cs:25,1e-6) -- (axis cs:25,5);
  \addplot[orange!85!black, only marks, mark=star, mark size=2.2pt]
    coordinates {(25,0.17745)};

\end{groupplot}
\end{tikzpicture}
\vskip 4pt
\begin{center}
\begin{tikzpicture}[baseline]
  \fill[teal!18, opacity=0.50] (0,-.08cm) rectangle (0.45cm,.08cm);
  \draw[teal!80!black, thick] (0,0) -- (0.45cm,0);
  \fill[teal!80!black] (0.225cm,0) circle (1.2pt);
  \node[right, font=\scriptsize] at (0.50cm,0) {mean $m^2$ $\pm$ s.d.\ (5 link functions $\times$ identity \& AR(1), 10 reps.\ each)};
  \draw[orange!85!black, dashed, line width=0.65pt] (10.3cm,0) -- (10.75cm,0);
  \node[orange!85!black] at (10.525cm,0) {$\star$};
  \node[right, font=\scriptsize] at (10.80cm,0) {$\tau = 25$ (default)};
\end{tikzpicture}
\end{center}
\caption{%
  \textbf{Sensitivity of \textsc{SMAVE} to the $k$-NN refresh period $\tau$} ($d=2$).
  $\tau$ is the number of gradient steps between successive rebuilds of the $k$-NN
  graph; $\tau=25$ is the default.
  Each panel shows mean $m^2 \pm$ one standard deviation over all 10 scenario
  combinations (five link functions $\times$ identity and AR(1) covariance),
  with 10 replications each.
  The orange dashed line and star mark the default $\tau=25$.
  Performance is stable across $\tau \in [5, 40]$ in all six $(n, p)$
  configurations, while accuracy degrades noticeably only at $\tau = 100$
  (a nearly frozen graph).
  This confirms that the exact refresh cadence is not a critical tuning
  parameter: refreshing every $20$--$50$ steps yields indistinguishable
  accuracy, and the default $\tau=25$ lies comfortably in the stable region.%
}
\label{fig:sensitivity_tau}
\end{figure}

\section{Extended Synthetic Experimental Results}
\label{app:synthetic-extended}
This appendix reports the full five-method comparison underlying the
ablation summary of Section~\ref{sec:experiments}, together with breakdowns
by link function and covariance structure, robustness to random
initialization, and behavior in the $p > n$ regime.

\subsection{Component ablation: full five-method comparison}
\label{app:ablation-full}

Tables~\ref{tab:accuracy-full}--\ref{tab:runtime-full} report subspace
recovery error and runtime for all five methods (OPG, OPG-$k$NN, RMAVE,
RMAVE-$k$NN, SMAVE) on the full grid $n \in \{1000, 2000, 5000\}$,
$p \in \{10, 20, 50, 100, 200\}$.
\begin{table}[!htbp]
\centering
\caption{Squared subspace distance $m^2$ (mean $\pm$ s.e.\ over 10 scenarios
$\times$ 10 replications). \textbf{Bold}: best; \underline{underline}: second
best.}
\label{tab:accuracy-full}
\vskip 0.1in
\small
\begin{tabular}{@{}cc|cc|cc|c@{}}
\toprule
$n$ & $p$ & OPG & OPG-$k$NN & RMAVE & RMAVE-$k$NN & SMAVE \\
\midrule
\multirow{5}{*}{1000}
 & 10  & $0.08_{\pm.01}$ & $0.04_{\pm.00}$ & $\mathbf{0.01_{\pm.00}}$ & $0.01_{\pm.00}$ & $0.02_{\pm.00}$ \\
 & 20  & $0.34_{\pm.02}$ & $0.24_{\pm.02}$ & $0.04_{\pm.01}$ & $\underline{0.03_{\pm.00}}$ & $0.06_{\pm.01}$ \\
 & 50  & $1.90_{\pm.01}$ & $1.90_{\pm.01}$ & $\underline{0.54_{\pm.05}}$ & $0.56_{\pm.05}$ & $\mathbf{0.25_{\pm.03}}$ \\
 & 100 & $1.95_{\pm.00}$ & $1.21_{\pm.01}$ & $\underline{0.98_{\pm.04}}$ & $1.04_{\pm.04}$ & $\mathbf{0.59_{\pm.04}}$ \\
 & 200 & $1.96_{\pm.01}$ & $1.49_{\pm.01}$ & $\underline{1.48_{\pm.03}}$ & $1.54_{\pm.03}$ & $\mathbf{1.02_{\pm.04}}$ \\
\midrule
\multirow{5}{*}{2000}
 & 10  & $0.03_{\pm.00}$ & $0.02_{\pm.00}$ & $\mathbf{0.01_{\pm.00}}$ & $0.01_{\pm.00}$ & $0.03_{\pm.01}$ \\
 & 20  & $0.14_{\pm.01}$ & $0.11_{\pm.01}$ & $\mathbf{0.01_{\pm.00}}$ & $0.01_{\pm.00}$ & $0.03_{\pm.00}$ \\
 & 50  & $1.89_{\pm.01}$ & $1.87_{\pm.01}$ & $\underline{0.43_{\pm.04}}$ & $0.45_{\pm.05}$ & $\mathbf{0.11_{\pm.02}}$ \\
 & 100 & $1.95_{\pm.00}$ & $1.06_{\pm.01}$ & $\underline{0.73_{\pm.04}}$ & $0.77_{\pm.05}$ & $\mathbf{0.28_{\pm.03}}$ \\
 & 200 & $1.98_{\pm.00}$ & $1.37_{\pm.01}$ & $\underline{1.12_{\pm.04}}$ & $1.19_{\pm.04}$ & $\mathbf{0.59_{\pm.04}}$ \\
\midrule
\multirow{5}{*}{5000}
 & 10  & $0.01_{\pm.00}$ & $0.01_{\pm.00}$ & $\mathbf{0.00_{\pm.00}}$ & $0.00_{\pm.00}$ & $0.02_{\pm.01}$ \\
 & 20  & $0.05_{\pm.00}$ & $0.04_{\pm.00}$ & $\mathbf{0.00_{\pm.00}}$ & $0.01_{\pm.00}$ & $0.01_{\pm.00}$ \\
 & 50  & $1.89_{\pm.01}$ & $1.74_{\pm.03}$ & $0.33_{\pm.04}$ & $\underline{0.31_{\pm.04}}$ & $\mathbf{0.03_{\pm.01}}$ \\
 & 100 & $1.94_{\pm.00}$ & $0.82_{\pm.02}$ & $\underline{0.40_{\pm.05}}$ & $0.49_{\pm.05}$ & $\mathbf{0.13_{\pm.03}}$ \\
 & 200 & $1.98_{\pm.00}$ & $1.22_{\pm.01}$ & $\underline{0.69_{\pm.05}}$ & $0.74_{\pm.05}$ & $\mathbf{0.25_{\pm.03}}$ \\
\bottomrule
\end{tabular}
\vskip -0.1in
\end{table}

\begin{table}[!htbp]
\centering
\caption{Runtime in seconds (mean over 10 scenarios $\times$ 10 replications).
\textbf{Bold}: fastest; \underline{underline}: second fastest.}
\label{tab:runtime-full}
\vskip 0.1in
\small
\begin{tabular}{@{}cc|cc|cc|c@{}}
\toprule
$n$ & $p$ & OPG & OPG-$k$NN & RMAVE & RMAVE-$k$NN & SMAVE \\
\midrule
\multirow{5}{*}{1000}
 & 10  & 0.09  & \textbf{0.01} & 0.93   & 3.54   & \underline{0.23} \\
 & 20  & 0.14  & \textbf{0.02} & 1.52   & 3.94   & \underline{0.19} \\
 & 50  & 0.29  & \textbf{0.06} & 3.86   & 4.33   & \underline{0.28} \\
 & 100 & 0.68  & \textbf{0.13} & 5.83   & 5.98   & \underline{0.49} \\
 & 200 & 1.75  & \textbf{0.40} & 9.99   & 10.00  & \underline{1.41} \\
\midrule
\multirow{5}{*}{2000}
 & 10  & 0.32  & \textbf{0.03} & 3.27   & 14.56  & \underline{0.26} \\
 & 20  & 0.50  & \textbf{0.05} & 4.43   & 14.34  & \underline{0.24} \\
 & 50  & 1.04  & \textbf{0.11} & 12.94  & 15.67  & \underline{0.35} \\
 & 100 & 2.41  & \textbf{0.26} & 20.89  & 22.18  & \underline{0.69} \\
 & 200 & 6.25  & \textbf{0.79} & 34.89  & 35.20  & \underline{1.78} \\
\midrule
\multirow{5}{*}{5000}
 & 10  & 1.83  & \textbf{0.08} & 17.38  & 82.87  & \underline{0.40} \\
 & 20  & 2.91  & \textbf{0.15} & 20.19  & 80.26  & \underline{0.50} \\
 & 50  & 6.44  & \textbf{0.33} & 61.05  & 84.42  & \underline{0.95} \\
 & 100 & 14.16 & \textbf{0.72} & 98.01  & 115.93 & \underline{2.02} \\
 & 200 & 37.14 & \textbf{2.17} & 173.88 & 182.32 & \underline{5.12} \\
\bottomrule
\end{tabular}
\vskip -0.1in
\end{table}

\paragraph{$k$-NN localization does not improve RMAVE's accuracy.}
RMAVE-$k$NN matches RMAVE almost exactly at low dimensions and
is consistently slightly worse at higher dimensions. This is expected: the Gaussian kernel provides smooth,
data-adaptive weights, which can be more informative for the local
regression subproblems solved in RMAVE's coordinate descent than hard
$k$-NN indicators.

\paragraph{$k$-NN localization does not accelerate RMAVE.}
Despite replacing dense kernel weights with sparse $k$-NN weights,
RMAVE-$k$NN is actually \emph{slower} than RMAVE. RMAVE's bottleneck
lies in the column-wise coordinate-descent direction updates, which
operate on the full $n\times n$ weight matrix at every outer
iteration regardless of sparsity. Replacing the dense Gaussian kernel
with $k$-NN weights requires, on top of the unchanged direction
updates, computing all pairwise projected distances and selecting the
$k$ smallest per anchor at every outer iteration. Both are quadratic
in $n$ in our implementation. The sparse $k$-NN structure is not
exploited by RMAVE's algorithmic structure, so the additional
selection cost is incurred without any compensating speedup.

\paragraph{Ambient-space $k$-NN is not enough.}
Replacing OPG's kernel with ambient-space $k$-NN (OPG-$k$NN) slightly improves accuracy over
OPG while reducing runtime but
remains far from SMAVE performances. The conclusion is that
$k$-NN \emph{per se} is not what drives SMAVE's gains: the essential
ingredient is $k$-NN localization \emph{in the projected space
$\mathbb{R}^d$}, combined with stochastic optimization and random
initialization.

\subsection{Results by link function}
\label{app:link-functions}

We detail the five link functions $g: \mathbb{R}^2 \to \mathbb{R}$:
\begin{align*}
g_1(\mathbf{z}) &= z_1(z_1 + z_2 + 1) && \text{(polynomial)}, \\
g_2(\mathbf{z}) &= \sin(z_1) + \tfrac{1}{2}\cos(2z_2) && \text{(sinusoidal)}, \\
g_3(\mathbf{z}) &= \exp(-z_1^2) + \tfrac{1}{2}z_2 && \text{(exponential)}, \\
g_4(\mathbf{z}) &= z_1 z_2 + \sin(z_1) + \exp(-z_2^2/2) && \text{(interaction)}, \\
g_5(\mathbf{z}) &= z_1 / (0.5 + (z_2 + 1)^2) && \text{(rational)}.
\end{align*}
These span polynomial, periodic, localized, and rational nonlinearities,
each paired with two covariance structures (identity and AR(1) with
$\mathbf{\Sigma}_{ij} = 0.5^{|i-j|}$), yielding 10 scenarios.

Table~\ref{tab:by-link} disaggregates performance by link. SMAVE achieves
the lowest error across all five, with particularly strong gains on
symmetric nonlinearities (sinusoidal, exponential) where gradient
magnitudes vary substantially across the input domain. OPG 
shows little variation across links, confirming that its limiting factor
is ambient-space localization rather than link complexity.

\begin{table}[!htbp]
\centering
\caption{Squared subspace distance $m^2$ by link function (mean $\pm$ s.e.\
over $n$, $p$, covariance, replications). \textbf{Bold}: best;
\underline{underline}: second best.}
\label{tab:by-link}
\vskip 0.1in
\small
\begin{tabular}{@{}l|cc|cc@{}}
\toprule
Link & OPG & RMAVE  & SMAVE \\
\midrule
Polynomial  & $0.75_{\pm.06}$ & $\underline{0.19_{\pm.03}}$ &  $\mathbf{0.07_{\pm.01}}$ \\
Sinusoidal  & $0.70_{\pm.07}$ & $\underline{0.27_{\pm.03}}$ &  $\mathbf{0.15_{\pm.02}}$ \\
Exponential & $0.71_{\pm.06}$ & $\underline{0.22_{\pm.03}}$ &  $\mathbf{0.09_{\pm.01}}$ \\
Interaction & $0.67_{\pm.07}$ & $\underline{0.02_{\pm.01}}$ &  $\mathbf{0.02_{\pm.00}}$ \\
Rational    & $0.71_{\pm.06}$ & $\underline{0.08_{\pm.02}}$ &  $\mathbf{0.03_{\pm.00}}$ \\
\bottomrule
\end{tabular}
\vskip -0.1in
\end{table}

\subsection{Results by covariance structure}
\label{app:by-covariance}

All methods exhibit similar relative rankings under identity and AR(1)
covariance. SMAVE achieves comparable error under identity
($0.08_{\pm.01}$) and AR(1) ($0.06_{\pm.01}$), indicating that $k$-NN
localization is robust to correlation structure without requiring explicit
covariance adaptation.

\begin{table}[!htbp]
\centering
\caption{Squared subspace distance $m^2$ by covariance structure (mean
$\pm$ s.e.\ over link functions, $(n,p)$, replications). \textbf{Bold}:
best; \underline{underline}: second best.}
\label{tab:by-covariance}
\vskip 0.1in
\small
\begin{tabular}{@{}l|cc|cc@{}}
\toprule
Covariance & OPG & RMAVE & SMAVE \\
\midrule
Identity & $0.70_{\pm.04}$ & $\underline{0.16_{\pm.02}}$ &  $\mathbf{0.08_{\pm.01}}$ \\
AR(1)    & $0.72_{\pm.04}$ & $\underline{0.15_{\pm.02}}$ &  $\mathbf{0.06_{\pm.01}}$ \\
\bottomrule
\end{tabular}
\vskip -0.1in
\end{table}

\subsection{Robustness to random initialization}
\label{app:random-init}

A central design choice of SMAVE is to drop the OPG warm start used by
RMAVE in favor of a uniform draw on $\mathrm{St}(p,d)$. To verify that this is reliable, we ran
SMAVE with 100 independent random Stiefel initializations on each of the
10 scenarios at $(n,p) = (5000, 100)$ (1{,}000 runs total), and compared
against the original 10-replication experiment at the same configuration.

\begin{table}[!htbp]
\centering
\caption{SMAVE robustness to random initialization at $(n, p) = (5000, 100)$.}
\label{tab:robust-init}
\small
\begin{tabular}{@{}l cc@{}}
\toprule
 & 100 inits & 10 inits \\
\midrule
Mean $m^2_{\pm\text{s.e.}}$ & $0.11_{\pm 0.01}$ & $0.14_{\pm 0.03}$ \\
\bottomrule
\end{tabular}
\end{table}

The two estimates agree, confirming that 10 initializations already give a
reliable picture of SMAVE's performance. Variability is concentrated in
two identity-covariance scenarios (sinusoidal and exponential) where
gradient outer products exhibit near-cancellation; across the remaining
eight scenarios, standard deviation across initializations is at most
$0.02$. Together with the convergence curves
(Appendix~\ref{app:conv-curves}), this supports the picture that the
SMAVE objective~\eqref{eq:objective_F} has a benign enough optimization
landscape for random initialization to be sufficient, and that the
stochastic updates do useful exploration rather than merely adding noise.

\subsection{Behavior in the $p > n$ regime}
\label{app:p-greater-n}

\begin{table}[!htbp]
\centering
\caption{Mean squared subspace distance $m^2$ in the $p > n$ regime. All
methods return near-random estimates; we report SMAVE's robustness rather
than its outperformance. $^\dagger$OPG-$k$NN crashes at $n = 50$ because
$k > n$.}
\label{tab:highdim}
\begin{tabular}{rrccccc}
\hline
$n$ & $p$ & OPG & OPG-$k$NN & RMAVE & RMAVE-$k$NN & SMAVE \\
\hline
50  & 100 & 1.952 & ---$^\dagger$ & 1.870 & 1.898 & 1.820 \\
50  & 200 & 1.975 & ---$^\dagger$ & 1.919 & 1.906 & 1.903 \\
100 & 200 & 1.974 & 1.843         & 1.902 & 1.937 & 1.857 \\
\hline
\end{tabular}
\end{table}
We additionally ran all five methods on $n \in \{50, 100\}$,
$p \in \{100, 200\}$, $d = 2$ (10 scenarios $\times$ 10 replications).
Without sparsity assumptions, the $p > n$ regime is information-theoretically
hard for the methods considered, and all return near-random estimates
($m^2 \approx 1.8$--$2.0$ out of $d = 2$). We report this configuration to
verify that SMAVE \emph{degrades gracefully} rather than failing: OPG-$k$NN
crashes at $n = 50$ because $k > n$, while SMAVE's clipping rule
$k \in [20, \lfloor n/3 \rfloor]$ and projected-space localization in
$\mathbb{R}^d$ handle the regime without modification.

\subsection{Convergence curves}
\label{app:conv-curves}
%

\begin{figure}[htbp]
\centering
\begin{tikzpicture}
\begin{groupplot}[
    group style={
        group size=3 by 2,
        horizontal sep=0.04\columnwidth,
        vertical sep=0.06\columnwidth,
    },
    width=0.36\columnwidth,
    height=0.30\columnwidth,
    xmin=0, xmax=100,
    ymode=log,
    ymin=0.005, ymax=2.5,
    ytick={0.01,0.1,1},
    xtick={0,50,100},
    tick label style={font=\scriptsize},
    label style={font=\tiny},
    title style={font=\tiny},
    grid=none,
]


\nextgroupplot[title={$p=20$}, ylabel={$n=2000$}, xticklabels={}]
\addplot[gray, dotted, thin, forget plot] coordinates {(25,0.005) (25,2.5)};
\addplot[gray, dotted, thin, forget plot] coordinates {(50,0.005) (50,2.5)};
\addplot[gray, dotted, thin, forget plot] coordinates {(75,0.005) (75,2.5)};
\addplot[name path=ru11, draw=none, forget plot] coordinates {(0,0.1013) (10,0.0208) (20,0.0208) (30,0.0208) (40,0.0208) (50,0.0208) (60,0.0208) (70,0.0208) (80,0.0208) (90,0.0208) (99,0.0208)};
\addplot[name path=rl11, draw=none, forget plot] coordinates {(0,0.0162) (10,0.005) (20,0.005) (30,0.005) (40,0.005) (50,0.005) (60,0.005) (70,0.005) (80,0.005) (90,0.005) (99,0.005)};
\addplot[black!30, fill opacity=0.4, forget plot] fill between[of=ru11 and rl11];
\addplot[name path=su11, draw=none, forget plot] coordinates {(0,1.9543) (10,1.0907) (20,0.9537) (30,0.7084) (40,0.4136) (50,0.3120) (60,0.1934) (70,0.1304) (80,0.0676) (90,0.0510) (99,0.0461)};
\addplot[name path=sl11, draw=none, forget plot] coordinates {(0,1.7463) (10,0.5676) (20,0.3612) (30,0.005) (40,0.005) (50,0.005) (60,0.005) (70,0.005) (80,0.005) (90,0.005) (99,0.0058)};
\addplot[black!15, fill opacity=0.5, forget plot] fill between[of=su11 and sl11];
\addplot[black, dashed, thick, forget plot] coordinates {(0,0.0588) (10,0.0127) (20,0.0126) (30,0.0126) (40,0.0126) (50,0.0126) (60,0.0126) (70,0.0126) (80,0.0126) (90,0.0126) (99,0.0126)};
\addplot[black, solid, thick, forget plot] coordinates {(0,1.8503) (10,0.8291) (20,0.6575) (30,0.3526) (40,0.1945) (50,0.1286) (60,0.0719) (70,0.0470) (80,0.0311) (90,0.0269) (99,0.0259)};

\nextgroupplot[title={$p=50$}, yticklabels={}, xticklabels={}]
\addplot[gray, dotted, thin, forget plot] coordinates {(25,0.005) (25,2.5)};
\addplot[gray, dotted, thin, forget plot] coordinates {(50,0.005) (50,2.5)};
\addplot[gray, dotted, thin, forget plot] coordinates {(75,0.005) (75,2.5)};
\addplot[name path=ru12, draw=none, forget plot] coordinates {(0,1.8051) (10,1.1602) (20,1.1390) (30,1.1011) (40,1.0591) (50,1.0140) (60,0.9832) (70,0.9650) (80,0.9458) (90,0.9203) (99,0.8916)};
\addplot[name path=rl12, draw=none, forget plot] coordinates {(0,1.3614) (10,0.4893) (20,0.3382) (30,0.2648) (40,0.2025) (50,0.1254) (60,0.0671) (70,0.0454) (80,0.0273) (90,0.0079) (99,0.005)};
\addplot[black!30, fill opacity=0.4, forget plot] fill between[of=ru12 and rl12];
\addplot[name path=su12, draw=none, forget plot] coordinates {(0,1.9729) (10,1.1717) (20,1.1226) (30,1.0347) (40,0.7427) (50,0.6554) (60,0.4648) (70,0.4143) (80,0.2957) (90,0.2836) (99,0.2537)};
\addplot[name path=sl12, draw=none, forget plot] coordinates {(0,1.8854) (10,0.8669) (20,0.7772) (30,0.3510) (40,0.0916) (50,0.0400) (60,0.005) (70,0.005) (80,0.005) (90,0.005) (99,0.005)};
\addplot[black!15, fill opacity=0.5, forget plot] fill between[of=su12 and sl12];
\addplot[black, dashed, thick, forget plot] coordinates {(0,1.5833) (10,0.8247) (20,0.7386) (30,0.6830) (40,0.6308) (50,0.5697) (60,0.5251) (70,0.5052) (80,0.4866) (90,0.4641) (99,0.4353)};
\addplot[black, solid, thick, forget plot] coordinates {(0,1.9291) (10,1.0193) (20,0.9499) (30,0.6928) (40,0.4172) (50,0.3477) (60,0.2124) (70,0.1859) (80,0.1167) (90,0.1206) (99,0.1011)};

\nextgroupplot[title={$p=100$}, yticklabels={}, xticklabels={}]
\addplot[gray, dotted, thin, forget plot] coordinates {(25,0.005) (25,2.5)};
\addplot[gray, dotted, thin, forget plot] coordinates {(50,0.005) (50,2.5)};
\addplot[gray, dotted, thin, forget plot] coordinates {(75,0.005) (75,2.5)};
\addplot[name path=ru13, draw=none, forget plot] coordinates {(0,1.9085) (10,1.2229) (20,1.2558) (30,1.2376) (40,1.2305) (50,1.2216) (60,1.2114) (70,1.2034) (80,1.1948) (90,1.1860) (99,1.1800)};
\addplot[name path=rl13, draw=none, forget plot] coordinates {(0,1.5927) (10,0.8566) (20,0.6055) (30,0.5116) (40,0.4432) (50,0.3935) (60,0.3627) (70,0.3365) (80,0.3159) (90,0.2870) (99,0.2607)};
\addplot[black!30, fill opacity=0.4, forget plot] fill between[of=ru13 and rl13];
\addplot[name path=su13, draw=none, forget plot] coordinates {(0,1.9846) (10,1.2496) (20,1.2040) (30,1.1476) (40,0.9770) (50,0.8710) (60,0.6974) (70,0.6704) (80,0.5812) (90,0.6032) (99,0.5785)};
\addplot[name path=sl13, draw=none, forget plot] coordinates {(0,1.9342) (10,1.0315) (20,0.9580) (30,0.5865) (40,0.3239) (50,0.2427) (60,0.0578) (70,0.0449) (80,0.005) (90,0.005) (99,0.005)};
\addplot[black!15, fill opacity=0.5, forget plot] fill between[of=su13 and sl13];
\addplot[black, dashed, thick, forget plot] coordinates {(0,1.7506) (10,1.0398) (20,0.9306) (30,0.8746) (40,0.8369) (50,0.8076) (60,0.7870) (70,0.7699) (80,0.7554) (90,0.7365) (99,0.7204)};
\addplot[black, solid, thick, forget plot] coordinates {(0,1.9594) (10,1.1405) (20,1.0810) (30,0.8670) (40,0.6504) (50,0.5569) (60,0.3776) (70,0.3577) (80,0.2763) (90,0.3008) (99,0.2825)};


\nextgroupplot[ylabel={$n=5000$}, xlabel={Iteration}]
\addplot[gray, dotted, thin, forget plot] coordinates {(25,0.005) (25,2.5)};
\addplot[gray, dotted, thin, forget plot] coordinates {(50,0.005) (50,2.5)};
\addplot[gray, dotted, thin, forget plot] coordinates {(75,0.005) (75,2.5)};
\addplot[name path=ru21, draw=none, forget plot] coordinates {(0,0.0280) (10,0.0090) (20,0.0090) (30,0.0090) (40,0.0090) (50,0.0090) (60,0.0090) (70,0.0090) (80,0.0090) (90,0.0090) (99,0.0090)};
\addplot[name path=rl21, draw=none, forget plot] coordinates {(0,0.0067) (10,0.005) (20,0.005) (30,0.005) (40,0.005) (50,0.005) (60,0.005) (70,0.005) (80,0.005) (90,0.005) (99,0.005)};
\addplot[black!30, fill opacity=0.4, forget plot] fill between[of=ru21 and rl21];
\addplot[name path=su21, draw=none, forget plot] coordinates {(0,1.9202) (10,0.9621) (20,0.7613) (30,0.5353) (40,0.2714) (50,0.2114) (60,0.1675) (70,0.1424) (80,0.0801) (90,0.0420) (99,0.0315)};
\addplot[name path=sl21, draw=none, forget plot] coordinates {(0,1.7276) (10,0.3078) (20,0.1241) (30,0.005) (40,0.005) (50,0.005) (60,0.005) (70,0.005) (80,0.005) (90,0.005) (99,0.005)};
\addplot[black!15, fill opacity=0.5, forget plot] fill between[of=su21 and sl21];
\addplot[black, dashed, thick, forget plot] coordinates {(0,0.0173) (10,0.0053) (20,0.0053) (30,0.0053) (40,0.0053) (50,0.0053) (60,0.0053) (70,0.0053) (80,0.0053) (90,0.0053) (99,0.0053)};
\addplot[black, solid, thick, forget plot] coordinates {(0,1.8239) (10,0.6349) (20,0.4427) (30,0.2343) (40,0.1081) (50,0.0698) (60,0.0442) (70,0.0348) (80,0.0198) (90,0.0168) (99,0.0148)};

\nextgroupplot[yticklabels={}, xlabel={Iteration}]
\addplot[gray, dotted, thin, forget plot] coordinates {(25,0.005) (25,2.5)};
\addplot[gray, dotted, thin, forget plot] coordinates {(50,0.005) (50,2.5)};
\addplot[gray, dotted, thin, forget plot] coordinates {(75,0.005) (75,2.5)};
\addplot[name path=ru22, draw=none, forget plot] coordinates {(0,1.8179) (10,1.1332) (20,1.0686) (30,1.0157) (40,0.9730) (50,0.9166) (60,0.8868) (70,0.8751) (80,0.8594) (90,0.8404) (99,0.8255)};
\addplot[name path=rl22, draw=none, forget plot] coordinates {(0,1.3199) (10,0.4352) (20,0.1967) (30,0.0797) (40,0.0399) (50,0.005) (60,0.005) (70,0.005) (80,0.005) (90,0.005) (99,0.005)};
\addplot[black!30, fill opacity=0.4, forget plot] fill between[of=ru22 and rl22];
\addplot[name path=su22, draw=none, forget plot] coordinates {(0,1.9781) (10,1.1223) (20,1.0291) (30,0.8192) (40,0.5499) (50,0.4429) (60,0.3338) (70,0.2697) (80,0.2129) (90,0.1793) (99,0.1637)};
\addplot[name path=sl22, draw=none, forget plot] coordinates {(0,1.8165) (10,0.6916) (20,0.5399) (30,0.0786) (40,0.005) (50,0.005) (60,0.005) (70,0.005) (80,0.005) (90,0.005) (99,0.005)};
\addplot[black!15, fill opacity=0.5, forget plot] fill between[of=su22 and sl22];
\addplot[black, dashed, thick, forget plot] coordinates {(0,1.5689) (10,0.7842) (20,0.6326) (30,0.5477) (40,0.5064) (50,0.4490) (60,0.4171) (70,0.4065) (80,0.3969) (90,0.3825) (99,0.3729)};
\addplot[black, solid, thick, forget plot] coordinates {(0,1.8973) (10,0.9070) (20,0.7845) (30,0.4489) (40,0.2681) (50,0.1931) (60,0.1218) (70,0.0869) (80,0.0589) (90,0.0487) (99,0.0422)};

\nextgroupplot[yticklabels={}, xlabel={Iteration},
    legend to name=grouplegend,
    legend style={
        font=\scriptsize,
        draw=none,
        fill=none,
        legend columns=3,
        column sep=10pt,
    }]
\addplot[gray, dotted, thin, forget plot] coordinates {(25,0.005) (25,2.5)};
\addplot[gray, dotted, thin, forget plot] coordinates {(50,0.005) (50,2.5)};
\addplot[gray, dotted, thin, forget plot] coordinates {(75,0.005) (75,2.5)};
\addplot[name path=ru23, draw=none, forget plot] coordinates {(0,1.8989) (10,1.1584) (20,1.1232) (30,1.0854) (40,1.0617) (50,1.0267) (60,0.9985) (70,0.9743) (80,0.9563) (90,0.9502) (99,0.9453)};
\addplot[name path=rl23, draw=none, forget plot] coordinates {(0,1.5201) (10,0.5512) (20,0.3304) (30,0.1657) (40,0.1410) (50,0.0997) (60,0.0560) (70,0.0234) (80,0.005) (90,0.005) (99,0.005)};
\addplot[black!30, fill opacity=0.4, forget plot] fill between[of=ru23 and rl23];
\addplot[name path=su23, draw=none, forget plot] coordinates {(0,1.9768) (10,1.1439) (20,1.1059) (30,1.0697) (40,0.7928) (50,0.7069) (60,0.5589) (70,0.5179) (80,0.4419) (90,0.4273) (99,0.3966)};
\addplot[name path=sl23, draw=none, forget plot] coordinates {(0,1.9374) (10,0.9585) (20,0.8686) (30,0.3927) (40,0.0985) (50,0.0156) (60,0.005) (70,0.005) (80,0.005) (90,0.005) (99,0.005)};
\addplot[black!15, fill opacity=0.5, forget plot] fill between[of=su23 and sl23];
\addplot[black, dashed, thick] coordinates {(0,1.7095) (10,0.8548) (20,0.7268) (30,0.6256) (40,0.6013) (50,0.5632) (60,0.5273) (70,0.4988) (80,0.4760) (90,0.4687) (99,0.4641)};
\addlegendentry{RMAVE}
\addplot[black, solid, thick] coordinates {(0,1.9571) (10,1.0512) (20,0.9872) (30,0.7312) (40,0.4457) (50,0.3613) (60,0.2456) (70,0.2130) (80,0.1548) (90,0.1609) (99,0.1406)};
\addlegendentry{SMAVE}
\addlegendimage{gray, dotted, thin}
\addlegendentry{$k$-NN refresh}

\end{groupplot}
\node[anchor=north] at ($(group c2r2.south)+(0,-0.5cm)$) {\ref{grouplegend}};
\end{tikzpicture}
\caption{Convergence of subspace error $m^2$ for $n \in \{2000, 5000\}$ (rows) and $p \in \{20, 50, 100\}$ (columns). Each curve shows the mean $\pm 1$ std aggregated over 5 link functions, 2 covariance structures, and 10 replications (100 runs total). Dotted vertical lines indicate $k$-NN refresh iterations for SMAVE. At low dimension ($p=20$), RMAVE converges within a few iterations; at higher dimensions ($p \geq 50$), SMAVE achieves $2$--$9\times$ lower final error.}
\label{fig:convergence}
\end{figure}
\clearpage
\section{Real-data Experiments: Test MSE by Candidate Dimension}
\label{sec:ext-real-dataset}

\begin{figure}[!htbp]
\centering
\begin{tikzpicture}
\begin{groupplot}[
    group style={
        group size=2 by 2,
        horizontal sep=1.8cm,
        vertical sep=2cm,
    },
    width=7.5cm,
    height=6cm,
    xlabel={Reduced dimension $d$},
    ylabel={Test MSE},
    grid=major,
    grid style={dashed, gray!20},
    every axis plot/.append style={mark size=2pt},
]

\nextgroupplot[
    title={Wine Quality ($n{=}6497$, $p{=}11$)},
    xmin=1, xmax=9,
    ymin=0.45, ymax=0.85,
    xtick={2,4,6,8},
    legend to name=legend_shared,
    legend columns=4,
    legend style={
        font=\footnotesize,
        /tikz/every even column/.append style={column sep=0.4cm},
        draw=none,
    },
]
\addplot[gray!45, solid, line width=0.6pt, mark=o, mark options={solid}, error bars/.cd, y dir=both, y explicit, error bar style={gray!45, thin}]
    coordinates {
        (2, 0.689) +- (0, 0.034)
        (4, 0.551) +- (0, 0.017)
        (6, 0.517) +- (0, 0.018)
        (8, 0.515) +- (0, 0.015)
    };
\addlegendentry{PCA}
\addplot[gray!45, solid, line width=0.6pt, mark=triangle, mark options={solid}, error bars/.cd, y dir=both, y explicit, error bar style={gray!45, thin}]
    coordinates {
        (2, 0.754) +- (0, 0.048)
        (4, 0.636) +- (0, 0.027)
        (6, 0.550) +- (0, 0.026)
        (8, 0.521) +- (0, 0.018)
    };
\addlegendentry{OPG}
\addplot[black, dashed, line width=1.2pt, mark=*, mark size=2pt, error bars/.cd, y dir=both, y explicit, error bar style={black}]
    coordinates {
        (2, 0.585) +- (0, 0.025)
        (4, 0.518) +- (0, 0.024)
        (6, 0.502) +- (0, 0.021)
        (8, 0.498) +- (0, 0.017)
    };
\addlegendentry{RMAVE}
\addplot[black, solid, line width=1.6pt, mark=square*, mark size=2pt, error bars/.cd, y dir=both, y explicit, error bar style={black}]
    coordinates {
        (2, 0.590) +- (0, 0.027)
        (4, 0.513) +- (0, 0.025)
        (6, 0.518) +- (0, 0.024)
        (8, 0.511) +- (0, 0.018)
    };
\addlegendentry{SMAVE}

\nextgroupplot[
    title={Seoul Bike ($n{=}8760$, $p{=}14$)},
    xmin=1, xmax=11,
    ymin=0.1, ymax=0.9,
    xtick={2,4,6,8,10},
]
\addplot[gray!45, solid, line width=0.6pt, mark=o, mark options={solid}, error bars/.cd, y dir=both, y explicit, error bar style={gray!45, thin}]
    coordinates {
        (2, 0.574) +- (0, 0.031)
        (4, 0.364) +- (0, 0.019)
        (6, 0.282) +- (0, 0.019)
        (8, 0.226) +- (0, 0.016)
        (10, 0.215) +- (0, 0.015)
    };
\addplot[gray!45, solid, line width=0.6pt, mark=triangle, mark options={solid}, error bars/.cd, y dir=both, y explicit, error bar style={gray!45, thin}]
    coordinates {
        (2, 0.761) +- (0, 0.058)
        (4, 0.416) +- (0, 0.033)
        (6, 0.287) +- (0, 0.021)
        (8, 0.268) +- (0, 0.012)
        (10, 0.289) +- (0, 0.013)
    };
\addplot[black, dashed, line width=1.2pt, mark=*, mark size=2pt, error bars/.cd, y dir=both, y explicit, error bar style={black}]
    coordinates {
        (2, 0.232) +- (0, 0.059)
        (4, 0.165) +- (0, 0.008)
        (6, 0.168) +- (0, 0.014)
        (8, 0.187) +- (0, 0.012)
        (10, 0.208) +- (0, 0.013)
    };
\addplot[black, solid, line width=1.6pt, mark=square*, mark size=2pt, error bars/.cd, y dir=both, y explicit, error bar style={black}]
    coordinates {
        (2, 0.230) +- (0, 0.061)
        (4, 0.167) +- (0, 0.011)
        (6, 0.193) +- (0, 0.020)
        (8, 0.207) +- (0, 0.014)
        (10, 0.203) +- (0, 0.014)
    };

\nextgroupplot[
    title={Pumadyn ($n{=}8192$, $p{=}32$)},
    xmin=0, xmax=18,
    ymin=0, ymax=1.25,
    xtick={2,4,8,12,16},
]
\addplot[gray!45, solid, line width=0.6pt, mark=o, mark options={solid}, error bars/.cd, y dir=both, y explicit, error bar style={gray!45, thin}]
    coordinates {
        (2, 1.144) +- (0, 0.044)
        (4, 1.112) +- (0, 0.040)
        (8, 1.086) +- (0, 0.044)
        (12, 1.081) +- (0, 0.043)
        (16, 1.070) +- (0, 0.048)
    };
\addplot[gray!45, solid, line width=0.6pt, mark=triangle, mark options={solid}, error bars/.cd, y dir=both, y explicit, error bar style={gray!45, thin}]
    coordinates {
        (2, 0.174) +- (0, 0.009)
        (4, 0.197) +- (0, 0.013)
        (8, 0.440) +- (0, 0.024)
        (12, 0.658) +- (0, 0.030)
        (16, 0.790) +- (0, 0.034)
    };
\addplot[black, dashed, line width=1.2pt, mark=*, mark size=2pt, error bars/.cd, y dir=both, y explicit, error bar style={black}]
    coordinates {
        (2, 0.092) +- (0, 0.003)
        (4, 0.094) +- (0, 0.011)
        (8, 0.376) +- (0, 0.018)
        (12, 0.625) +- (0, 0.028)
        (16, 0.783) +- (0, 0.031)
    };
\addplot[black, solid, line width=1.6pt, mark=square*, mark size=2pt, error bars/.cd, y dir=both, y explicit, error bar style={black}]
    coordinates {
        (2, 0.187) +- (0, 0.260)
        (4, 0.127) +- (0, 0.014)
        (8, 0.405) +- (0, 0.027)
        (12, 0.666) +- (0, 0.024)
        (16, 0.839) +- (0, 0.038)
    };

\nextgroupplot[
    title={Gas Sensor ($n{=}2565$, $p{=}128$)},
    xmin=0, xmax=65,
    ymin=0, ymax=0.06,
    xtick={5,10,20,40,60},
]
\addplot[gray!45, solid, line width=0.6pt, mark=o, mark options={solid}, error bars/.cd, y dir=both, y explicit, error bar style={gray!45, thin}]
    coordinates {
        (5, 0.022) +- (0, 0.012)
        (10, 0.020) +- (0, 0.011)
        (20, 0.021) +- (0, 0.011)
        (40, 0.021) +- (0, 0.012)
        (60, 0.022) +- (0, 0.012)
    };
\addplot[gray!45, solid, line width=0.6pt, mark=triangle, mark options={solid}, error bars/.cd, y dir=both, y explicit, error bar style={gray!45, thin}]
    coordinates {
        (5, 0.037) +- (0, 0.012)
        (10, 0.029) +- (0, 0.013)
        (20, 0.026) +- (0, 0.014)
        (40, 0.026) +- (0, 0.015)
        (60, 0.027) +- (0, 0.015)
    };
\addplot[black, dashed, line width=1.2pt, mark=*, mark size=2pt, error bars/.cd, y dir=both, y explicit, error bar style={black}]
    coordinates {
        (5, 0.025) +- (0, 0.013)
        (10, 0.025) +- (0, 0.016)
        (20, 0.028) +- (0, 0.019)
        (40, 0.026) +- (0, 0.013)
        (60, 0.032) +- (0, 0.019)
    };
\addplot[black, solid, line width=1.6pt, mark=square*, mark size=2pt, error bars/.cd, y dir=both, y explicit, error bar style={black}]
    coordinates {
        (5, 0.018) +- (0, 0.011)
        (10, 0.019) +- (0, 0.010)
        (20, 0.021) +- (0, 0.009)
        (40, 0.021) +- (0, 0.011)
        (60, 0.022) +- (0, 0.010)
    };

\end{groupplot}

\node[anchor=north, yshift=-0.3cm] at (current bounding box.south) {\ref{legend_shared}};

\end{tikzpicture}
\caption{Test MSE vs.\ reduced dimension $d$ across four datasets. Baselines (PCA, OPG) in gray; iterative methods (RMAVE, SMAVE) in black.}
\label{fig:mse-vs-dimension}
\end{figure}

\end{document}